%% file: main.tex
\documentclass{article}

\usepackage{microtype}
\usepackage{graphicx}
\usepackage{subcaption}
\usepackage{booktabs} % for professional tables

\usepackage{hyperref}

\usepackage[preprint]{icml2026}

\usepackage{amsmath}
\usepackage{amssymb}
\usepackage{mathtools}
\usepackage{amsthm}

\usepackage[capitalize,noabbrev]{cleveref}

\theoremstyle{plain}
\newtheorem{theorem}{Theorem}[section]
\newtheorem{proposition}[theorem]{Proposition}
\newtheorem{lemma}[theorem]{Lemma}

\theoremstyle{definition}
\newtheorem{definition}[theorem]{Definition}

\theoremstyle{remark}
\newtheorem{remark}[theorem]{Remark}

\newtheorem{example}{Example}
 
\usepackage{algorithm}
\usepackage{algorithmic} 
\usepackage{array}
\newcolumntype{C}[1]{>{\centering\arraybackslash}p{#1}}

\usepackage[textsize=tiny]{todonotes}

\icmltitlerunning{From LLM Conjectures to Lean Formalizations: Automated Inequality Proving via Sum-of-Squares Certificates}

\usepackage[most]{tcolorbox}
\tcbset{
  colback=gray!5!white, % 背景颜色
  colframe=blue!50!black, % 边框颜色
  coltext=black, % 字体颜色
  boxrule=0.8pt, % 边框粗细
  arc=3mm, % 圆角半径
  left=2mm, right=2mm, top=1mm, bottom=1mm, % 内边距
  fonttitle=\bfseries,
  title style={color=blue!40!black},
}

\usepackage{listings}
\usepackage{xcolor}

\lstdefinelanguage{lean4}{
    keywords={theorem, lemma, def, let, have, by, fun, match, 
              import, open, where, if, then, else, return,
              },
    keywordstyle=\bfseries\color{blue!80!black},
    morecomment=[l]{--},           % 单行注释
    morecomment=[s]{/--}{-/},      % 块注释
    commentstyle=\itshape\color{green!50!black},
    stringstyle=\color{red},
    basicstyle=\small\ttfamily,
    breaklines=true,
    frame=single,
    mathescape=true,
    sensitive=true
}

\usepackage[most]{tcolorbox}
\usepackage{multirow} 
\usepackage{pifont}
\usepackage{multirow}

\begin{document}
% 从LLM猜想到形式化证明：基于神经符号SOS表示的自动化不等式证明
\twocolumn[
% \icmltitle{From Heuristic Conjecture to Formal Proof: Automated Inequality Proving via Neuro-Symbolic SOS Recovery}
% \icmltitle{From LLM Conjecture to Formal Proof: Automated Inequality Proving via Neuro-Symbolic SOS Representation}
%%%% 
% \icmltitle{From LLM-Generated Conjectures to Lean Formalizations: Automated Proofs of Polynomial Inequalities via Sum-of-Squares Certificates}
%%%% 
\icmltitle{From LLM-Generated Conjectures to Lean Formalizations: Automated Polynomial Inequality Proving via Sum-of-Squares Certificates}
% \icmltitle{From LLM-Generated Conjectures to Lean Formalizations: Automated Inequality Proving via Neuro-Symbolic Sum-of-Squares Certificates}
% \icmltitle{From LLM-Generated Conjectures to Lean Formalizations: A Neuro-Symbolic Approach for Automated Polynomial Inequality Proving}

  % It is OKAY to include author information, even for blind submissions: the
  % style file will automatically remove it for you unless you've provided
  % the [accepted] option to the icml2026 package.

  % List of affiliations: The first argument should be a (short) identifier you
  % will use later to specify author affiliations Academic affiliations
  % should list Department, University, City, Region, Country Industry
  % affiliations should list Company, City, Region, Country

  % You can specify symbols, otherwise they are numbered in order. Ideally, you
  % should not use this facility. Affiliations will be numbered in order of
  % appearance and this is the preferred way.
\icmlsetsymbol{equal}{*}

\begin{icmlauthorlist}
\icmlauthor{Ruobing Zuo}{ecnu}
\icmlauthor{Hanrui Zhao}{nudt}
\icmlauthor{Gaolei He}{ecnu}
\icmlauthor{Zhengfeng Yang}{ecnu}
\icmlauthor{Jianlin Wang}{hnu}
%\icmlauthor{}{sch}
% \icmlauthor{Firstname8 Lastname8}{sch}
% \icmlauthor{Firstname8 Lastname8}{yyy,comp}
%\icmlauthor{}{sch}
%\icmlauthor{}{sch}

% \icmlauthor{Firstname1 Lastname1}{equal,yyy}
% \icmlauthor{Firstname2 Lastname2}{equal,yyy,comp}
% \icmlauthor{Firstname3 Lastname3}{comp}
% \icmlauthor{Firstname4 Lastname4}{sch}
% \icmlauthor{Firstname5 Lastname5}{yyy}
% \icmlauthor{Firstname6 Lastname6}{sch,yyy,comp}
% \icmlauthor{Firstname7 Lastname7}{comp}
% %\icmlauthor{}{sch}
% \icmlauthor{Firstname8 Lastname8}{sch}
% \icmlauthor{Firstname8 Lastname8}{yyy,comp}
% %\icmlauthor{}{sch}
% %\icmlauthor{}{sch}
\end{icmlauthorlist}

\icmlaffiliation{ecnu}{School of Software Engineering, East China Normal University, Shanghai, China}
\icmlaffiliation{nudt}{College of Computer Science and Technology, National University of Defense Technology, Changsha, China}
\icmlaffiliation{hnu}{School of Computer and Information Engineering, Henan University, Kaifeng, China}

\icmlcorrespondingauthor{Zhengfeng Yang}{zfyang@sei.ecnu.edu.cn}

% You may provide any keywords that you
% find helpful for describing your paper; these are used to populate
% the "keywords" metadata in the PDF but will not be shown in the document
\icmlkeywords{Machine Learning, ICML}

\vskip 0.3in
]
% this must go after the closing bracket ] following \twocolumn[ ...
% This command actually creates the footnote in the first column listing the
% affiliations and the copyright notice. The command takes one argument, which
% is text to display at the start of the footnote. The \icmlEqualContribution
% command is standard text for equal contribution. Remove it (just {}) if you
% do not need this facility.

% Use ONE of the following lines. DO NOT remove the command.
% If you have no special notice, KEEP empty braces:
\printAffiliationsAndNotice{}  % no special notice (required even if empty)
% Or, if applicable, use the standard equal contribution text:
% \printAffiliationsAndNotice{\icmlEqualContribution}

\begin{abstract} 

Automated proving of polynomial inequalities is a fundamental challenge in automated mathematical reasoning, where rich algebraic structure and a rapidly growing certificate search space hinder scalability. Purely symbolic approaches provide strong guarantees but often scale poorly as the number of variables or the degree increases, due to expensive algebraic manipulations and rapidly growing intermediate expressions. In parallel, LLM-guided methods have made notable progress, particularly on competition-style inequalities with a small number of variables. To address the remaining scalability challenges, we propose 
NSPI,  a neuro-symbolic framework that combines the complementary strengths of LLMs and symbolic computation for polynomial-inequality proving. Concretely, an LLM proposes a conjecture in the form of an approximate polynomial Sum-Of-Squares (SOS) decomposition; we refine it via  symbolic computation to obtain an exact polynomial SOS representation, which directly proves the target inequality, and we further certify the proof in Lean, yielding an end-to-end pipeline from heuristic discovery to machine-checked proof. 
%Experiments on challenging high-dimensional benchmarks (up to 10 variables) show substantial improvements in success rate and efficiency over competitive baselines.
Experiments on challenging benchmarks involving polynomials with up to 10 variables demonstrate the effectiveness and scalability of the proposed method.
\end{abstract}

\input{Sec/Intro}

\input{Sec/Related}

\input{Sec/Pre}

\input{Sec/Method}
\input{Sec/Exp}

\section{Conclusion} 
% In this paper, we propose Neuro-Symbolic Approach for Polynomial Inequality Proving (NSPI), which combines the structural learning capabilities of LLMs with the precise computational advantages of symbolic methods to automatically generate end-to-end formal proof. Specifically, NSPI reformulates the inequality proving problem as a sum-of-squares (SOS) decomposition problem, ultimately generating a complete formal proof through three modules: Neural Conjecture, Symbolic Correction, and Lean Verification. In the Neural Conjecture module, we introduce four algebraic operation-based methods for synthesizing SOS data and train an LLM as an SOS structure conjecturer through a progressive two-stage training process, enabling the direct generation of SOS structure conjectures. The Symbolic Correction module obtains a verifiable and exact SOS representation through a Newton iteration and rational recovery process. Finally, the Lean Verification module generates a complete Lean proof. Extensive experimental results demonstrate the effectiveness of the proposed NSPI method, (****), extending the boundaries of polynomial inequality proving.

In this paper, we presented NSPI, a neuro-symbolic framework for automated polynomial-inequality proving that combines the complementary strengths of large language models and symbolic computation to provide an end-to-end pipeline from conjecture to certified proof. NSPI leverages an LLM to propose approximate Sum-Of-Squares (SOS) decompositions, and refines them via symbolic computation into exact SOS representations that directly prove the target inequalities; we then machine-check these proofs in Lean. 
Extensive experiments on challenging benchmarks with polynomials of up to 10 variables show that NSPI consistently improves the success rate and efficiency over competitive baseline methods, substantially broadening the practical scope of automated polynomial-inequality proving.

% \section*{Accessibility}

% Authors are kindly asked to make their submissions as accessible as possible
% for everyone including people with disabilities and sensory or neurological
% differences. Tips of how to achieve this and what to pay attention to will be
% provided on the conference website \url{http://icml.cc/}.

% \section*{Software and Data}

% If a paper is accepted, we strongly encourage the publication of software and
% data with the camera-ready version of the paper whenever appropriate. This can
% be done by including a URL in the camera-ready copy. However, \textbf{do not}
% include URLs that reveal your institution or identity in your submission for
% review. Instead, provide an anonymous URL or upload the material as
% ``Supplementary Material'' into the OpenReview reviewing system. Note that
% reviewers are not required to look at this material when writing their review.

% % Acknowledgements should only appear in the accepted version.
% \section*{Acknowledgements}

% \textbf{Do not} include acknowledgements in the initial version of the paper
% submitted for blind review.

% If a paper is accepted, the final camera-ready version can (and usually should)
% include acknowledgements.  Such acknowledgements should be placed at the end of
% the section, in an unnumbered section that does not count towards the paper
% page limit. Typically, this will include thanks to reviewers who gave useful
% comments, to colleagues who contributed to the ideas, and to funding agencies
% and corporate sponsors that provided financial support.

% Acknowledgements should only appear in the accepted version.
\section*{Acknowledgements}
% This work was supported in part by the National Key Re-
% search and Development Program of China under Grant
% 2023YFA1009402, the Strategic Priority Research Pro-
% gram of Chinese Academy of Sciences under Grant
% XDA0480501.

This work was supported in part by the National Key Research and Development Program of China under Grant 2023YFA1009402, the Strategic Priority Research Program of Chinese Academy of Sciences under Grant XDA0480501, and the Natural Science Foundation of Hunan Province under Grant 2026JJ70102.

% \textbf{Do not} include acknowledgements in the initial version of the paper
% submitted for blind review.

% If a paper is accepted, the final camera-ready version can (and usually should)
% include acknowledgements.  Such acknowledgements should be placed at the end of
% the section, in an unnumbered section that does not count towards the paper
% page limit. Typically, this will include thanks to reviewers who gave useful
% comments, to colleagues who contributed to the ideas, and to funding agencies
% and corporate sponsors that provided financial support.

\section*{Impact Statement}
This paper presents work whose goal is to the field of Machine Learning. 
The development of our approach holds potential for significant impact within the domains of formal verification and automated polynomial inequality proving. While there are various potential societal consequences of this work, none
which we feel must be specifically highlighted here. 

% Authors are \textbf{required} to include a statement of the potential broader
% impact of their work, including its ethical aspects and future societal
% consequences. This statement should be in an unnumbered section at the end of
% the paper (co-located with Acknowledgements -- the two may appear in either
% order, but both must be before References), and does not count toward the paper
% page limit. In many cases, where the ethical impacts and expected societal
% implications are those that are well established when advancing the field of
% Machine Learning, substantial discussion is not required, and a simple
% statement such as the following will suffice:

% ``This paper presents work whose goal is to advance the field of Machine
% Learning. There are many potential societal consequences of our work, none
% which we feel must be specifically highlighted here.''

% The above statement can be used verbatim in such cases, but we encourage
% authors to think about whether there is content which does warrant further
% discussion, as this statement will be apparent if the paper is later flagged
% for ethics review.

% In the unusual situation where you want a paper to appear in the
% references without citing it in the main text, use \nocite
\nocite{langley00}

\bibliography{reference}
\bibliographystyle{icml2026}
\input{Sec/Appendix}

%%%%%%%%%%%%%%%%%%%%%%%%%%%%%%%%%%%%%%%%%%%%%%%%%%%%%%%%%%%%%%%%%%%%%%%%%%%%%%%
%%%%%%%%%%%%%%%%%%%%%%%%%%%%%%%%%%%%%%%%%%%%%%%%%%%%%%%%%%%%%%%%%%%%%%%%%%%%%%%
% APPENDIX
%%%%%%%%%%%%%%%%%%%%%%%%%%%%%%%%%%%%%%%%%%%%%%%%%%%%%%%%%%%%%%%%%%%%%%%%%%%%%%%
%%%%%%%%%%%%%%%%%%%%%%%%%%%%%%%%%%%%%%%%%%%%%%%%%%%%%%%%%%%%%%%%%%%%%%%%%%%%%%%
\end{document}

%% file: Sec/Intro.tex
\section{Introduction}\label{Intro}
% 语言待斟酌 参考文献待检查 换ref   %TODO
% 英文表述待精进
% Polynomial inequalities have widespread applications in fields such as optimization~\cite{2012Exact_certification_in_global}, control theory~\cite{2000Parrilo_Pablo_book}, and combinatorics~\cite{marechal2015polyhedral}. 
% The automation of competition-level inequality proving serves as a crucial benchmark for assessing the boundaries of artificial intelligence in mathematical reasoning~\cite{trinh2024alphageometry,wei2024AIPS,he-etal-2024-olympiadbench,liineqsearch}. 
% In recent years, %% TODO 近年来？
% works such as AlphaGeometry ~\cite{trinh2024alphageometry,chervonyi2025gold},  %%todo alphaGeometry 2
% % AlphaProof ~\cite{hubert2025olympiad} %结合RL AlphaZero启发 no 没有LLM
% and Seed-Prover ~\cite{chen2025seedprover, chen2025seedprover15} have 
% % achieved silver medal results 
% demonstrated medal-level %remarkable 
% performance in the 
% % IMO competition, 
% International Mathematical Olympiad (IMO) competition, 
% highlighting the substantial potential of large language models (LLMs) in theorem proving tasks. 
% However, automated inequality proving remains a challenging frontier, as it typically involves lengthy reasoning steps, an infinite search space, and highly complex computational processes, especially for high-dimensional and difficult inequalities with many variables.  %%
%%%%%%%%zhr-0129
Polynomial inequalities play a fundamental role in areas such as optimization, control, and combinatorics~\cite{2012Exact_certification_in_global,2000Parrilo_Pablo_book,marechal2015polyhedral}. 
The automation of 
% competition-level 
complex 
inequality proving has recently emerged as an important benchmark for evaluating the limits of AI in mathematical reasoning~\cite{trinh2024alphageometry,wei2024AIPS,he-etal-2024-olympiadbench,liineqsearch}. 
Recent systems such as AlphaGeometry~\cite{trinh2024alphageometry,chervonyi2025gold} and Seed-Prover~\cite{chen2025seedprover, chen2025seedprover15} have demonstrated impressive performance on Olympiad-level problems, highlighting the potential of large language models (LLMs) for theorem proving. 
Nevertheless, automated inequality proving remains highly challenging due to the long reasoning chains, vast search spaces, and substantial computational complexity involved, especially for high-dimensional and multivariate cases.

% Symbolic computation has long been a cornerstone for proving polynomial inequalities~\cite{1999overview,lasserre2002semidefinite,article,2023verifyrealroots}. A widely used approach is based on Sum-of-Squares (SOS) decomposition~\cite{2012Exact_certification_in_global}, which transforms the nonnegativity of a polynomial into a semidefinite programming (SDP) problem. Modern computer algebra systems~\cite{heck1993maple,de2008z3,meurer2017sympy} also provide symbolic operations such as polynomial equation solving. However, purely symbolic methods often struggle to produce human-readable reasoning steps and are prone to combinatorial explosion, limiting their scalability.
% Recently, large language model (LLM)-based approaches for automated formal theorem proving have achieved remarkable progress~\cite{lample2022hypertree, xin2025bfs,ren2025deepseek,lin2025goedel,wang2025kimina,lin2025goedelproverv2}, particularly 
% when integrated with proof assistants such as Lean~\cite{de2015lean} and Isabelle~\cite{paulson1990isabelle}. Methods based on complete proof generation~\cite{ren2025deepseek,lin2025goedel,wang2025kimina,lin2025goedelproverv2} and tree-structured proof search~\cite{lample2022hypertree, xin2025bfs} continually enhance the theorem-proving capabilities of LLMs. Despite these advances, the scarcity of formalized training data limits their performance on complex, competition-level algebraic inequalities.

Symbolic computation has long been a cornerstone for proving polynomial inequalities~\cite{1999overview,lasserre2002semidefinite,article,2023verifyrealroots}. A widely used approach is based on Sum-of-Squares (SOS) decomposition~\cite{2012Exact_certification_in_global}, which transforms the nonnegativity of a polynomial into a semidefinite programming (SDP) problem. 
Modern computer algebra systems further support such pipelines with basic algebraic operations~\cite{heck1993maple,de2008z3,meurer2017sympy}. 
However, purely symbolic approaches often suffer from poor scalability due to combinatorial explosion and typically fail to produce structured, human-readable proofs.
In parallel, recent LLM-based methods have substantially advanced automated formal theorem proving~\cite{lample2022hypertree,xin2025bfs,ren2025deepseek,lin2025goedel,wang2025kimina,lin2025goedelproverv2} integrated with proof assistants such as Lean~\cite{de2015lean} and Isabelle~\cite{paulson1990isabelle}. 
Nevertheless, their performance on complex 
% , competition-level 
algebraic inequalities remains limited by the scarcity formalized training data.

% To address the above challenges, a promising approach is to integrate symbolic methods, 
% thereby combining the strengths of structured reasoning and symbolic precision while mitigating dependence on large-scale formalized training data~\cite{2016Solving_and_Verifying,trinh2024alphageometry,wei2024AIPS,li2025LIPS}.
% However, most existing methods (e.g., AIPS~\cite{wei2024AIPS}) focus on ternary and quaternary polynomials, leaving significant scalability challenges for higher-dimensional cases. 
% Furthermore, in such hybrid systems, 
% LLMs are predominantly employed to optimize the strategy selection of symbolic solvers, 
% whereas their potential for direct symbolic conjecture remains largely unexplored. 
% In this paper,  we propose a novel neuro-symbolic approach for automated polynomial inequality proving,  which explores the scalability of combining LLMs with symbolic methods for automated theorem proving in unconstrained polynomial inequality scenarios, particularly in complex multivariate settings. 
% Our approach formulates inequality proving as an SOS-solving task, where LLM-generated conjectures are refined through symbolic computation and formal verification to produce rigorous, formally certified proofs.
% By combining the strengths of neural conjecturing and symbolic reasoning, our method provides a scalable and reliable solution for challenging polynomial inequalities, advancing the capabilities of neuro-symbolic methods in automated theorem proving.
To address the above challenges, a promising approach is to integrate neural and symbolic methods,
thereby combining the strengths of structured reasoning and symbolic precision while mitigating dependence on large-scale formalized training data~\cite{2016Solving_and_Verifying,trinh2024alphageometry,wei2024AIPS,li2025LIPS}.
% However, most existing methods (e.g., AIPS~\cite{wei2024AIPS}) focus on ternary and quaternary polynomials, leaving significant scalability challenges for higher-dimensional cases. 
% Furthermore, in such hybrid systems, 
% LLMs are predominantly employed to optimize the strategy selection of symbolic solvers, 
% whereas their potential for direct symbolic conjecture remains largely unexplored. 
% However, existing approaches~\cite{wei2024AIPS,li2025LIPS} %(e.g., AIPS~\cite{wei2024AIPS}) 
% are still limited in two key aspects: they mainly target low-dimensional cases (e.g., ternary or quaternary polynomials) and typically employ LLMs only for strategy guidance rather than for direct symbolic conjecturing. 
% In this paper,  we propose a novel neuro-symbolic approach for automated polynomial inequality proving,  which explores the scalability of combining LLMs with symbolic methods for automated theorem proving in unconstrained polynomial inequality scenarios, particularly in complex multivariate settings. 
% Our approach formulates inequality proving as an SOS-solving task, where LLM-generated conjectures are refined through symbolic computation and formal verification to produce rigorous, formally certified proofs.
% By combining the strengths of neural conjecturing and symbolic reasoning, our method provides a scalable and reliable solution for challenging polynomial inequalities, advancing the capabilities of neuro-symbolic methods in automated theorem proving.
However, existing approaches (e.g., AIPS~\cite{wei2024AIPS}) remain limited in two key aspects: they mainly target low-dimensional cases (e.g., ternary or quaternary polynomials), and typically restrict the role of LLMs to search guidance or strategy selection, since directly using LLMs to generate symbolic conjectures remains difficult to control and certify. 
In this paper, we propose a new neuro-symbolic framework for polynomial inequality proving that targets unconstrained polynomial inequality scenarios, high-dimensional multivariate problems and elevates LLMs to primary conjecture generators. 
By formulating inequality proving as SOS-based certification and tightly coupling LLM-driven hypothesis generation with symbolic refinement and formal verification, our approach establishes an end-to-end pipeline from heuristic discovery to certified proof, significantly extending the scope of neuro-symbolic automated theorem proving.

% In summary, o
% The main contributions can be summarized as follows:
% \begin{itemize}
%     \item We propose a neuro-symbolic approach for automated polynomial inequality proving, 
%     which automatically generates complete formal proofs of inequalities through a pipeline of neural conjecture, symbolic correction, and Lean verification. 
%     \item We explore a new paradigm in which LLM serve as a core symbolic conjecture engine. 
%     By leveraging structural priors generated by LLMs after progressive two-stage training, 
%     %%TODO 两阶段训练过程 命名方式是否合适？ 直接说？参考其他文章 
%     we extend automated inequality proving to cases involving more variables, thereby broadening the frontier of neuro-symbolic methods in this domain. 
%     % for proving polynomial inequalities. 
%     % We explore a new paradigm in which LLMs serve as the core symbolic conjecturer, leveraging models refined through a two-stage training process to provide essential structural priors. 
%     % This work extends existing research on automated inequality proving to cases involving more variables, effectively broadening  the boundaries of neuro-symbolic methods for proving polynomial inequalities. 
%     \item Extensive experimental results on 522 competition-level inequality problems validate the effectiveness of the proposed method. Compared to symbolic computation-based approaches and LLM-assisted methods, our method achieves the best results on inequality problems with up to 10 variables.
% \end{itemize}

The main contributions can be summarized as follows:
\begin{itemize}
    \item We propose a neuro-symbolic approach for automated polynomial inequality proving, 
    which automatically generates complete formal proofs of inequalities through a pipeline of neural conjecture, symbolic correction, and Lean verification. 
    \item We develop a principled reliability bridge that integrates LLM-based heuristic conjecture generation with symbolic exact certification, transforming neural conjectures into machine-checkable proofs and enabling automated inequality proving to scale to higher-dimensional multivariate cases.
    \item Extensive experiments on $522$ %competition-level
    challenging 
    inequality problems demonstrate the effectiveness of the proposed method, which outperforms both symbolic computation-based and LLM-assisted approaches, especially on problems with up to $10$ variables.

\end{itemize}

%% file: Sec/Related.tex
% %% TODO: 具体表述待修改 + 待精炼 + 待修改本文工作表述
% TODO  段末补充本文工作介绍
\section{Related Work}\label{Related}
% ATP   
% Ineq-Comp
% 和IneqSearch相似的是都希望通过将多项式分解成非负分量从而完成证明。 
%  ?（在他们的工作中，LLM承担策略搜索的工作。本文不是让LLM承担搜索工作，而是作为一个类人智能体在证明过程中承担重要角色，作为一个SOS分解猜想的提出者 ？ 区别在于，想探索真正LLM和符号计算结合能否提高不等式的自动证明能力 （希望 探索【通过大量数据训练后的LLM作为SOS结构的直接猜想器是否具有正确的直觉】？）。   
%  ?（本文方法更加端到端，没有以来演绎推理策略。LLM作为猜想者承担的工作要重一些，所以我们强调了神经模块的数据构造和训练过程）
% \textbf{Symbolic Methods for Inequality Proving.} 
% Polynomial inequality proving has long been regarded as a computationally challenging problem. A classical approach is based on the 
% sum-of-squares (SOS) methodology~\cite{2012Exact_certification_in_global}, 
% which reduces nonnegativity certification to the existence of an SOS decomposition and further to solving a semidefinite program (SDP), as exemplified by 
% RealCertify~\cite{magron2018realcertify} and 
% ValidSDP~\cite{martin2017validsdp}. 
% Meanwhile, computer algebra systems such as 
% Maple~\cite{heck1993maple}, 
% Z3~\cite{de2008z3}, 
% % , Mathematica~\cite{}, 
% and 
% SymPy~\cite{meurer2017sympy} provide fundamental symbolic capabilities—including factorization, elimination, solving polynomial equations, and algebraic simplification, that support algebraic preprocessing and manipulation in inequality-proving pipelines. Nevertheless, purely symbolic approaches typically struggle to produce human-readable reasoning steps, and they often suffer from combinatorial explosion as the problem dimension increases, particularly for competition-level mathematical inequalities.

% %%%%0121-zhr
\textbf{Symbolic Methods for Inequality Proving.} 
Polynomial inequality proving has traditionally been approached through symbolic computation. 
% where certifying polynomial nonnegativity remains a computationally challenging task. 
A classical method is based on the SOS methodology~\cite{2012Exact_certification_in_global,martin2017validsdp}, which reduces nonnegativity certification to the existence of an SOS decomposition and further to solving SDP problem. %%%TODO ref 
% , as exemplified by 
% % RealCertify~\cite{magron2018realcertify} and 
% ValidSDP~\cite{martin2017validsdp}. 
Meanwhile, computer algebra systems such as Maple~\cite{heck1993maple}, Z3~\cite{de2008z3} 
and SymPy~\cite{meurer2017sympy} 
provide fundamental symbolic capabilities, 
% including factorization, elimination, polynomial equation solving, and algebraic simplification, 
which support algebraic preprocessing and manipulation in inequality-proving pipelines. 
Nevertheless, purely symbolic approaches typically struggle to produce human-readable reasoning steps and often suffer from combinatorial explosion as the problem dimension increases, particularly for multivariate and algebraically intensive polynomial inequalities.
% % CAD
% % TODO : 待确认＋补充最新工作   
% % TODO: 添加 【RL相关】 motivation   
% % AlphaProof

\textbf{LLM-based Formal Theorem Proving.}   
In recent years, LLM-based automated theorem proving has advanced rapidly. 
% A dominant paradigm integrates 
Various approaches integrate LLMs 
% large language models 
with interactive proof assistants such as 
Lean~\cite{de2015lean} 
% , Coq~\cite{barras1997coq}, and 
% Isabelle~\cite{paulson1990isabelle} 
to produce machine-checkable formal proofs. 
One line of work fine-tunes models on large-scale corpora of formal proofs to generate proof strategies or local 
proof tactics
~\cite{ polu2020generative, lample2022hypertree, xin2025bfs}. %gpt-f  %hypertree  %  deepseek-prover-v1.5 % InternLM2.5-Step Prover % BFS prover  Goedel-prover v2?
% STP??
Another line explores end-to-end generation of 
complete formal proofs~\cite{ren2025deepseek,lin2025goedel,wang2025kimina,lin2025goedelproverv2},   
%% deepseek-prover v2   goedel-prover    kimina-prover
exemplified by Goedel-Prover~\cite{lin2025goedel}  % 可以展开写两句
,Kimina Prover~\cite{wang2025kimina} and DeepSeek-Prover-V2~\cite{ren2025deepseek}. 
% In addition, some studies treat the LLM as 
% % a value function to 
% the guider of  %%% TODO待修改 
% % guide 
% tactic selection~\cite{ xin2025bfs}, 
% often combined with search procedures such as Monte Carlo Tree Search (MCTS) to complete proofs. %%%TODO待修改
% Other approaches incorporate natural-language proofs or leverage error feedback returned by interactive theorem provers to iteratively refine candidate proofs ~\cite{jiang2022draft,wang2023lego,lin2407leanSTAR}. %%TODO 待检查
However, LLM-based methods are limited by the scarcity and uneven quality of formal-proof data, and thus remain weak on high-dimensional inequalities.
\textbf{Neuro-Symbolic Theorem Proving.}  
To bridge the scalability limitations of purely symbolic methods and the data bottleneck of purely neural approaches, recent work has explored neuro-symbolic integration for automated theorem proving~\cite{trinh2024alphageometry,wei2024AIPS,li2025LIPS,chervonyi2025gold}. 
These methods typically combine neural models with symbolic solvers, using learning-based components to guide or prioritize symbolic reasoning or derivation steps. 
Representative systems such as AlphaGeometry~\cite{trinh2024alphageometry}, AIPS~\cite{wei2024AIPS}, and LIPS~\cite{li2025LIPS} demonstrate the effectiveness of this paradigm in geometry and algebraic inequality proving. 
In contrast, while existing approaches mainly focus on learning-guided derivation or strategy selection, we treat LLMs as generators of symbolic conjectures and machine-checkable formal verification. 
Centered on verifiable SOS certificates, our framework establishes an end-to-end neuro-symbolic pipeline for tackling more complex multivariate polynomial inequalities.

%% file: Sec/Pre.tex
\section{Preliminaries}
\label{sec:preliminary}

%\subsection{Problem Statement}
%\label{sec:problem_statement}

\begin{figure*}[htbp]
  \centering
  \includegraphics[width=0.9\linewidth]{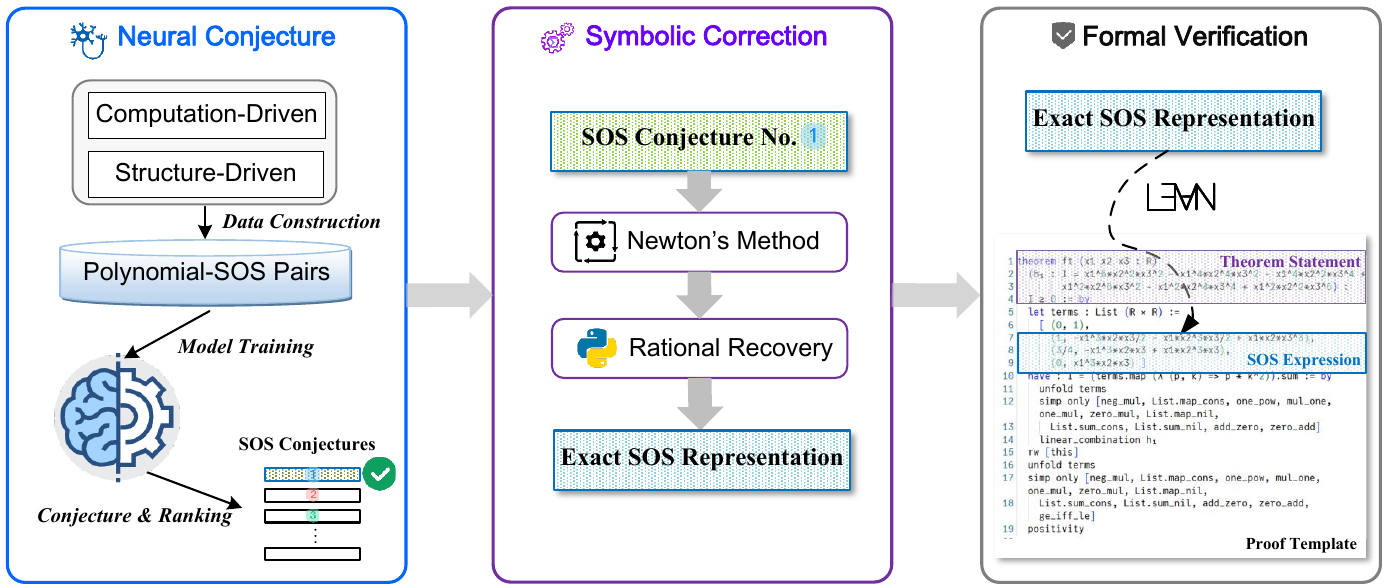}%{figures/overview/overview_251103_v1.pdf}
  \caption{\textbf{Overview of \textbf{Ne}uro-\textbf{Sy}mbolic \textbf{S}OS-based Polynomial \textbf{I}nequality \textbf{P}roving (\textbf{NSPI})}. (1) \textbf{Neural Conjecture Module}: 
  Non-negative polynomial-SOS representation pairs are constructed using 
  % two data construction methods based on 
  computation-driven and structure-driven approaches. 
  % Constructing non-negative polynomial-SOS representations for the data using two types of data construction methods: Computation-Driven and Structure-Driven;  
  A Large Language Model (LLM) is trained on the constructed data to function as an SOS structure conjecturer, 
  % The SOS structure conjecturer 
  which generates corresponding SOS representations based on the non-negative polynomials and ranks them according to the magnitude of the errors. 
  (2) \textbf{Symbolic Correction Module}: An exact SOS representation is derived from the top-ranked SOS structure conjectures through a symbolic computation process that involves Newton iteration and rational recovery. 
  (3) \textbf{Formal Verification Module}: Based on the exact SOS representations and predefined Lean proof templates, a complete Lean formal proof is automatically generated. 
  }  \label{fig:overview}
\end{figure*}
 
Let $\mathbb{R}[x]:=\mathbb{R}[x_1,\ldots,x_n]$ be the ring of polynomials in $n$ variables with coefficients in the real field $\mathbb{R}$.  A polynomial  $f(x) \in \mathbb{R}[x]$ is said to be nonnegative or positive semidefinite (PSD) if $f(x)\geq 0$ for all $x \in \mathbb{R}^n$.  
In this work, we focus on  automated proving of
unconstrained polynomial nonnegativity:
given $f \in \mathbb{R}[\mathbf{x}]$, our goal is to formally prove
\begin{align}
\label{eq:goal}
 f(x)\geq 0, \quad \forall x \in \mathbb{R}^{n}.
\end{align}
Beyond establishing~\eqref{eq:goal} mathematically,
automated theorem proving additionally requires certificates that are
rigorous and machine-checkable, rather than purely numerical or heuristic validations.

A widely-used sufficient certificate for~\eqref{eq:goal} is a \emph{sum-of-squares} (SOS) decomposition. A polynomial $f(x)$ is said to be an SOS, if there exist polynomials $f_1(x),\ldots,f_m(x)$ such that 
\begin{align}
\label{eq:sos_def}
    f(x) = \sum_{i=1}^{m} f_i(x)^2.
\end{align}
It is immediate $f(x)$ being SOS implies that is nonnegative over $\mathbb{R}^n$, 
hence an explicit SOS decomposition provides a constructive certificate of~\eqref{eq:goal}.
Notice that necessarily  $f(x)$ must be of even degree $2d$.
 Let $\mathbf{v}_{d}(x)$ be the vector 
$$\mathbf{v}_{d}(x)=[1,x_1,x_2,\ldots,x_n, x_1^2, x_1 x_2,\ldots,x_n^d]^{\mathsf T},$$
of all monomials in $x$          
 and whose degrees are at most $d$, which has dimension $s(d)=\binom{n+d}{d}$.
Then $f(x)$ is SOS if and only if there exists a symmetric positive semidefinite matrix $G   \succeq 0$ such that
\begin{align}
\label{eq:gram_form}
f(x) = \mathbf{v}(x)^{\mathsf T} G \mathbf{v}(x).
\end{align}
Equating coefficients in the identity~\eqref{eq:gram_form} yields a system of linear equations that the entries of $G$ must satisfy.
Therefore, determining whether $f(x)$ is SOS can
be formulated as a semidefinite feasibility problem:
\begin{align}
\label{eq:preliminary_sdp}
\left\{
\begin{aligned}
&\text{find} && G \in \mathbb{R}^{s(d)\times s(d)} \\
&\text{s.t.} && G \succeq 0, \quad G=G^{\mathsf T} ,\\
&&& f(x) =\mathbf{v}(x)^{\mathsf T} G \mathbf{v}(x).
\end{aligned}
\right.
\end{align}
However, SDP solvers typically return numerical solutions, while formal verification requires exact certificates.
This motivates our focus on constructing \emph{exact} SOS certificates that can be directly checked in a proof assistant.

%% file: Sec/Method.tex
\section{Methodology}\label{Method}  %++CL  %% TODO 补动机
In this section, we introduce \textbf{Ne}uro-\textbf{Sy}mbolic SOS-based \textbf{P}olynomial \textbf{I}nequality \textbf{P}roving (\textbf{NSPI}), 
a neuro-symbolic framework for automated proving of unconstrained polynomial inequalities. 
NSPI is designed as a pipeline that integrates LLM-based conjecture with symbolic computation and formal verification. 
The overall architecture of NSPI is illustrated in Fig.~\ref{fig:overview}, which depicts the following three main stages:

% \begin{itemize}
% \item  \textbf{Neural Conjecture Module}: The LLM acts as a conjecturer for the SOS structure. Initially, five methods are used to construct a diverse set of non-negative polynomial-SOS pair data. Based on this dataset, the SOS structure conjecturer is trained using reinforcement learning techniques, thereby enhancing the accuracy of the LLM in predicting the SOS structure of polynomials. (See \cref{sec:Neural Conjecture} for details)
% \item  \textbf{Symbolic Correction Module}: Symbolic computation methods serve as a precision bridge for SOS. This module leverages symbolic computation tools such as SymPy and SciPy to verify and refine both the original polynomial and the approximate SOS decomposition. By applying Newton's method and rational recovery techniques, floating-point solutions are converted into precise rational solutions, yielding an accurate SOS representation of the polynomial. (See  \cref{sec:Symbolic Correction} for details)
% \item  \textbf{Formal Verification Module}: The formal verification module integrates automated Lean proof templates to provide a complete formal Lean theorem proof based on the precise SOS decomposition. (See \cref{sec:Formal Verification} for details)
% \end{itemize}

% 神经猜想模块：LLM作为SOS结构的conjecturer。
% 符号修正模块：符号计算方法作为SOS的精度桥梁。
% 形式化验证模块：基于SOS精确分解给出形式化的完整Lean定理证明。

\text{\bf{[Neural Conjecture].}} 
% The Neural Conjecture module primarily leverages the structural conjecture capability of Large Language Models (LLMs) to predict the SOS decomposition of polynomials. The trained LLM can generate a potential SOS decomposition conjecture based on the input polynomial. 
%%%%%%%
% LLM acts as a \textbf{structure conjecturer} for the SOS. 
% % Initially, 
% % four methods 
% Computation-driven and structure-driven methods 
% are used to construct a diverse set of non-negative polynomial-SOS pair data. 
% % Based on this dataset, the 
% A SOS structure conjecturer is trained on synthetic data using a progressive two-stage training process, thereby enhancing the performance of the LLM in predicting the SOS structure of polynomials. 
% (See \cref{sec:Neural Conjecture} for details)
%%%%%zhr-0128
The LLM is employed as a \emph{structure conjecturer} for sum-of-squares (SOS) representations. By combining computation-driven and structure-driven strategies, we construct a diverse dataset of nonnegative polynomials and their corresponding SOS pair forms. The conjecturer is trained on synthetically generated data via a progressive two-stage training scheme, enabling the LLM to more accurately predict plausible SOS structures for given polynomials (see more details in Section~\ref{sec:Neural Conjecture}).

% \text{\bf{[Symbolic Correction].}} Symbolic computation methods serve as a \textbf{precision bridge} for SOS. 
% % This module leverages 
% In this stage, we use symbolic computation tools (e.g., SymPy) to refine 
% % both the original polynomial and 
% the approximate SOS decomposition. 
% By leveraging Newton's method and rational recovery techniques, 
% % floating-point solutions are converted into precise rational solutions, 
% an exact SOS representation of the polynomial is obtained. (See  \cref{sec:Symbolic Correction} for details)

\textbf{[Symbolic Correction].} 
Symbolic computation serves as a \emph{precision bridge} in SOS-based proving. 
At this point, symbolic computation tools are employed to refine the approximate SOS decomposition produced in the previous stage. 
By combining Newton-type iterative refinement with rational recovery techniques, 
numerical solutions are systematically converted into exact rational representations, 
yielding a precise SOS certificate of the target polynomial. (see more details in Section~\ref{sec:Symbolic Correction}).

% \text{\bf{[Formal Verification].}} The formal verification module integrates automated Lean proof templates to provide a complete formal Lean theorem proof based on the precise SOS decomposition. (See \cref{sec:Formal Verification} for details)

\textbf{[Formal Verification].} 
This module converts the precise SOS decomposition into machine-checkable Lean proof templates, 
thereby ensuring that the entire proving process, from conjecture to certificate, is fully verified by a trusted formal kernel. 
(see more details in Section~\ref{sec:Formal Verification}).

% 【数据构造部分框架】
% 问题： f 可写成 sos 当且仅当存在 半正定矩阵 ***
% 图示：
% 如何构造G ？两类方法：
%  1.计算方式：
%     （1）特征值计算方式： 随机Gram矩阵 -> 半正定
%     （2）随机f -》 求解LMI 最小扰动得到Gram G
%  2.特殊结构：对角占优
% 得到G之后：随机取 v 可以得到新数据

%%% TODO 补动机
\subsection{Neural Conjecture: LLM-Guided SOS Generation}
% \subsection{LLM-Guided SOS Structure Conjecturing}
\label{sec:Neural Conjecture}
% Neural Conjecture module leverages the structural conjecture capability of 
% % LLMs 
% Large Language Models (LLMs) 
% to predict the SOS decomposition of polynomials. The trained LLM can generate a potential SOS decomposition conjecture based on the input polynomial. 
% The Neural Conjecture module primarily consists of two stages: (1) constructing SOS data, and (2) training the SOS structure conjecturer. 
% % The LLM acts as a conjecturer for the SOS structure. Initially, five methods are used to construct a diverse set of non-negative polynomial-SOS pair data. Based on this dataset, the SOS structure conjecturer is trained using reinforcement learning techniques, thereby enhancing the accuracy of the LLM in predicting the SOS structure of polynomials. 

%%%zhr-0128
Given an input polynomial, the \emph{neural conjecture} component leverages the structural conjecturing capability of large language models to propose candidate the sum-of-squares decomposition. 
The overall procedure is organized into two complementary parts as follows:
\begin{itemize}
    \item [i)] {\it Constructing SOS training data}: we construct large-scale polynomial--SOS training pairs by generating SOS polynomials through Gram matrix synthesis, using both computation-driven and structure-driven mechanisms to obtain PSD matrices with controlled coefficient properties. (see Subsection~\ref{sec: SOS Data Construction Method})
    \item [ii)] {\it Training the SOS structure conjecturer}: we train an SOS structure conjecturer via a progressive two-stage scheme, where supervised fine-tuning provides a cold start on the synthetic corpus and curriculum-based reinforcement learning further improves conjecturing performance on harder multivariate instances. (see Subsection~\ref{subsec:Training the SOS Structure Conjecturer})
\end{itemize}

\subsubsection{SOS Data Construction Method}
\label{sec: SOS Data Construction Method}
% 反向构造易导致系数爆炸  多项式复杂 
% 如何构造与真实 ...  
% TODO: 数据构造写一句motivation 参考nips24

% \begin{figure}[htbp]
%   \centering
%   \includegraphics[width=0.95\linewidth]{figures/data_construction/data construvtion_251118_v3.pdf}%{figures/data_construction/data construvtion_251114_v2.pdf}%{figures/data construvtion_251112_v2.pdf}%{figures/data construvtion_251112_v1.pdf}
%   % \caption{\textbf{New SOS data construction methods}.  
%   % }  
%   % \caption{\textbf{Framework of SOS data construction methods}. The generation of polynomial-SOS pairs is based on the quadratic form $f(x) = {\mathbf v} (x)^T \widetilde{G} {\mathbf v}(x)$, where the Gram matrix $\widetilde{G}$ is synthesized through either computation-driven or structure-driven approaches.} 
%   \caption{SOS data construction. Polynomial–SOS pairs are generated 
%   % by synthesizing Gram matrices 
%   via computation-driven and structure-driven approaches.}
%   \label{fig:dataconstruction}
% \end{figure}

%%%%%%zhr-0128
Constructing nonnegative polynomials and and their corresponding SOS representations in a systematic and numerically well-behaved manner is a nontrivial task. 
A straightforward approach is to randomly sample polynomials $f_i(x)$ and construct
\(
f(x) = \sum_i f_i(x)^2.
\)
However, this naive strategy suffers from two major limitations: (1) the coefficients of the generated $f_i(x)$ are typically non-integer, which makes the resulting polynomial inconsistent with the typical symbolic representations that require integer coefficients, and (2) the expansion of randomly generated squared polynomials often leads to severe coefficient swelling, producing polynomials with unreasonably large or unbalanced coefficients.

To address these issues, we develop a novel data construction method grounded in the algebraic structure of SOS decomposition. 
As reviewed in \cref{sec:preliminary}, a polynomial $f(x)$ is SOS if and only if there exists a positive semidefinite (PSD) Gram matrix $\widetilde{G}$ such that
\(
f(x) = \mathbf{v}(x)^{\mathsf T} \widetilde{G} \mathbf{v}(x),
\)
where $\mathbf{v}(x)$ denotes the vector of monomials. 
Consequently, the problem of generating suitable polynomial--SOS pairs reduces to the problem of \textbf{\emph{constructing a PSD Gram matrix $\widetilde{G}$}}  with controlled coefficient structure.

Based on this observation, we propose two families of SOS data construction methods, namely \emph{computation-driven} and \emph{structure-driven} approaches. 
% as shown in Fig.~\ref{fig:dataconstruction}.
Once a Gram matrix $\widetilde{G}$ is obtained by either approach, the 
% corresponding 
SOS polynomial $f(x)$ is generated by instantiating a monomial basis $\mathbf{v}(x)$.

\textbf{(1) Computation-Driven Methods.} We first consider constructing the Gram matrix $\widetilde{G}$ through numerical procedures. The objective is to systematically control the magnitude and precision of the coefficients while ensuring that the resulting $~\widetilde{G}$ remains positive semi-definite. The computation-driven methods comprise two main approaches.

\begin{figure*}[htbp]
  \centering
    \includegraphics[width=\linewidth]{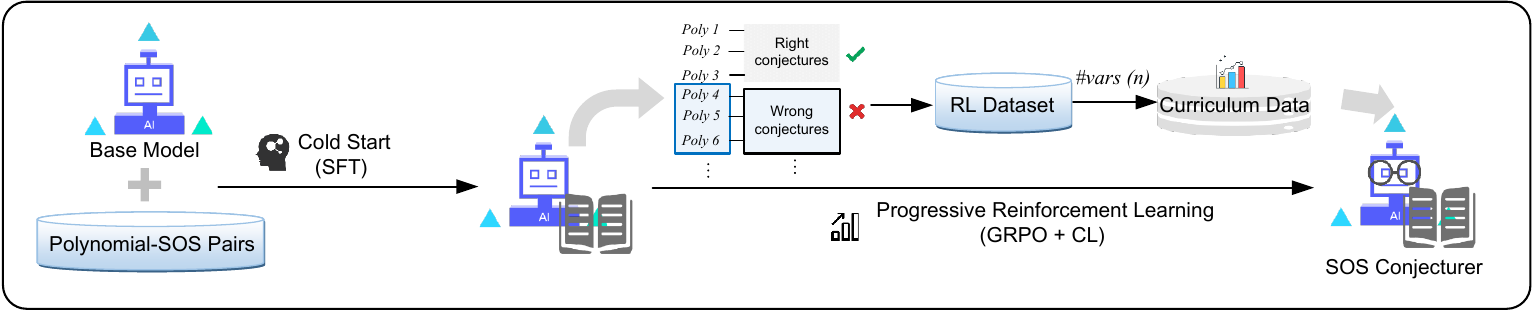} 
  % \caption{\textbf{Process of two-stage training of SOS Conjecturer}.\textbf{Stage 1: Cold Start}. \textbf{Stage 2: Progressive Reinforcement Learning.
  % } }  
  \caption{\textbf{Two-stage training of the SOS conjecturer}: (1) \textbf{Cold Start}: supervised fine-tuning (SFT) on large-scale synthetic polynomial–SOS pairs; (2) \textbf{Progressive reinforcement learning}: curriculum-based GRPO on challenging training data.}
  \label{fig:model_training}
\end{figure*}

%%%%%%%%%%%%%%%%%%%
% % \begin{itemize}
% %     \item 
%     \text{\bf{[Direct Construction Approach].}} A direct computation-driven approach to obtaining a PSD Gram matrix $\widetilde{G}$ is to adjust the eigenvalues of a randomly generated symmetric matrix. Specifically, a symmetric matrix $G \in \mathbb{S}^m$ with integer coefficients is first generated, and its minimum eigenvalue $\lambda_{\min}$ is computed. The PSD matrix $~\widetilde{G}$ is then obtained as: 
%       \begin{align}
%       \widetilde{G} = G - kI \succeq 0 \text{, where } k = \left\lfloor \lambda_{\min} \right\rfloor
%   \end{align}
%   % This ensures that $\widetilde{G}$ is positive semi-definite, as its smallest eigenvalue satisfies $\lambda_{\min} - k \geq 0$.

%   \textbf{Alternatively}, a PSD matrix can be directly constructed by generating a random sparse integer coefficient matrix $L \in \mathbb{Z}^{m \times k}$($k \leq m$)  together with a positive definite diagonal matrix $D$. Here, $k$ represents the number of squared polynomials in the SOS decomposition. The PSD matrix $\widetilde{G}$ is then formed as : %TODO 改表述  更清晰 此时
%   \begin{align}
%     \widetilde{G} = L^T D L.  
%   \end{align}
%     % Since $D$ is positive definite, $\widetilde{G}$ is guaranteed to be semi-positive definite.
%%%%%%%%%zhr-0128
% \textbf{[Direct Construction Approach].} 
\textbf{[Explicit Algebraic Construction].} 
One direct way to obtain a PSD Gram matrix $\widetilde{G}$ is based on spectral shifting. 
Specifically, we first generate a symmetric integer matrix $G \in \mathbb{S}^m$ and compute its smallest eigenvalue $\lambda_{\min}$. 
Setting
 \begin{align}
      \widetilde{G} = G - kI \succeq 0 \text{, where } k = \left\lfloor \lambda_{\min} \right\rfloor
  \end{align}
yields a matrix $\widetilde{G} \succeq 0$, since its smallest eigenvalue satisfies $\lambda_{\min} - k \geq 0$.

\textbf{Alternatively}, a PSD Gram matrix can also be constructed in factored form. 
Let $L \in \mathbb{Z}^{m \times k}$($k \leq m$) be a sparse integer matrix and let $D \in \mathbb{R}^{k \times k}$ be a positive definite diagonal matrix, where $k$ corresponds to the number of squared terms in the SOS decomposition. 
Then the matrix
\begin{align}
\widetilde{G} = L^{\mathsf T} D L
\end{align}
is symmetric positive semidefinite by construction.

% \text{\bf{[Optimization-Based Approach].}} 
{\bf{[Optimization-Based Approach].}} Another computation-driven method constructs the Gram matrix $\widetilde{G}$ through Linear Matrix Inequality (LMI) optimization. 
This approach employs LMI optimization to compute an approximate SOS representation, followed by controlled adjustment to obtain an exact integer-coefficient solution. 
% 待修改
Suppose $f(x) \in \mathbb{Z}[x]$ is  a randomly selected integer-coefficient polynomial.  
We first solve the following semidefinite program: 
\begin{align}
\begin{cases}  \text{max} \ \lambda \\ \text{s.t.} \ f(x) = \mathbf{v}(x)^{\mathsf T} G {\mathbf v}(x) \\ \ \ \ \ \ \ G - \lambda I \succeq 0, \quad G= G^{\mathsf T}. \end{cases}
\end{align}
      %where $G$ is the Gram matrix corresponding to $f(x)$. 
Let $(\lambda,G)$ be an optimal solution. Since $G-\lambda I\succeq 0$, for any integer $k\ge -\lambda$ we have
$G+kI\succeq 0$. Choosing $k:=\lceil -\lambda\rceil\in\mathbb{Z}$ yields $G+kI\succeq 0$, and therefore 
% $\tilde f(x)\ :=\ f(x)+\mathbf{v}(x)^{\mathsf T}(kI)\mathbf{v}(x)
% \ =\ \mathbf{v}(x)^{\mathsf T}(G+kI)\mathbf{v}(x)$ 
\[
\tilde f(x)\ :=\ f(x)+\mathbf{v}(x)^{\mathsf T}(kI)\mathbf{v}(x)
\ =\ \mathbf{v}(x)^{\mathsf T}(G+kI)\mathbf{v}(x)
\]
is SOS polynomial with integer coefficients. 
    Then the Gram matrix is updated as $\widetilde{G}= G +k I$. 
    % Since $G$ corresponds to an integer-coefficient polynomial and $\lambda$ is an integer, the resulting polynomial is guaranteed to have integer coefficients and constitute a valid SOS polynomial.  
%%% TODO 附录
    Appendix~\ref{appendix:foundation_of_Computation-Driven} provides the theoretical foundations for the aforementioned computation-driven methods.

\textbf{(2) Structure-Driven Methods.} This category constructs the Gram matrix $\widetilde{G}$ by exploiting algebraic matrix structures that guarantee positive semidefiniteness by design, most notably diagonally dominant (dd) matrices and scaled diagonally dominant (sdd) matrices.
% Another category of methods is the structure-driven approach, which primarily constructs the Gram matrix $\widetilde{G}$ based on the properties of diagonally dominant (dd) matrices. 
 
Specifically, a matrix $ G \in \mathbb{R}^{m \times m} $ is defined as diagonally dominant if it satisfies the following condition: $ G_{ii} \ge \sum_{j \ne i} |G_{ij}|, \forall i$. 
By Gershgorin’s circle theorem \cite{gershgorin1931uber}, any symmetric $G$ that satisfies the diagonal dominance condition is guaranteed to be positive semidefinite.
In this case, $ \widetilde{G}$ can be directly initialized as a diagonally dominant matrix. 
Furthermore, we \emph{construct} $\widetilde{G}$ in a structured diagonally dominant form by restricting it to a non-negative combination of $m$ rank-one generators inspired by \cite{barker1975cones}:
\begin{align}
\label{DD_Matrix}
\widetilde{G} = \sum_{i=1}^{m^2} \eta_i U_i,
\end{align}
where $\widetilde{G}$ is an $m \times m$ symmetric matrix, 
$\eta_i \ge 0$, 
and $U_i = \mathbf{u}_i \mathbf{u}_i^{\top}$, 
with $\mathbf{u}_i \in \mathbb{R}^{m}$ having at most two non-zero components, each equal to $\pm 1$.
This representation provides a convenient structural parameterization of PSD Gram matrices, and naturally extends to the scaled diagonally dominant case to introduce greater flexibility. Appendix~\ref{appendix:foundation_of_Structure-Driven} provides the theoretical foundations and more details of structure-driven methods.

\subsubsection{Training the SOS Structure Conjecturer}
\label{subsec:Training the SOS Structure Conjecturer}
To obtain an LLM-based SOS structure conjecturer with strong structural reasoning capability, we design a progressive two-stage training scheme. 
As illustrated in Fig.~\ref{fig:model_training}, the procedure consists of two phases.
(1) {\emph {Cold Start}:} {\bf\emph{endow}} the model with 
the ability to conjecture SOS structures 
via supervised fine-tuning (SFT) on the constructed polynomial-SOS pair data. 
(2) {\emph {Progressive Reinforcement Learning}:} {\bf\emph{further enhance}} the model’s SOS-structure conjecturing performance via curriculum-based reinforcement learning (RL) on challenging data. 

%%% TODO  强调大量合成数据
% \item \textbf{Data Utilization:}  去重
{\bf{Cold Start.}}We perform SFT on the base model using over one million synthetic data samples. 
These samples are generated via the method described in Section~\ref{sec: SOS Data Construction Method}, with each instance formulated as a pair $(f(x), S)$, where $f(x)$ denotes a non-negative polynomial and $S$ 
represents its corresponding SOS decomposition. This stage endows the base model with the foundational capability for SOS structural reasoning, serving as a critical precursor to the subsequent reinforcement learning phase. Further details regarding the synthetic dataset are provided in Appendix \ref{appendix:synthetic_data_statistics}.
% The SFT stage utilizes the diverse non-negative polynomial-SOS representation pairs constructed in Section \ref{sec: SOS Data Construction Method}. Each data sample takes the form of $(f(x), S)$, where $f(x)$ is the input polynomial and $S$ is its corresponding ground-truth SOS decomposition structure. 
% This stage equips the base model with the ability to reason about SOS structures, laying the foundation for the subsequent reinforcement learning phase. 
%%%TODO 具体用了多少训练数据  怎么训的  数据难度和质量怎么保证   附录给了更多关于训练数据的信息

% TODO 根据 新构造RL训练数据改
% \textbf{Reinforcement Learning (RL).}
% GRPO 强调数据来源 reward设计 + 优化目标
{\bf{Progressive Reinforcement Learning.}}We further optimize the model's SOS structural reasoning through progressive reinforcement learning on challenging tasks. Specifically, 
we curate the training instances that remain unsolved by the cold-started model into an easy-to-hard curriculum and conduct curriculum-based Group Relative Policy Optimization (GRPO) \cite{shao2024deepseekmath}. 
The reward function is designed with three critical components: 

\text{\bf{[Accuracy Reward].}} 
The accuracy reward encourages SOS structure conjectures with
smaller errors compared to the original polynomial. 
It measures the numerical fidelity of the conjecture and is computed as 
% It evaluates the accuracy of the SOS structure conjecture by measuring the error between the original polynomial and 
% % the SOS structure 
% conjecture, using the following formula:
    \begin{align}
        R_\text{Accuracy} = \frac{1}{1 + \alpha\|\mathbf{f}(x) -  \hat{\mathbf{f}}{(x)}\|_2},
    \end{align}
    where $ \mathbf{f(x)} $ represents the original polynomial, 
    % $\alpha$ is 
    $ \hat{\mathbf{f}}(\bf x) $ denotes the SOS structure conjecture generated by the model, and $\alpha$ is a scaling factor. 
    %The $ L_2 $-norm $| \cdot |_2 $ is employed to quantify the discrepancy between the two.
    
    \text{\bf{[Format Reward].}} 
     A binary indicator ensuring the output strictly adheres to the predefined SOS structural template and delimiters (e.g., \textit{\textless SOS Expression\textgreater}). 
    % % The format reward function 
    % Ensures that the conjecture generated by the SOS structure conjecturer follows an SOS form and is enclosed within the predefined delimiter \textit{\text{\textless SOS Expression\textgreater}}. 
    % % Regular expression matching is used to ensure that the conjecture adheres to the predefined structure. 
    % A reward of 1 is assigned if the format is correct; otherwise, the reward is set to 0.
    
    \text{\bf{[Algebraic Structure Penalty].}}  
     The algebraic structure consistency penalty is designed to ensure that the set of nonzero monomials in the SOS strucure conjecture closely matches that of the original polynomial, 
     which comprises a \textit{soft penalty} and a \textit{hard penalty}. 
     Appendix ~\ref{appendix:details_of_RL} details the specific computation procedure and 
     provides further details on the progressive reinforcement learning procedure.

\subsection{Symbolic Correction: Exact Rational Recovery}
\label{sec:Symbolic Correction}
In this part, the {\it symbolic correction} module serves as a precision bridge between 
% the approximate SOS structure conjectured by the Neural Conjecture Module 
model-generated SOS structure conjecture 
and the exact representation required for formal verification, 
which encompasses two stages: Gauss–Newton refinement and rational recovery.  
% The correction is achieved through a multi-step procedure that transforms the floating-point Gram matrix $\mathbf{Q}_{\text{float}}$ into an exact rational matrix $\mathbf{Q}_{\text{rational}}$.

% TODO 示例图  
% Gram x\ge 0  x  y
%              y  z   xz-y^2 \ge 0  锥
%  约束超平面    例子？
%\subsubsection{Numerical Refinement via Newton Iteration}
% \label{Newton's Method}
\label{sec:Newton}
{\bf Gauss–Newton Refinement.} After obtaining the SOS structural conjecture $\hat{f}{(x)}$ from the {\it neural conjecture} module, we perform Gauss–Newton refinement to enhance the numerical precision of the SOS representation. 

First, the corresponding monomial basis ${\mathbf v}(x)$ is extracted, and an initial floating-point Gram matrix $\mathbf{G}$ is constructed such that $\hat{f}{(x)} \approx {\mathbf v}(x)^{\mathsf T} \mathbf{G} {\mathbf v}(x)$. 
% \begin{align}
%     \hat{f(x)} \approx v(x)^T \mathbf{G} v(x).
% \end{align}
 To refine $\mathbf{G}$ using the Gauss–Newton iteration, we compute the Cholesky decomposition of $\mathbf{G}$:
\begin{align}
    \hat{f}(\mathbf{x}) \approx \mathbf{v}(\mathbf{x})^{\mathsf T} L L^{\mathsf T} \mathbf{v}(\mathbf{x})
 = \sum_{i=1}^{k} \left(\sum_{\alpha} c_{i,\alpha} \mathbf{x}^{\alpha}\right)^2,
\end{align}
where $k$ is the rank of the matrix $\mathbf{G}$, and $L L^{\mathsf T}$ denotes the Cholesky factorization of the Gram matrix $\mathbf{G}$.

The Gauss–Newton iteration is applied to compute the coefficient correction term $\Delta c_{i,\alpha} \mathbf{x}^{\alpha}$ such that
\begin{align}
\label{math:newton_expression}
     \hat{f}(\mathbf{x}) = \sum_{i=1}^{k}
 \left(\sum_{\alpha} c_{i,\alpha} \mathbf{x}^{\alpha} +
 \Delta c_{i,\alpha} \mathbf{x}^{\alpha}\right)^2,
\end{align}

 where $\Delta c_{i,\alpha} \mathbf{x}^{\alpha}$ represents the perturbation to the polynomial coefficients, and the Gram matrix is updated as $\mathbf{G} + \Delta \mathbf{G}$. 
 The optimization objective is to minimize the backward error:
\begin{align}
     \theta = || \hat{f}(\mathbf{x}) - {\mathbf v}(x)^T \mathbf{G} {\mathbf v}(x) ||.
\end{align}
 % The iterative process
 Gauss–Newton iteration terminates when $\theta$ falls below a predefined tolerance threshold $\tau$. 
 This process is crucial for the subsequent rational recovery process, %  todo 得到G_precise
 ensuring that the obtained SOS representation achieves high 
 % numerical stability and 
 precision. 
 Appendix \ref{appendix:newton_} provides further details of the procedure.
 % Newton iteration.
 
%\subsubsection{Exact Rational Recovery}
%\label{sec:RationalRecovery}
% After completing the Gauss–Newton refinement, our goal is to convert the 
% % high-precision floating-point Gram 
% refined 
% matrix into its exact rational form, thereby obtaining an exact SOS representation.

% After obtaining a high-precision numerical Gram matrix from the Gauss–Newton refinement, we convert it into an exact rational PSD matrix that satisfies the polynomial identity without numerical error.
{\bf Rational Recovery.} The numerical Gram matrix $G_N$ obtained from 
% high-precision 
Gauss–Newton refinement contains inherent floating-point errors. Our objective is to transform the matrix into an exact rational PSD matrix that rigorously satisfies the polynomial identity without numerical uncertainty, thereby yielding an exact SOS representation.

% TODO 改写 加逻辑 不够易懂
% According to the proposition in \cite{PEYRL2008269}, if the polynomial admits a Gram matrix lying strictly in the interior of the PSD cone, then there exists a threshold $\delta > 0$ such that any numerical Gram matrix within $\delta$ of the true solution can be converted into an exact rational PSD matrix by applying rational approximation and orthogonal projection. 

According to a classical result on rational recovery~\cite{PEYRL2008269},\footnote{%
If a polynomial admits a Gram matrix lying strictly in the interior of the PSD cone, then there exists a threshold $\delta > 0$ such that any numerical Gram matrix within $\delta$ of the exact solution can be converted into an exact rational PSD matrix by combining rational approximation with orthogonal projection
}
we distinguish two recovery regimes depending on the numerical rank of the refined solution.
(1) \textit{Interior-point case}: 
when the refined Gram matrix lies strictly in the interior of the PSD cone, we project it orthogonally onto the affine subspace defined by the SOS constraints and then rationalize the resulting matrix.
(2) \textit{Boundary case}: 
when the matrix is numerically rank-deficient, direct matrix rationalization is avoided. 
Instead, we perform a truncated $LDL^{\top}$ factorization followed by simultaneous Diophantine approximation to recover rational vectors while preserving the rank structure.
% When the refined Gram matrix lies in the interior of the PSD cone, we first project it orthogonally onto the affine hyperplane determined by the SOS constraints, and then rationalize the entries of the projected matrix. 
% When the matrix is numerically rank-deficient, we instead perform rational vector recovery based on a truncated $LDL^{T}$ factorization combined with simultaneous Diophantine approximation, thereby preserving the correct rank structure. 
% In practical implementation, we adopt a scaled rational recovery approach, converting each high-precision floating-point entry into a simple fraction with a bounded denominator. Owing to the extremely small backward error achieved by Newton refinement, the nearest such fraction can be guaranteed to coincide with the true underlying rational coefficient.  
Appendix \ref{appendix:theorey_symbolic} provides further details of Symbolic Correction module. 
% The theoretical foundations of the numerical refinement and rational recovery procedures are summarized in Appendix \ref{}. 

\subsection{Formal Verification: Lean Proof Generation}
% Lean 证明模板 
\label{sec:Formal Verification}
% \begin{tcolorbox}[colback=gray!5!white, colframe=blue!50!black, arc=3mm, title={\texttt{Lean Proof Template}}]
% \begin{minted}[fontsize=\small, breaklines]{lean}
% theorem <Theorem_Name> (x1 x2 x3 ... xn : ℝ)
%   (h₁ : I = <polynomial>) :
%   I ≥ 0 := by
%   let terms : List (ℝ × ℝ) :=
%     [ <Exact SOS Expression> ]
%   have : I = (terms.map (λ (p, k) => p * k^2)).sum := by
%     unfold terms
%     simp only [neg_mul, List.map_cons, one_pow, mul_one, one_mul, zero_mul, List.map_nil,
%       List.sum_cons, List.sum_nil, add_zero, zero_add]
%     linear_combination h₁
%   rw [this]
%   unfold terms
%   simp only [neg_mul, List.map_cons, one_pow, mul_one, one_mul, zero_mul, List.map_nil,
%     List.sum_cons, List.sum_nil, add_zero, zero_add, ge_iff_le]
%   positivity

% \end{minted}
% \end{tcolorbox}

% 模板 + 例子图
% 工具部分  外部得到SOS  下面在Lean中验证 。
% 需要证明以下引理：
%   F = 平方和  等号是对的
%   平方和形式怎么证明非负   
%   主要是两个引理  怎么证明   证明部分      介绍引理 + 给例子。

After obtaining the exact SOS certificates, the {\it formal verification} module generates a complete Lean proof. 
% using predefined proof templates together with the SOS certificate.  
% TODO 策略名待修改
% \textbf{Automated Lean Tactic Integration.} 
By integrating pre-defined proof templates with the {\it neural conjecture} and {\it symbolic correction} modules, we implement a callable Lean tactic, \textit{llm\_ineq}.  
% Applying this tactic directly within Lean automatically generates the complete Lean proof code. 
% Figure \ref{fig:lean_proof_generation} illustrates the 
% % template structure 
% structure of the proof template %utilized in this integration 
% and an example of automated proof generation. 
%%%%%%%%%
% Proof templates and generation examples are detailed in Appendix~\ref{apendix:more_about_lean_verification}.
Given a target polynomial and its exact SOS certificate, \textit{llm\_ineq} automatically constructs a complete Lean proof by discharging two obligations: 
% Serving as the blueprint for generating the formal proof, the \textit{llm\_ineq} tactic is employed to automate the verification of the polynomial inequality. 
% % Through this integration, upon providing a polynomial and its corresponding SOS decomposition, the system automatically invokes the requisite proof strategy, thereby minimizing human intervention and enhancing the efficiency of formal verification.
% Specifically, to establish the polynomial inequality from its exact SOS representation, two lemmas must be verified.

    \textbf{Equality between polynomial and SOS certificate.} Lean expands the polynomial and the SOS expression into canonical forms and checks their equality via the \textit{linear\_combination} tactic. 
    For example, the code below shows how Lean verifies the equality: %%%linenos,
%     \begin{minted}  [fontsize=\small,frame=single,breaklines]{lean}
% have h_eq : p = (terms.map (fun (q, k) => k • q^2)).sum := by
%   linear_combination
% \end{minted}   

% \begin{lstlisting}[language=lean4]
\begin{lstlisting}[
    language=lean4,
    xleftmargin=0.5em,
    xrightmargin=0.5em,
    aboveskip=1em,
    belowskip=0.8em,
    framexleftmargin=0.5em,
    framexrightmargin=0.5em,
    framextopmargin=3pt,
    framexbottommargin=3pt
]
have h_eq : p = (terms.map (fun (q, k) => k * q^2)).sum := by
  linear_combination
\end{lstlisting}
    
    % To verify that the polynomial coincides with the SOS expansion, Lean expands both sides into their canonical algebraic forms and checks their consistency. This is accomplished using the \textit{\text{linear\_combination}} tactic. 
    \textbf{Nonnegativity of the SOS expansion.} 
    Lean provides several built-in strategies for proving nonnegativity. 
    The nonnegativity of the SOS expression is proved by {\it Lean}’s \textit{positivity} tactic, which recursively applies standard rules (e.g., \textit{sq\_nonneg}, \textit{mul\_nonneg}, \textit{add\_nonneg}).
%     \begin{minted} [fontsize=\small,frame=single,breaklines]{lean}
% have h_nn : 0 <= (terms.map (fun (q, k) => k • q^2)).sum := by
%   positivity
% \end{minted}

% \begin{lstlisting}[language=lean4]
\begin{lstlisting}[
    language=lean4,
    xleftmargin=0.5em,
    xrightmargin=0.5em,
    aboveskip=1em,
    belowskip=0.8em,
    framexleftmargin=0.5em,
    framexrightmargin=0.5em,
    framextopmargin=3pt,
    framexbottommargin=3pt
]
have h_nn : 0 <= (terms.map (fun (q, k) => k * q^2)).sum := by
  positivity
\end{lstlisting}

    % We employ the automated \textit{positivity} tactic, which recursively applies rules such as \textit{"sq\_nonneg"}, \textit{"mul\_nonneg"}, and \textit{"add\_nonneg"} to establish the nonnegativity of the SOS expression.  
Further details on the proof templates and illustrative examples are provided in Appendix~\ref{apendix:more_about_lean_verification}.

%-- Example of Lean proof code generated by llm_ineq 
% \begin{lstlisting}[language=lean]
% -- Equality between polynomial and SOS certificate.
% have h_eq : I = (terms.map (fun (p, k) => p * k^2)).sum := by
%   linear_combination h
% positivity
% \end{lstlisting}

% Proof templates and generation examples are detailed in Appendix~\ref{apendix:more_about_lean_verification}.

% % \textbf{Automated Lean Proof Generation.} 
% Building upon the integrated tactic, we demonstrate the process of automatically generating the complete Lean proof based on the theorem statement and its corresponding SOS representation. 
% % Specifically, given a polynomial and its SOS decomposition, the \textit{llm\_ineq} tactic is employed to automatically generate the requisite Lean code, ensuring the proof adheres to Lean's formal verification standards. 
% This process begins with a polynomial input, followed by the application of the pre-integrated Lean tactic. The system then automatically synthesizes the Lean code to formally state and verify the inequality. 
% % This approach eliminates the need for manual proof construction, thereby significantly accelerating the formal verification workflow. 
% Figure $\text{\ref{fig:lean_proof_generation}}$ illustrates this process with an example, showcasing the automated generation of Lean proof code using the polynomial and its SOS decomposition.   

%% file: Sec/Exp.tex
\section{Experiments}\label{sec:Exp}
% \subsection{A Competition-Level Benchmark for Multivariate Polynomial Inequalities} 

\begin{table*}[htbp]
\centering
% \caption{
% % Method Comparison in PoliIneqBench Benchmark. DS-Prover-V2 stands for Deepseek-Prover-V2.
% Comparative Performance of NSPI and Baseline Methods on the PolyIneqBench Across Varying Numbers of Variables. 
% }
\caption{\textbf{Comparative Performance on PolyIneqBench (n=3 to 10).} \textit{Pass}: 
success rate within 1 hour; \textit{t(s)}: mean execution time of solved instances. \textit{DS-Prover-v2} denotes DeepSeek-Prover-v2. 
Best and second-best results are in \textbf{bold} and \underline{underlined}, respectively.
% The best results are \textbf{bolded}, and the second-best results are \underline{underlined}.
}
% \caption{Comparative Performance of NSPI and Baseline Methods on the PolyIneqBench Across Varying Numbers of Variables. DS-Prover-v2 denotes DeepSeek-Prover-v2; \textit{Pass} represents the success rate for formal proofs completed within a 1-hour time limit, and \textit{t(s)} indicates the average execution time in seconds for successfully proven instances.}
\label{tab:main_result}
\begin{small}
% \begin{tabular}{l l l l l l l l l l l l l l l l l}
% \begin{tabular}{l p{0.5cm} p{0.4cm} p{0.5cm} p{0.4cm} p{0.5cm} p{0.4cm} p{0.5cm} p{0.4cm} p{0.5cm} p{0.4cm} p{0.5cm} p{0.4cm} p{0.5cm} p{0.4cm} p{0.5cm} p{0.4cm}}
% \begin{tabular}{p{2.3cm} C{0.5cm} C{0.4cm} C{0.5cm} C{0.4cm} C{0.5cm} C{0.4cm} C{0.5cm} C{0.4cm} C{0.5cm} C{0.4cm} C{0.5cm} C{0.4cm} C{0.5cm} C{0.4cm} C{0.5cm} C{0.4cm}}
\begin{tabular}{p{2.3cm} C{0.5cm} C{0.5cm} C{0.5cm} C{0.5cm} C{0.5cm} C{0.5cm} C{0.5cm} C{0.5cm} C{0.5cm} C{0.5cm} C{0.5cm} C{0.5cm} C{0.5cm} C{0.5cm} C{0.5cm} C{0.5cm}}
\toprule
\multirow{2}{*}{\textbf{Method Name}} & 
\multicolumn{2}{c}{\textbf{n=3}} & 
\multicolumn{2}{c}{\textbf{n=4}} &
\multicolumn{2}{c}{\textbf{n=5}} &
\multicolumn{2}{c}{\textbf{n=6}} &
\multicolumn{2}{c}{\textbf{n=7}} &
\multicolumn{2}{c}{\textbf{n=8}} &
\multicolumn{2}{c}{\textbf{n=9}} &
\multicolumn{2}{c}{\textbf{n=10}} \\ 
\cline{2-17}
 &\textbf{Pass}  & \textbf{t(s)}  & 
 \textbf{Pass}  & \textbf{t(s)}   & 
 \textbf{Pass}  & \textbf{t(s)}   & 
 \textbf{Pass}  & \textbf{t(s)}   & 
 \textbf{Pass}  & \textbf{t(s)}   & 
 \textbf{Pass}  & \textbf{t(s)}   & 
 \textbf{Pass}  & \textbf{t(s)}   & 
 \textbf{Pass}  & \textbf{t(s)}   
 \\
\midrule
\multicolumn{17}{c}{\textbf{Symbol-based Method}} \\
\midrule
 Maple &  \textbf{97.6\%} & \text{24.6}  & \underline{39.0\%} & \text{107.4}  & \text{\underline{26.7\%}} & \text{451.2}  & \text{8.2\%} & \text{\underline{54.9}}  & \text{6.7\%} & \text{821.5}  & \text{6.67\%} & \text{456.3}  & \text{3.3\%} & \text{3176.6} & \text{1.7\%} &\text{1525.6}\\
Z3  &  \textbf{97.6\%} & \textbf{0.6}  & \text{32.5\%} & \text{21.8}  & \text{23.3\%} & \text{101.9}  & \text{19.7\%} & \text{196.4}  & \text{1.7\%} & \text{\underline{65.0}}  & \text{1.7\%} & \text{1598.6}  & \text{0.0\%} & \text{NA} & \text{0.0\%} &\text{NA} \\
     % validsdp %\cite{martin2017validsdp} 
    % & \text{--} & \text{--}  & \text{--} & \text{--}  & \text{--} & \text{--}  & \text{--} & \text{--}  & \text{--} & \text{--}  & \text{--} & \text{--}  & \text{--} \\
\midrule
\multicolumn{17}{c}{\textbf{LLM-based Prover}} \\
\midrule
 DS-Prover-v2 %\cite{ren2025deepseek}
 &\text{42.9\%} & \text{19.8}  & \text{2.6\%} & \text{\underline{17.9}}  & \text{0.0\%} & \text{NA}   & \text{0.0\%} & \text{NA}   & \text{0.0\%} & \text{NA}  & \text{0.0\%} & \text{NA}   & \text{0.0\%} & \text{NA}  & \text{0.0\%} & \text{NA}\\
 Goedel-Prover-v2 %\cite{lin2025goedel} 
     &  \text{20.2\%} & \text{189.6}  & \text{5.2\%} & \text{136.4}  & \text{0.0\%} & \text{NA}  & \text{0.0\%} & \text{NA}  & \text{0.0\%} & \text{NA}  & \text{0.0\%} & \text{NA}  & \text{0.0\%} & \text{NA}  & \text{0.0\%} & \text{NA} \\
Kimina-Prover %\cite{lin2025goedel} 
     &  \text{36.9\%} & \text{117.0}  & \text{5.2\%} & \text{111.7}  & \text{0\%} & \text{NA}  & \text{0.0\%} & \text{NA}  & \text{1.7\%} & \text{120.6}  & \text{0.0\%} & \text{NA}  & \text{1.7\%} & \text{115.8}  & \text{0.0\%} &\text{NA} \\
\midrule
\multicolumn{17}{c}{\textbf{General-purpose LLM}}  \\
\midrule 
 GPT-5.2 %gpt-5.2-high
 & \text{26.2\%} & \text{56.8}  & \text{10.4\%} & \text{86.7}  & \text{1.7\%} & \text{\underline{51.8}}  & \text{4.9\%} & \text{154.4}  & \text{3.3\%} & \text{75.3}  & \text{3.3\%} & \text{\textbf{55.9}}  & \text{3.3\%} & \text{\underline{68.3}} & \text{1.7\%} &\text{\textbf{49.2}} \\
 Gemini-3-Pro %% Gemini-3-pro-Preview
 & \text{22.6\%} & \text{91.2}  & \text{24.7\%} & \text{102.4}  & \text{\textbf{36.7\%}} & \text{109.8}  & \text{\underline{21.3\%}} & \text{138.9}  & \text{\underline{15.0\%}} & \text{140.7}  & \text{\underline{15.0\%}} & \text{126.7}  & \text{\underline{13.3\%}} & \text{143.5} &\text{\underline{6.7\%}} &\text{116.7}\\
 DeepSeek-V3.2 & \text{14.3\%} & \text{368.4}  & \text{6.5\%} & \text{420.9}  & \text{1.7\%} & \text{526.8}  & \text{1.6\%} & \text{495.8}  & \text{5.0\%} & \text{393.8}  & \text{5.0\%} & \text{561.5}  & \text{1.7\%} & \text{358.0} & \text{3.3\%} & \text{678.9} \\  %%% qwen3-max-preview-think
\midrule
\multicolumn{17}{c}{\textbf{Hybrid system}}  \\
\midrule
 LIPS %\cite{li2025LIPS} 
 & \underline{71.4\%} & \text{82.5}  & \text{27.3\%} & \text{131.0}  & \text{13.3\%} & \text{141.6}  & \text{11.5\%} & \text{195.7}  & \text{8.3\%} & \text{150.2}  & \text{8.3\%} & \text{153.1}  & \text{1.7\%} & \text{148.7} & \text{0.0\%}& \text{NA} \\
% \textbf{NSPI (ours)}  &  \text{37/84} & \text{16.3}  & \text{31/77} & \text{14.4}  & \text{22/60} & \text{20.0}  & \text{18/61} & \text{21.4}  & \text{16/60} & \text{29.6}  & \text{13/60} & \text{63.9}  & \text{10/60} & \text{20.1}&\text{7/60} &\text{46.3} \\
% \textbf{NSPI (ours)}  &  \text{37} & \text{16.3}  & \text{31} & \text{14.4}  & \text{22} & \text{20.0}  & \text{18} & \text{21.4}  & \text{16} & \text{29.6}  & \text{13} & \text{63.9}  & \text{10} & \text{20.1}&\text{7} &\text{46.3} \\
\textbf{NSPI (ours)}  &  \text{44.1\%} & \underline{16.3}  & \textbf{40.3\%} & \textbf{14.4}  & \textbf{36.7\%} & \textbf{20.0}  & \textbf{29.5\%} & \textbf{21.4}  & \textbf{26.7\%} & \textbf{29.6}  & \textbf{21.7\%} & \text{\underline{63.9}}  & \textbf{15.0\%} & \textbf{20.8}&\textbf{11.7\%} &\text{\underline{58.3}} \\
\bottomrule
\end{tabular}
\end{small}
\end{table*}

% 饼图只统计真实例子的来源
\begin{figure}[htbp]
  \centering
  \includegraphics[width=0.85\linewidth]
  {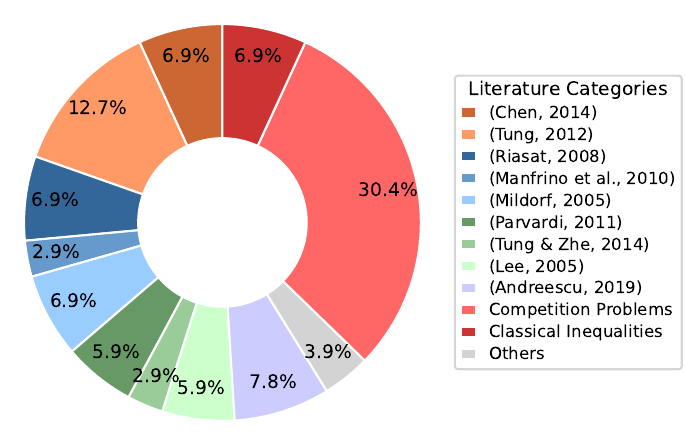}%{figures/experiments/problem_type_distribution_251210_v5.pdf}
  \caption{Distribution of sample counts across data source categories in 
  PolyIneq-Real.
  % PolyIneqBench.   
  }  \label{fig:benchmark_category}%  TODO: 可以加例子变元分布  （2-8）
\end{figure}

\begin{figure}[htbp]
  \centering
  \includegraphics[width=\linewidth]{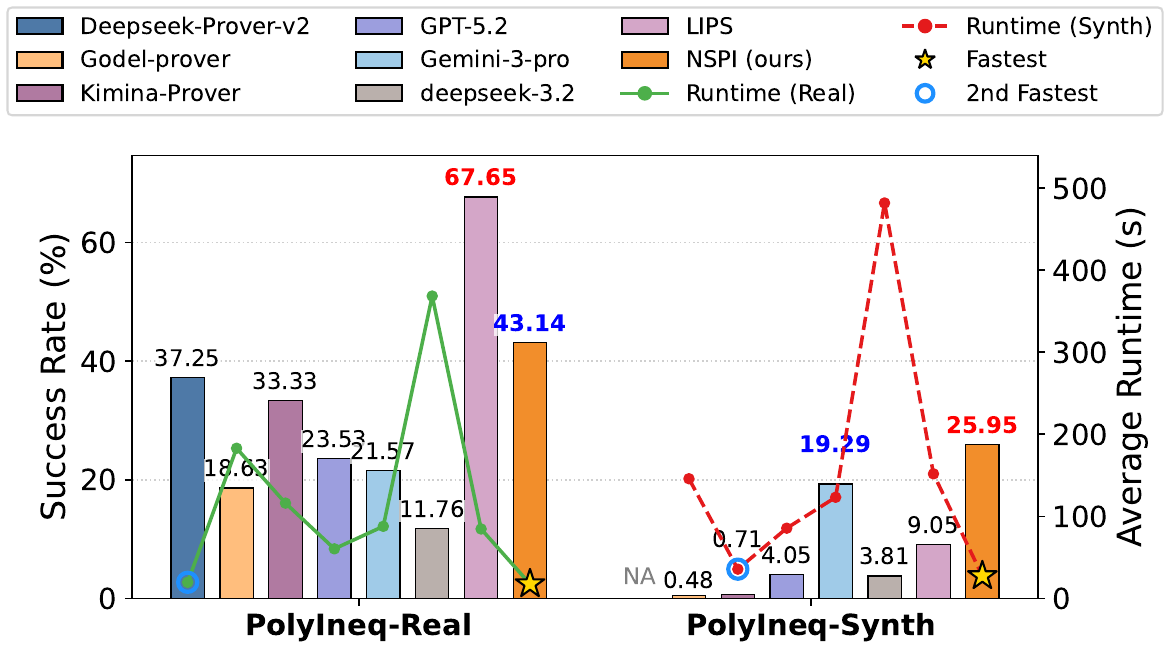}
  \caption{Performance of different methods on PolyIneqBench (PolyIneq-Real and PolyIneq-Synth).}
  % \caption{Comparative Analysis of Success Rates Across Various Methods on PolyIneqBench (PolyIneq-Real and PolyIneq-Synth).}
  \label{fig:poly_method_comparison}
\end{figure}

\begin{figure*}[htbp]
    \centering
    \includegraphics[width=\linewidth]{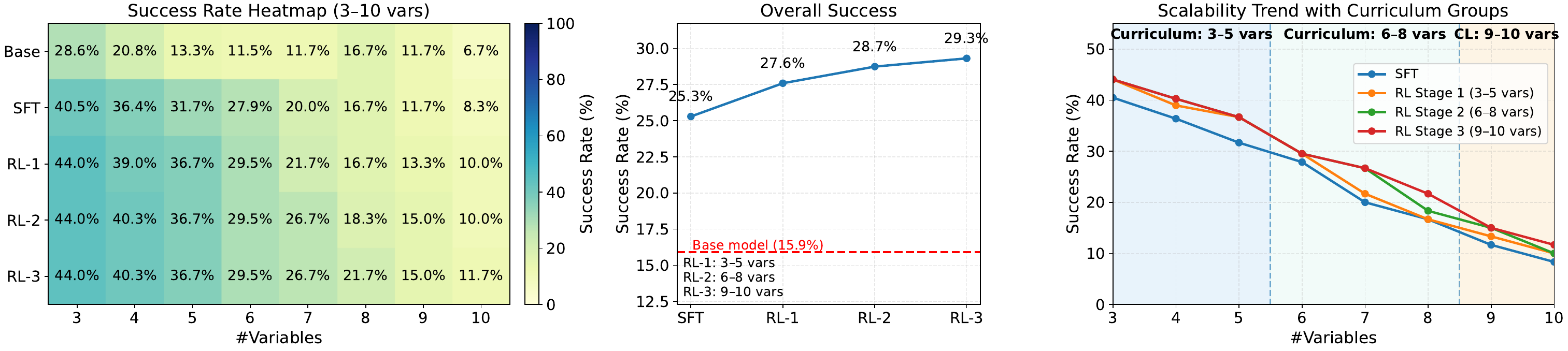}
    \caption{\textbf{Performance on PolyIneqBench across different training stages}. 
    % (3-10 variables)}.
    \textbf{Left:} Success rate heatmap showing capability boundary migration. 
    % Heatmap illustrating success rate transitions as the model progresses from the base state to RL-3 across different problem dimensions. 
    \textbf{Middle:} Overall performance gains from base model through RL stages. 
    \textbf{Right:} Scalability trends across curriculum groups. 
    }
    \label{fig:performance_diff_training}
\end{figure*}

\subsection{A Challenging Benchmark for Multivariate Polynomial Inequalities} 
% 对比实验确定 benchmark扩充 
% To validate the effectiveness of the proposed NSPI method, we conducted a series of experiments. 
% The NSPI was compared with several advanced baseline methods, and ablation studies were performed to investigate the performance of each component of the NSPI. 
% Finally, concrete case studies were provided to demonstrate the advantages of the NSPI.
Existing mathematical benchmarks for theorem proving \textbf{do not} encompass proofs of \textbf{high-dimensional multivariate} polynomial inequalities. 
To fill this gap, 
we construct \textbf{PolyIneqBench}, a challenging benchmark for unconstrained polynomial inequality proving. 
It contains \textbf{522} complex inequality problems with \textbf{3 to 10} variables, all transformed into the form of~\eqref{eq:goal} and formalized in Lean. 
Among them, 102 problems are collected from international mathematics competitions and other authoritative sources
\cite{chen2014brief,nguyen2012nice,riasat2008basics,manfrino2010inequalities,mildorf2005olympiad,Parvardi2011ProblemsVascArqady,NguyenZhou2014ThreeVariableInequalities,lee2005topics,Andreescu2019_118Inequalities}, 
forming \textbf{PolyIneq-Real}. 
Fig.~\ref{fig:benchmark_category} shows the distribution of data sources for PolyIneq-Real. 
Notably, existing competition problems are largely concentrated in the 3 or 4-variable setting. 
To extend the benchmark to more challenging high-dimensional cases, we synthesize problems with 4 to 10 variables to build \textbf{PolyIneq-Synth}. 
Appendix \ref{appendix:details_of_benchmark} provides further details on PolyIneqBench.

\subsection{Experimental Setup} 
% \textbf{Baselines.} 
% We compare the NPSI with various inequality proving methods, including the pure LLM method DeepSeek-Prover v2 \cite{ren2025deepseek} and Goedel prover \cite{lin2025goedel}, the symbolic computation-based method  Maple\cite{heck1993maple} and Validsdp \cite{martin2017validsdp}, and hybrid systems 
% % IneqSearch, AIPS, and    ???
% LIPS \cite{li2025LIPS}.   %% 可以提到一些方法代码不是公开可用的所以难以对比？ 参考AIPS类似表述

% We compare NSPI with several inequality-proving approaches. \textbf{Symbolic computation–based} baselines include Maple\cite{heck1993maple} and Z3\cite{de2008z3}. \textbf{State-of-the-art LLM-based provers} include DeepSeek-Prover V2\cite{ren2025deepseek}, Goedel-Prover-V2\cite{lin2025goedel}, and Kimina Prover\cite{wang2025kimina}. We also consider recent \textbf{general-purpose LLMs}, including GPT-5.2\cite{}, Gemini-3-Pro\cite{}, and DeepSeek-V3.2\cite{}, as well as the hybrid system LIPS\cite{li2025LIPS}. We evaluate all methods on PolyIneqBench using proof success rate and the average runtime of successful proofs, and further examine performance trends as the polynomial degree increases to assess scalability. Appendix \ref{appendix:more_details_of_exp} provides additional details on the experimental configurations.

We compare our method NSPI with several state-of-the-art inequality-proving approaches. Symbolic computation–based baselines include Maple~\cite{heck1993maple} and Z3~\cite{de2008z3}. State-of-the-art LLM-based provers include DeepSeek-Prover-V2~\cite{ren2025deepseek}, Goedel-Prover-V2~\cite{lin2025goedel}, and Kimina Prover~\cite{wang2025kimina}. We also consider recent general-purpose LLMs, including GPT-5.2, Gemini-3-Pro, and DeepSeek-V3.2~\cite{deepseekai2025deepseekv32}, as well as the hybrid system LIPS~\cite{li2025LIPS}. We evaluate all methods on PolyIneqBench using proof success rate and the average runtime of successful proofs, and further examine performance trends as the polynomial degree increases to assess scalability. 
% Appendix \ref{appendix:more_details_of_exp} provides additional details on the experimental configurations. 
Appendix \ref{appendix:more_details_of_exp} provides additional implementation details and experimental configurations.

\subsection{Main Results}

%% 时间复杂度角度分析符号方法 低维情景下擅长
% 

%%% TODO  一个实验结论对应一段的小标题
%%% TODO 结果换成 n<=3 
% 训练结果 + base model结果

% \paragraph{Superior Performance in Multivariate Scenarios.} 
\paragraph{Best performance under multivariate scenarios.} 
\cref{tab:main_result} presents the comparative performance of various baseline methods on the PolyIneqBench across different numbers of variables $n$. 
As illustrated, a general downward trend in pass rates is observed for all methods as the problem dimensionality increases. Overall, the proposed NSPI framework outperforms symbol-based methods, pure LLM-based 
approaches, and other hybrid systems, achieving best performance. 
% state-of-the-art results. 
Notably, while baseline methods struggle as the variable count scales to $n=10$, NSPI maintains a consistent pass rate of 11.7\%.  
% NSPI maintains a pass rate of 11.7\% even when the number of variables scales to $n=10$, demonstrating superior robustness. 
Our analysis further reveals several key insights: 
(1) symbolic methods, though proficient in low-dimensional settings, suffer from exponential computational complexity as dimensionality increases 
and lack the ability to generate human-readable formal proofs; (2) state-of-the-art (SOTA) LLM-based provers encounter significant performance bottlenecks 
% when polynomials exceed 5 variables; 
when tackling competition-level polynomial inequalities exceeding 5 variables; 
and (3) although general-purpose LLMs like Gemini-3-Pro exhibit commendable scalability in certain cases, their overall performance remains inferior to NSPI. 
% While symbolic methods excel in low-dimensional settings, their performance degrades significantly in scenarios with $n \geq 4$ due to the exponential growth in computational complexity. 
% Similarly, existing state-of-the-art LLM-based theorem provers encounter performance bottlenecks in complex polynomial inequality problems exceeding 5 variables. Among general-purpose LLMs, Gemini-3-Pro exhibits commendable scalability, yet its overall performance remains inferior to %the proposed 
% NSPI. 

\paragraph{Challenging synthetic data (PolyIneq-Real vs. PolyIneq-Synth).} 
% In addition, 
We compare the performance of various LLM-assisted methods on PolyIneq-Real and PolyIneq-Synth. 
Fig.~\ref{fig:poly_method_comparison} shows a marked decrease in both pass rates and computational efficiency (running time) observed across all methods when tested on synthetic data, validating that the synthesized high-dimensional instances represent more challenging reasoning scenarios. 
% In the more rigorous PolyIneq-Synth setting, NSPI achieves a 25.95\% pass rate with the lowest average execution time, further establishing its superior efficiency and effectiveness. 
% In the challenging PolyIneq-Synth, NSPI achieves a pass rate that represents a 2.87-fold improvement over the hybrid system LIPS. Regarding computational efficiency, NSPI consistently maintains the lowest execution time across both datasets, demonstrating a remarkable speedup of over 10x compared to several LLM-assisted methodologies.
Under the more rigorous PolyIneq-Synth benchmark, NSPI achieves a \textbf{25.95\%} pass rate while maintaining the minimum average execution time, thereby underscoring its superior efficacy and robustness. Notably, NSPI delivers a \textbf{2.87-fold} improvement in pass rate over the hybrid LIPS system. In terms of computational throughput, NSPI consistently outperforms its counterparts by maintaining the lowest latency across both benchmarks, yielding a remarkable speedup exceeding \textbf{10x} compared to several state-of-the-art LLM-based provers.

% \paragraph{Effectiveness of Progressive Reinforcement Learning.} 
\paragraph{Effectiveness of the progressive training process.} 
Fig.~\ref{fig:performance_diff_training} illustrates the performance of the SOS Structure Conjecturer across successive training stages on PolyIneqBench. 
As illustrated by the color transitions in the left panel of Fig.~\ref{fig:performance_diff_training}, progressive reinforcement learning (RL) facilitates the migration of the model’s \textbf{capability boundaries} toward higher-dimensional polynomials. 
Furthermore, the right panel of Fig.~\ref{fig:performance_diff_training} highlights ``jump-like" performance improvements at the curriculum boundary points for different data groups. 
% Notably, curriculum-specific training also promotes cross-group generalization, enhancing performance beyond the immediate training distribution. 
Moreover, training on specific curriculum groups yields cross-group generalization, partially enhancing performance on out-of-distribution instances.
% It is noteworthy, however, that while the progressive RL process is beneficial, its impact is relatively constrained; as shown in the middle panel of Fig.~\ref{fig:performance_diff_training}, the large-scale Supervised Fine-Tuning (SFT) during the cold-start phase yields more substantial performance gains for the base model.
Overall, large-scale supervised fine-tuning (SFT) in the cold-start phase establishes a strong performance foundation (middle panel), upon which progressive RL provides additional gains by extending generalization to more challenging, higher-dimensional instances.

\subsection{Ablation Studies}

To validate the effectiveness of the individual components of NSPI, 
% the proposed NSPI method, 
% we conducted ablation experiments on its various components.  
% we conducted ablation studies under three distinct configurations on PolyIneqBench: 
we conduct ablation studies on PolyIneqBench under three settings: 
(1) removing the synthetic SOS cold start and applying RL directly to the base model; (2) removing progressive RL; and (3) disabling curriculum-based data partitioning during RL. 
The results in~\cref{tab:ablation} show that the cold-start stage trained on large-scale synthetic data is crucial for SOS conjecturing performance, and that curriculum-based GRPO further improves the overall results.

\begin{table}[thbp]
\centering
\caption{Ablation study of NSPI components. PI-Real and PI-Synth denote PolyIneq-Real and PolyIneq-Synth, respectively.}
\label{tab:ablation} 
\begin{tabular}{lcccccc}
\toprule
\textbf{SOS Data} 
% \textbf{SOS Data Construction} 
& \textbf{GRPO} & 
% \textbf{ASP} & 
\textbf{CL} &  
% \textbf{ASP} & %% reward函数中 结构惩罚项
\textbf{PI-Real}&
\textbf{PI-Synth}\\
% \textbf{Pass Rate} \\
\midrule
\ding{55} & \ding{51}  & \ding{51}   &34.31\%  & 14.76\%\\
\ding{51} & \ding{55}  &  \ding{55}  &  40.20\% & 21.19\%\\
\ding{51} & \ding{51} & \ding{55} &  42.16\% & 24.29\%\\
\ding{51} & \ding{51} & \ding{51} & \textbf{43.14\%}& \textbf{25.95\%}\\
% \ding{51} & \ding{51} & \ding{51} & --& \\
\bottomrule
\end{tabular}
\end{table}

% % ASP惩罚项等数据可放附录
% % 可换一种方式  不同训练阶段的消融 CL不同轮次单独拿出来 是否SFT 
% \begin{table}[ht]
% \centering
% \label{tab:ablation}
% \caption{Ablation study of NSPI components.}
% \begin{tabular}{lcccccc}
% \toprule
% \textbf{SOS Data} 
% % \textbf{SOS Data Construction} 
% & \textbf{GRPO} & 
%  \textbf{ASP} & 
% \textbf{CL} &  \textbf{Pass Rate} \\
% \midrule
% \ding{55} & \ding{55} & \ding{55} & \ding{55}  & -- \\
% \ding{51} & \ding{55} & \ding{55} & \ding{55}  & -- \\
% \ding{51} & \ding{51} & \ding{55} & \ding{55} & -- \\
% \ding{51} & \ding{51} & \ding{51} & \ding{55}   & -- \\
% \ding{51} & \ding{51} & \ding{51} & \ding{51} & -- \\
% \ding{51} & \ding{51} & \ding{51} & \ding{51}  & -- \\
% \bottomrule
% \end{tabular}
% \end{table}

% \begin{table}[ht]
% \centering
% \caption{Pass Rate on PolyIneqBench under Different k Value Budgets}
% \begin{tabular}{ccccccc}
% \toprule
% % \textbf{k value} 
% \textbf{k Budget} 
% & 1 & 2 & 4 & 8 & 16 & 32 \\
% \midrule
% \textbf{Pass rate} & -- & -- & -- & -- & -- & -- \\
% \bottomrule
% \end{tabular}
% \end{table}
 
% \subsection{Case Analysis}

%% file: Sec/Appendix.tex
\newpage
\appendix
\onecolumn

% Additional Training Details
% case study
% reward details
% 部分前序知识 由于篇幅不一定放在正文，比如说GRPO可以加一个图来解释? 参考ICML 25 oral有没有类似做法
% 测试集示例表
% 数据分布

\section{Sum of Squares and Semidefinite Programming}
We introduce the relationship between sum-of-squares (SOS) polynomials and semidefinite programming (SDP), which provides the theoretical foundation for the methods employed in this study. 

A polynomial $f(\mathbf{x}) \in \mathbb{R}[\mathbf{x}]$ is called a sum-of-squares (SOS) polynomial if it can be expressed as
% \[
% f(\mathbf{x}) = \sum_{i} f_i(\mathbf{x})^2,
% \]
% where $f_i(\mathbf{x}) \in \mathbb{R}[\mathbf{x}]$.
\begin{align}
f(\mathbf{x}) = \sum_{i} f_i(\mathbf{x})^2, \quad \text{where } f_i(\mathbf{x}) \in \mathbb{R}[\mathbf{x}].
\end{align}

%%%%TODO 是否要加？ 并非所有非负多项式都可以表示为平方和，但 SOS 为多项式非负性提供了一个有效且可计算的充分条件。

\begin{example}
    Consider the polynomial
$f(\mathbf{x}) = 2x_1^4 + 2x_1^3x_2 - x_1^2x_2^2 + 5x_2^4,
\quad \mathbf{x}=(x_1,x_2)\in\mathbb{R}^2$. 
Define
$f_1(\mathbf{x})=\frac{1}{\sqrt{2}}\left(2x_1^2-3x_2^2+x_1x_2\right),%\qquad
f_2(\mathbf{x})=\frac{1}{\sqrt{2}}\left(x_2^2+3x_1x_2\right)$. 
Then $f(\mathbf{x})$ admits the SOS decomposition 
$f(\mathbf{x}) = f_1(\mathbf{x})^2 + f_2(\mathbf{x})^2$, 
which implies that $f(\mathbf{x})\ge 0$ for all $\mathbf{x}\in\mathbb{R}^2$. 
% Equivalently, letting the monomial basis be
% $\mathbf{v}(\mathbf{x}) = [x_1^2,\ x_1x_2,\ x_2^2]^\top$, 
% there exists a symmetric positive semidefinite Gram matrix $\widetilde{G}\succeq 0$ such that 
% $f(\mathbf{x})=\mathbf{v}(\mathbf{x})^\top \widetilde{G}\,\mathbf{v}(\mathbf{x})$. 
\end{example}

Semidefinite programming (SDP) is a class of convex optimization problems, whose standard form can be expressed as 
\begin{align}
\begin{aligned}
\text{minimize} \quad & \langle C, G \rangle \\
\text{subject to} \quad & \langle A_i, G \rangle = b_i, \quad i = 1,\dots,m, \\
& G \succeq 0,
\end{aligned}
\end{align}
where $G$ is a symmetric matrix variable and $G \succeq 0$ denotes that $G$ is positive semidefinite.
% The matrix inner product is defined as 
$\langle A, G \rangle = \mathrm{tr}(A^{\top} G)$  
denotes the matrix inner product.

%%% TODO 是否写成theorem？  ？？？？
\begin{theorem} \cite{parrilo2000structured}%\cite{parrilo2003semidefinite} 
\label{theorem:Gram_matrix_representation}
    A multivariate polynomial $f(x)$ in $n$ variables and of degree $2d$ is a sum of squares (SOS) if and only if there exists a symmetric positive semidefinite (PSD) matrix $\widetilde{G}$ such that
    \begin{equation}\label{eq:gram_representation}
         f(x) = {\mathbf v}(x)^\top \widetilde{G} {\mathbf v}(x), 
    \end{equation}
    where ${\mathbf v}(x)=[1,x_1,x_2,...,x_n,x_1^2,x_1x_2,...,x_n^d ]$ is the vector of monomials up to degree $d$. 
\end{theorem}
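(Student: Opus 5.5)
The plan is to prove the two directions of Theorem~\ref{theorem:Gram_matrix_representation} separately. Both rest on the linear-algebra fact that a symmetric matrix is PSD if and only if it factors as $L^{\top}L$, for instance through a spectral or Cholesky/$LDL^{\top}$ decomposition. Before either direction, I will establish one degree fact, because it is what makes the basis $\mathbf{v}(x)$ of monomials of degree at most $d$ sufficient. Suppose $f=\sum_i f_i^2$ with $\deg f = 2d$, and let $e=\max_i \deg f_i$. For each $i$ with $\deg f_i=e$, the top-degree homogeneous component $h_i$ of $f_i$ is nonzero. The degree-$2e$ part of $\sum_i f_i^2$ is then $\sum_i h_i^2$. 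This sum is a nonzero polynomial: pick a point $x_0$ with $h_{i_0}(x_0)\neq 0$ for some $i_0$, and the sum is strictly positive there. Hence $2e=2d$, so every $f_i$ has degree at most $d$.

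\emph{($\Rightarrow$)} Assume $f=\sum_{i=1}^m f_i^2$. By the degree fact, each $f_i$ has degree at most $d$, so $f_i(x)=\mathbf{c}_i^{\top}\mathbf{v}(x)$ for a coefficient vector $\mathbf{c}_i\in\mathbb{R}^{s(d)}$. Then
\[
f(x)=\sum_{i=1}^m \mathbf{v}(x)^{\top}\mathbf{c}_i\mathbf{c}_i^{\top}\mathbf{v}(x)=\mathbf{v}(x)^{\top}\widetilde{G}\,\mathbf{v}(x),
\qquad \widetilde{G}:=\sum_{i=1}^m\mathbf{c}_i\mathbf{c}_i^{\top}.
\]
The matrix $\widetilde{G}$ is symmetric, and it is PSD because $\mathbf{y}^{\top}\widetilde{G}\mathbf{y}=\sum_i(\mathbf{c}_i^{\top}\mathbf{y})^2\ge 0$ for every $\mathbf{y}$.

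\emph{($\Leftarrow$)} Assume $f=\mathbf{v}^{\top}\widetilde{G}\mathbf{v}$ with $\widetilde{G}\succeq 0$. Take the spectral decomposition $\widetilde{G}=\sum_{j}\lambda_j\mathbf{q}_j\mathbf{q}_j^{\top}$, where $\lambda_j\ge 0$ and the $\mathbf{q}_j$ are orthonormal. Set $f_j(x):=\sqrt{\lambda_j}\,\mathbf{q}_j^{\top}\mathbf{v}(x)$. Each $f_j$ is a polynomial with real coefficients, and $\sum_j f_j(x)^2=\mathbf{v}(x)^{\top}\widetilde{G}\mathbf{v}(x)=f(x)$, so $f$ is SOS. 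Equivalently, one can use $\widetilde{G}=LL^{\top}$ and read off the squares from the columns of $L$. The number of squares is $\operatorname{rank}\widetilde{G}$.

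The only genuinely nontrivial step is the degree bound in the forward direction, specifically ruling out cancellation of leading terms among the squares. The argument above handles this: a sum of squares of real forms, not all zero, cannot vanish identically. The rest is bookkeeping. One detail to note is that $\widetilde{G}$ need not be unique, since the map from $\widetilde{G}$ to $\mathbf{v}^{\top}\widetilde{G}\mathbf{v}$ has a nontrivial kernel. The theorem only asserts existence, so this causes no difficulty.
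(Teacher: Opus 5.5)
Your proof is correct and is the standard argument behind the cited result. The paper gives no proof of its own beyond the citation to Parrilo, and its worked example reads off the squares from $G=L^{\top}L$ exactly as your converse does, while your no-cancellation argument for the leading forms supplies the detail the paper leaves implicit: that each $f_i$ lies in the span of $\mathbf{v}(x)$. One small aside: your closing claim that $\widetilde{G}\mapsto \mathbf{v}^{\top}\widetilde{G}\mathbf{v}$ has a nontrivial kernel holds only for $d\ge 2$ (for $d=1$ the products of entries of $\mathbf{v}$ are distinct monomials, so the Gram matrix is unique), but this does not affect the proof.
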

The matrix $\widetilde{G}$ is referred to as a \textit{Gram matrix} of $f(x)$ with respect to the monomial basis ${\mathbf v}(x)$.

\begin{remark}
The Gram matrix representation is generally not unique, as different choices of the monomial basis or orthogonal transformations of the SOS components yield different PSD matrices $G$.
\end{remark}

According to \cref{theorem:Gram_matrix_representation}, expanding the right-hand side of equation~\cref{eq:gram_representation} and matching the coefficients with those of $f(x)$ yields a system of linear equality constraints on the entries of the Gram matrix $G$. Consequently, the problem of finding an SOS decomposition of the polynomial can be equivalently reformulated as the SDP problem given in  ~\cref{eq:preliminary_sdp} of ~\cref{sec:preliminary}.  

% TODO 换个SOS项数大于一的例子
\begin{example}\cite{parrilo2003semidefinite}
% ref 2008 ex1 类似 选一个例子 （造数据的 3元例子
Consider the bivariate quartic polynomial
$f(\mathbf{x}) = 2x_1^4 + 2x_1^3x_2 - x_1^2x_2^2 + 5x_2^4,
\quad \mathbf{x}=(x_1,x_2)\in\mathbb{R}^2$. 
Let the monomial basis vector be
${\mathbf v}(\mathbf{x}) = [x_1^2,\ x_2^2,\ x_1x_2]^\top$.
Then $f(\mathbf{x})$ can be written in Gram form as
\begin{align}
f(\mathbf{x})
&= {\mathbf v}(\mathbf{x})^\top G\, {\mathbf v}(\mathbf{x})  \nonumber\\
&=
\begin{bmatrix}
x_1^2\\
x_2^2\\
x_1x_2
\end{bmatrix}^{\!\top}
\begin{bmatrix}
q_{11} & q_{12} & q_{13}\\
q_{12} & q_{22} & q_{23}\\
q_{13} & q_{23} & q_{33}
\end{bmatrix}
\begin{bmatrix}
x_1^2\\
x_2^2\\
x_1x_2
\end{bmatrix} \nonumber\\
&= q_{11}x_1^4 + q_{22}x_2^4 + (q_{33}+2q_{12})x_1^2x_2^2 + 2q_{13}x_1^3x_2 + 2q_{23}x_1x_2^3 .
\end{align}
Matching coefficients with the target polynomial $f(\mathbf{x})$ yields the following linear equalities:
\begin{equation}
q_{11}=2,\qquad q_{22}=5,\qquad q_{33}+2q_{12}=-1,\qquad 2q_{13}=2,\qquad 2q_{23}=0.
\label{eq:gram_linear_constraints_example}
\end{equation}
A positive semidefinite matrix $G\succeq 0$ satisfying~\eqref{eq:gram_linear_constraints_example} can be obtained, e.g., via semidefinite programming. One feasible choice is
\[
G=
\begin{bmatrix}
2 & -3 & 1\\
-3 & 5 & 0\\
1 & 0 & 5
\end{bmatrix}
= L^\top L,
\qquad
L=\frac{1}{\sqrt{2}}
\begin{bmatrix}
2 & -3 & 1\\
0 & 1 & 3
\end{bmatrix}.
\]
Consequently, $f(\mathbf{x})$ admits the explicit SOS decomposition
$f(\mathbf{x})
= \frac{1}{2}\left(2x_1^2-3x_2^2+x_1x_2\right)^2
+ \frac{1}{2}\left(x_2^2+3x_1x_2\right)^2$.
\end{example}

% A real symmetric matrix $G$ is positive semidefinite if and only if it admits a factorization
% \[
% G = L L^\top,
% \]
% where $L$ is a (possibly rectangular) real matrix.
% More generally, $G \succeq 0$ if and only if it admits an $LDL^\top$ decomposition with a diagonal matrix $D$ having nonnegative entries. %%%TODO ref

% TODO 注意更新年限的相关文章

\section{Theoretical Foundations of SOS Data Construction}
% \subsection{Theoretical Foundation and Derivations}
% Theorem 
% Proof 
% Example 
% TODO: 降a  确保严谨 完全正确
% SOS and SDP theorem

%  Gershgorin’s circle theorem \cite{gershgorin1931uber} 证明dd矩阵是半正定的

This part provides the formal theoretical guarantees for the data construction methods described in Section \ref{sec: SOS Data Construction Method}. We focus on the spectral properties and decomposition theorems that ensure the generated Gram matrices $\widetilde{G}$ are positive semidefinite.
\begin{figure}[htbp]
  \centering
  \includegraphics[width=0.5\linewidth]{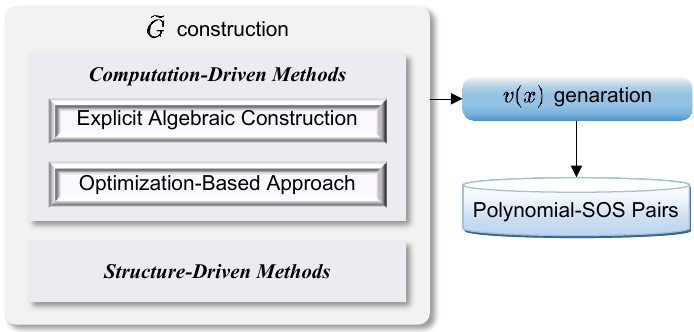}%{figures/data_construction/data construvtion_251114_v2.pdf}%{figures/data construvtion_251112_v2.pdf}%{figures/data construvtion_251112_v1.pdf}
  % \caption{\textbf{New SOS data construction methods}.  
  % }  
  \caption{\textbf{Framework of SOS data construction methods in \cref{sec: SOS Data Construction Method}}. The generation of polynomial-SOS pairs is based on the quadratic form $f(x) = {\mathbf v} (x)^{\mathsf{T}} \widetilde{G} {\mathbf v}(x)$, where the Gram matrix $\widetilde{G}$ is synthesized through either computation-driven or structure-driven approaches.} 
  % \caption{The procedure of polynomial-SOS pairs generation.}
  \label{fig:dataconstruction}
\end{figure}

\subsection{Spectral Foundations for Computation-Driven Methods} 
\label{appendix:foundation_of_Computation-Driven}
The computation-driven methods rely on the relationship between eigenvalues and the PSD property. 

\begin{lemma}[Eigenvalue Shift]
    Let $G \in \mathbb{S}^m$ be a symmetric matrix with real entries. For any $k \in \mathbb{R}$, the eigenvalues of $\widetilde{G} = G - kI$ are given by $\lambda_i(\widetilde{G}) = \lambda_i(G) - k$. Consequently, selecting $k \leq \lambda_{\min}(G)$ 
    is a sufficient condition to ensure %%%%% guarantees that 
    $\widetilde{G} \succeq 0$.
\end{lemma}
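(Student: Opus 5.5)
The argument rests on the spectral theorem for real symmetric matrices. Since $G \in \mathbb{S}^m$, there is an orthogonal matrix $Q$ and a real diagonal matrix $\Lambda = \mathrm{diag}(\lambda_1(G),\ldots,\lambda_m(G))$ with $G = Q\Lambda Q^{\mathsf T}$. The identity satisfies $I = QQ^{\mathsf T}$, so it is diagonalized by the same $Q$. Hence
\[
\widetilde{G} = G - kI = Q(\Lambda - kI)Q^{\mathsf T},
\]
and $\Lambda - kI$ is diagonal with entries $\lambda_i(G) - k$. This identifies the spectrum of $\widetilde{G}$ as $\{\lambda_i(G)-k\}$, with multiplicities preserved.

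A more elementary route is also available: if $G\mathbf{u} = \lambda\mathbf{u}$ with $\mathbf{u}\neq 0$, then $(G-kI)\mathbf{u} = (\lambda - k)\mathbf{u}$. Equivalently, one can note that $\det(G - kI - \mu I) = \det(G - (\mu+k)I)$, so the characteristic polynomials correspond under the shift $\mu \mapsto \mu + k$. I would present the diagonalization argument, since it handles multiplicities without extra comment.

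For the consequence, recall that a real symmetric matrix is PSD if and only if all its eigenvalues are nonnegative. This follows from $\mathbf{x}^{\mathsf T}\widetilde{G}\mathbf{x} = \sum_i (\lambda_i(G)-k)\,y_i^2$ with $\mathbf{y} = Q^{\mathsf T}\mathbf{x}$. If $k \le \lambda_{\min}(G)$, then every $\lambda_i(G) - k \ge \lambda_{\min}(G) - k \ge 0$, so every term in this sum is nonnegative and $\widetilde{G} \succeq 0$.

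I would also remark on how this connects to the construction in Section~\ref{sec: SOS Data Construction Method}. The choice $k = \lfloor \lambda_{\min}\rfloor$ satisfies $k \le \lambda_{\min}$. Moreover, $k$ is an integer, so $\widetilde{G}$ keeps integer entries. There is no real obstacle in the proof; the only point needing care is to invoke symmetry so that the spectral theorem (real eigenvalues, orthogonal diagonalization) applies.
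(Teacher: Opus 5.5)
Your proof is correct and takes essentially the paper's approach. The paper gives no separate proof of this lemma, only the one-line remark in the main-text description of the explicit algebraic construction that the shifted matrix has smallest eigenvalue $\lambda_{\min}-k\ge 0$, and your diagonalization $G-kI=Q(\Lambda-kI)Q^{\mathsf T}$ makes that remark rigorous; note that your integrality comment relies on the main text's assumption that $G$ has integer entries, which the lemma itself does not impose.
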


To generate Gram matrices with controlled sparsity and coefficient types, we employ a constructive approach based on quadratic forms.
\begin{lemma}%[Constructive Factorization Property]
     Let $L \in \mathbb{Z}^{k \times m}$ be an arbitrary matrix (representing the coefficients of $k$ basis polynomials) and $D = \text{diag}(d_1, \dots, d_k)$ be a diagonal matrix with $d_i > 0$. The resulting matrix
    \begin{equation}
    \widetilde{G} = L^\top D L
    \end{equation}
    is guaranteed to be positive semi-definite ($\widetilde{G} \succeq 0$).
\end{lemma}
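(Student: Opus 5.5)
The plan is to verify the two defining properties of positive semidefiniteness directly, symmetry and nonnegativity of the quadratic form, by rewriting $x^\top \widetilde{G} x$ as a weighted sum of squares. This is exactly the Gram-form picture of \cref{theorem:Gram_matrix_representation}, in which each row of $L$ supplies the coefficients of one squared basis polynomial. No spectral argument such as the Eigenvalue Shift lemma is needed.

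\emph{Step 1 (symmetry).} Since $D$ is diagonal, $D^\top = D$. Hence
\[
\widetilde{G}^\top = (L^\top D L)^\top = L^\top D^\top L = \widetilde{G},
\]
so $\widetilde{G}\in\mathbb{S}^m$. Because $L$ has integer entries and $D$ is real, $\widetilde{G}$ is a real matrix.

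\emph{Step 2 (quadratic form).} For arbitrary $x\in\mathbb{R}^m$, set $y = Lx\in\mathbb{R}^k$. Then
\[
x^\top \widetilde{G} x = (Lx)^\top D (Lx) = \sum_{i=1}^k d_i y_i^2 .
\]
Each term is nonnegative because $d_i>0$ and $y_i^2\ge 0$. Therefore $x^\top\widetilde{G}x\ge 0$ for all $x$, which is the definition of $\widetilde{G}\succeq 0$.

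An equivalent route is to write $\widetilde{G} = (D^{1/2}L)^\top (D^{1/2}L)$, where $D^{1/2} = \mathrm{diag}(\sqrt{d_1},\dots,\sqrt{d_k})$ is well defined since each $d_i>0$. Any matrix of the form $M^\top M$ is PSD.

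Both steps are routine, so there is no substantive obstacle. The only points to watch are the following:
\begin{itemize}
\item The dimensions must be consistent: the lemma takes $L\in\mathbb{Z}^{k\times m}$, whereas the main text writes $m\times k$. I would use $L\in\mathbb{Z}^{k\times m}$ so that $L^\top D L$ is $m\times m$.
\item Only $d_i\ge 0$ is needed for semidefiniteness, not $d_i>0$.
\item The rank of $\widetilde{G}$ is at most $k$, which matches the interpretation of $k$ as the number of squares.
\end{itemize}
Finally, I would note the consequence for data generation. Substituting $x=\mathbf{v}(x)$ gives $f(x)=\sum_i d_i\,(L_i\mathbf{v}(x))^2$, an explicit SOS certificate with integer-coefficient squared polynomials.
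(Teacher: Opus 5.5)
Your proof is correct and is essentially what the paper relies on: the paper gives no written proof of this lemma (the main text only says $\widetilde{G}$ is PSD ``by construction''), and your identity $x^\top \widetilde{G} x = \sum_i d_i (Lx)_i^2 \ge 0$, equivalently $\widetilde{G} = (D^{1/2}L)^\top (D^{1/2}L)$, is the standard argument behind that claim. Your side remarks are also accurate: the appendix's $L \in \mathbb{Z}^{k\times m}$ is the dimensionally consistent reading, since the main text's $m\times k$ leaves $L^\top D L$ undefined unless $k=m$, and $d_i \ge 0$ already suffices.
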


% \subsection{Diagonally Dominant (dd) and Scaled Diagonally Dominant (sdd) Cones}
\subsection{Diagonally Dominant (dd) and Scaled Diagonally Dominant (sdd) Cones}
\label{appendix:foundation_of_Structure-Driven} 
% Appendix~\ref{appendix:foundation_of_Structure-Driven} provides the theoretical foundations and more details of structure-driven methods. 

Structure-driven methods restrict the search space to specific sub-cones of the PSD cone to simplify formal verification.

\begin{definition} \cite{ahmadi2019dsos}  %dsos
A matrix $ G \in \mathbb{R}^{m \times m} $  is defined as diagonally dominant (dd) if it satisfies the following condition:
  \begin{align}
         G_{ii} \ge \sum_{j \ne i} |G_{ij}|, \quad \forall i.
  \end{align}
% \end{definition} 
% \begin{definition} \cite{}
    A symmetric matrix $G$ is called scaled diagonally dominant (sdd) if there exists a diagonal matrix $D$ with strictly positive diagonal elements such that the matrix $DGD$ is diagonally dominant.
\end{definition}

% TODO
% 加 解释 为dd和sdd都是半正定矩阵
% \begin{theorem}[Gershgorin Circle Theorem \cite{gershgorin1931uber}]
    
% \end{theorem}

According to Gershgorin’s circle theorem \cite{gershgorin1931uber}, any symmetric diagonally dominant matrix with non-negative diagonal entries is positive semi-definite.  %%%TODO according to 

The following lemma provides the theoretical justification for the extreme ray decomposition used in our framework:
\begin{lemma} \cite{barker1975cones}%\cite{}
    The cone of $m \times m$ symmetric dd matrices is the set of all matrices that can be represented as:
\begin{equation}
\widetilde{G} = \sum_{i=1}^{m^2} \eta_i U_i, \quad \eta_i \geq 0
\end{equation}
where each $U_i = \mathbf{u}_i \mathbf{u}_i^\top$ is a rank-one matrix and $\mathbf{u}_i \in \mathbb{R}^m$ is a vector with at most two non-zero components, each belonging to $\{\pm 1\}$, that is, $\mathbf{u}_i \in \{\pm e_k : k \in [m]\} \,\cup\, \{\pm e_k \pm e_\ell : 1 \le k < \ell \le m\}$.
\end{lemma}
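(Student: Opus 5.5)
We prove the two inclusions between the dd cone and the cone generated by the matrices $\mathbf{u}\mathbf{u}^\top$. Here $\mathbf{u}$ ranges over $\{\pm e_k\}\cup\{\pm e_k\pm e_\ell\}$. A vector and its negative give the same rank-one matrix, so only $m + 2\binom{m}{2} = m^2$ distinct generators occur. This matches the index range $i=1,\dots,m^2$ in the statement. Throughout, "dd" means symmetric with $G_{ii}\ge\sum_{j\ne i}|G_{ij}|$; in particular the diagonal is nonnegative.

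\textbf{Easy inclusion: every such combination is dd.}
\begin{itemize}
\item The dd condition consists of inequalities that are positively homogeneous and subadditive. Indeed, $|a+b|\le|a|+|b|$ for off-diagonal entries, while diagonal entries add linearly.
\item Hence the symmetric dd matrices form a convex cone, and it suffices to check that each generator is dd.
\item $e_ke_k^\top$ has a single entry equal to $1$ on the diagonal, so it is trivially dd.
\item $(e_k\pm e_\ell)(e_k\pm e_\ell)^\top$ has entries $1$ at $(k,k)$ and $(\ell,\ell)$, and $\pm1$ at $(k,\ell)$ and $(\ell,k)$. Each row with a nonzero entry has diagonal $1$ and off-diagonal absolute sum $1$, so it is dd.
\end{itemize}

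\textbf{Main inclusion: every symmetric dd matrix $G$ is such a combination.} We construct the decomposition explicitly.
\begin{itemize}
\item For each pair $k<\ell$, set
\[
\mathbf{u}_{k\ell}=e_k+\operatorname{sgn}(G_{k\ell})\,e_\ell ,
\]
taking $\operatorname{sgn}(0)=1$. Assign it the weight $\eta_{k\ell}=|G_{k\ell}|\ge 0$.
\item The matrix $\eta_{k\ell}\mathbf{u}_{k\ell}\mathbf{u}_{k\ell}^\top$ has off-diagonal entry $|G_{k\ell}|\operatorname{sgn}(G_{k\ell})=G_{k\ell}$ at $(k,\ell)$ and $(\ell,k)$. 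It also contributes $|G_{k\ell}|$ to both diagonal entries $(k,k)$ and $(\ell,\ell)$.
\item Summing over all pairs reproduces every off-diagonal entry of $G$ exactly. Diagonal position $i$ receives $\sum_{j\ne i}|G_{ij}|$ in total.
\item Finally, add $\eta_i\,e_ie_i^\top$ with
\[
\eta_i = G_{ii}-\sum_{j\ne i}|G_{ij}|.
\]
This is nonnegative precisely by diagonal dominance, and it makes the diagonal match.
\item Generators of the opposite sign pattern receive weight $0$.
\end{itemize}
The resulting combination equals $G$ and uses at most $m^2$ generators with nonnegative coefficients.

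\textbf{Where care is needed.} The main inclusion is the only step that requires thought. Its one delicate point is the bookkeeping of diagonal contributions, namely that each pair $\{k,\ell\}$ is counted once for row $k$ and once for row $\ell$. That accounting is exactly what turns the dd inequality into nonnegativity of the residual weights $\eta_i$.

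As a consequence, every symmetric dd matrix is PSD, since it is a nonnegative sum of PSD rank-one matrices. This recovers the Gershgorin-based remark without needing Gershgorin's theorem.
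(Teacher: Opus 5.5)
Your proof is correct and complete for the lemma as stated. Closure of the dd cone under nonnegative combinations, plus the generator check, gives one inclusion. The explicit weights $|G_{k\ell}|$ on $e_k+\operatorname{sgn}(G_{k\ell})e_\ell$, together with the residual weights $G_{ii}-\sum_{j\ne i}|G_{ij}|\ge 0$ on $e_ie_i^\top$, give the other.

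The paper supplies no argument of its own here; it simply cites Barker--Carlson for what it calls the ``extreme ray decomposition'' of the dd cone. Compared with that citation, your route is elementary and constructive. It writes down the decomposition in closed form. It shows that only $m+\binom{m}{2}$ of the $m^2$ generators are ever needed, one sign pattern per pair. It also gives dd $\Rightarrow$ PSD directly, which makes the paper's separate appeal to Gershgorin unnecessary.

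What the cited result provides beyond yours is minimality: each $\mathbf{u}\mathbf{u}^\top$ spans an extreme ray, so no generator can be discarded. The lemma does not assert this. If you want to justify the paper's ``extreme ray'' wording, it is a short addition.

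\begin{itemize}
\item Suppose $\mathbf{u}\mathbf{u}^\top=A+B$ with $A,B$ dd. The zero diagonal entries outside the support of $\mathbf{u}$ force the corresponding rows and columns of $A$ and $B$ to vanish.
\item When $\mathbf{u}=e_k\pm e_\ell$, consider the chain $1=|A_{k\ell}+B_{k\ell}|\le|A_{k\ell}|+|B_{k\ell}|\le A_{kk}+B_{kk}=1$ and its analogue for row $\ell$.
\item These force $A_{kk}=A_{\ell\ell}=|A_{k\ell}|$, with $A_{k\ell}$ of the same sign as $(\mathbf{u}\mathbf{u}^\top)_{k\ell}$.
\item Hence $A$ and $B$ are nonnegative multiples of $\mathbf{u}\mathbf{u}^\top$.
\end{itemize}
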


% For the scaled diagonally dominant (sdd) construction, we utilize the property of diagonal congruence to extend the flexibility of the dd cone: 
For the construction of scaled diagonally dominant (sdd) matrices, we utilize the properties of diagonal congruence transformation: 
\begin{lemma} 
    A matrix $G$ is sdd if there exists a positive definite diagonal matrix $D \succ 0$ such that $DGD$ is dd. 
    By Sylvester’s Law of Inertia, diagonal congruence transformations preserve the inertia of a matrix and, in particular, the signs of its eigenvalues. Since $DGD$ is dd, it follows that $DGD \succeq 0$. Consequently,
    $G = D^{-1}(DGD)D^{-1} \succeq 0$. 
\end{lemma}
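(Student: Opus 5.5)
The argument has two parts: first show that a diagonally dominant symmetric matrix $M$ is positive semidefinite, then transfer this from $DGD$ to $G$ by congruence. For the first part I would argue directly rather than rely only on Gershgorin's theorem. Here the dd condition is $M_{ii}\ge\sum_{j\ne i}|M_{ij}|$, which in particular forces $M_{ii}\ge 0$. For any $y\in\mathbb{R}^m$, the inequality $2|M_{ij}y_iy_j|\le |M_{ij}|(y_i^2+y_j^2)$ gives
\[
y^\top M y \;\ge\; \sum_i M_{ii}y_i^2-\sum_{i\ne j}|M_{ij}||y_i||y_j| \;\ge\; \sum_i\Bigl(M_{ii}-\sum_{j\ne i}|M_{ij}|\Bigr)y_i^2 \;\ge\; 0 .
\]
Alternatively, the extreme-ray lemma of \cite{barker1975cones} stated above writes $M$ as a nonnegative combination of rank-one matrices $\mathbf{u}_i\mathbf{u}_i^\top$, and each of these is PSD. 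Either route gives $M=DGD\succeq 0$.

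For the transfer step, I would avoid Sylvester's law of inertia, since a direct computation suffices. Let $D\succ 0$ be diagonal with $DGD$ dd, so $D^{-1}$ exists and is diagonal with positive entries. Given any $x\in\mathbb{R}^m$, set $y=D^{-1}x$. Because $D$ is symmetric,
\[
x^\top G x=(D^{-1}x)^\top (DGD)(D^{-1}x)=y^\top(DGD)y\ge 0 .
\]
Symmetry of $G$ is part of the sdd definition, so this shows $G\succeq 0$. This is exactly the identity $G=D^{-1}(DGD)D^{-1}$ appearing in the statement, written out at the level of quadratic forms. It also matches the inertia argument, since congruence by an invertible matrix preserves the signs of eigenvalues.

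I do not expect a real obstacle. The only point that needs care is that diagonal dominance as defined implicitly requires nonnegative diagonal entries: Gershgorin's theorem by itself only places each eigenvalue in a disc centered at $M_{ii}$, so nonnegativity of the eigenvalues needs $M_{ii}\ge\sum_{j\ne i}|M_{ij}|\ge 0$. The direct quadratic-form bound above handles this point automatically.
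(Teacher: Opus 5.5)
Your proof is correct and has the same two-step structure as the paper's: first show $DGD\succeq 0$ from diagonal dominance, then transfer positive semidefiniteness to $G=D^{-1}(DGD)D^{-1}$ by congruence. The paper justifies these two steps by citing Gershgorin's circle theorem and Sylvester's law of inertia. You instead verify them directly, using the bound $2|M_{ij}y_iy_j|\le|M_{ij}|(y_i^2+y_j^2)$ and the substitution $y=D^{-1}x$, which makes the argument self-contained. One small point: your row-sum bound uses $|M_{ij}|=|M_{ji}|$, which holds because $G$, and hence $DGD$, is symmetric.
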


% A symmetric matrix $G \in \mathbb{R}^{n \times n}$ is called sdd if there 
% exists a positive definite diagonal matrix 
% $D = \mathrm{diag}(d_1,\dots,d_n) \succ 0$ such that
% $G_{\mathrm{dd}} :=D G D$ is dd. 
Since $G$ is dd, it admits the decomposition $\sum_{i=1}^{m^{2}} \eta_i U_i $. 
Applying the diagonal congruence transformation $\widetilde{G} = D^{-1} G D^{-1}$ yields the sdd matrix $\widetilde{G}$ as a non-negative linear combination of scaled rank-one matrices: 
\begin{align}
    \widetilde{G} 
    % = D^{-1} G_{\mathrm{DD}} D^{-1} 
    = \sum_{i=1}^{m^{2}} \eta_i \widehat{U}_i,
\end{align}
where $\widehat{U}_i := D^{-1} U_i D^{-1} 
    = \mathbf{w}_i \mathbf{w}_i^{\top}$ and $\mathbf{w}_i := D^{-1} \mathbf{u}_i$.   
% $\widehat{V}_i := D^{-1} V_i D^{-1} 
%     = \mathbf{w}_i \mathbf{w}_i^{\top}, 
%     \quad \mathbf{w}_i := D^{-1} \mathbf{v}_i$. 
% \begin{align}
%     \widehat{V}_i := D^{-1} V_i D^{-1} 
%     = \mathbf{w}_i \mathbf{w}_i^{\top}, 
%     \quad \mathbf{w}_i := D^{-1} \mathbf{v}_i.
% \end{align}
Each vector $\mathbf{w}_i$ still has at most two nonzero components, whose
magnitudes are determined by the diagonal entries of $D^{-1}$.

% This result justifies the scaled rank-one decomposition
% $\widetilde{G} = \sum \eta_i \widehat{V}_i$
%  used in Section~\ref{sec: SOS Data Construction Method}.

% \section{Theoretical Foundations for Symbolic Correction}
\section{Further Details of Symbolic Correction Module}
\label{appendix:theorey_symbolic}
This section presents the theoretical foundations of the symbolic correction module described in Section~\ref{sec:Symbolic Correction}. The purpose of this module is to convert the SOS structural conjecture obtained from the SOS conjecture stage into an exact rational sum-of-squares certificate, ensuring suitability for formal verification.

%%% TODO 改表述 
% \subsection{Convergence of Gauss-Newton Refinement for SOS}
\subsection{Gauss-Newton Refinement for SOS}
\label{appendix:newton_}
Suppose a polynomial $f(x) \in \mathbb{R}[x]$ of degree $2d$ is conjectured to admit a sum-of-squares (SOS) decomposition. 
 Let ${\mathbf v}(x)$ denote the monomial basis consisting of all monomials of degree at most $d$. 
 The SOS condition is equivalently expressed in Gram matrix form as 
$f(x) = {\mathbf v}(x)^\top G {\mathbf v}(x)$, 
where $G \succeq 0$ is a symmetric positive semidefinite Gram matrix.
In practice, numerical solvers only provide an approximate Gram matrix $G$, such that 
$f(x) \approx {\mathbf v}(x)^\top G {\mathbf v}(x)$. 

To reduce numerical errors and improve stability prior to exact rational recovery, we apply a structure-preserving Gauss–Newton refinement.

\subsubsection{Numerical Newton Refinement of an Approximate Gram Matrix}
\label{app:Newton}

\paragraph{Factorization-based parameterization.}
Assume $G\succeq 0$ numerically and compute a factorization
\begin{equation}
G \approx L L^\top,
\label{eq:app_chol}
\end{equation}
where $L\in\mathbb{R}^{m\times k}$ and $k=\mathrm{rank}(G)$.
% (numerical rank).
Then
\begin{equation}
{\mathbf v}(\mathbf{x})^\top G\, {\mathbf v}(\mathbf{x})
\approx \sum_{i=1}^k \ell_i(\mathbf{x})^2,
\quad
\ell_i(\mathbf{x}) := \sum_{\alpha} c_{i,\alpha} \mathbf{x}^{\alpha}. 
% \ell_i(\mathbf{x}) := L_{:,i}^\top {\mathbf v}(\mathbf{x}).  %%% TODO
\label{eq:app_sos_from_L}
\end{equation}

We refine the coefficient vectors $L_{:,i}$ via a Gauss--Newton update so that the induced polynomial
$\sum_i \ell_i(\mathbf{x})^2$ matches $f(\mathbf{x})$ as accurately as possible.  %%% 换表述

\begin{proposition}%[Convergence of the Gauss–Newton Method] 
By applying a Cholesky factorization $G = LL^{\top}$ or an $LDL^{\top}$ decomposition to the Gram matrix $G$, and treating the coefficients of the SOS factors as optimization variables, the Gauss–Newton iteration exhibits rapid convergence when applied to SOS problems. Provided that sufficient numerical precision is employed, the backward error $\theta$ 
can be reduced to arbitrarily small values. 
% Let $c$ denote the coefficient vector of the SOS factor polynomials, such that
% $f(x) \approx \sum_{i=1}^{k} h_i(x,c)^2$.
% The Gauss–Newton iteration computes a correction $\Delta c$ by solving the following linearized least-squares problem:
% $$
% \min_{\Delta c}
% \left|
% f(x) - \sum_{i=1}^{k} \bigl(h_i(x,c) + \nabla_c h_i(x,c),\Delta c \bigr)^2
% \right|_2 .
% $$
% This iterative procedure is locally convergent with quadratic convergence to a local optimal solution $c^\ast$. 
\end{proposition}

This yields a high-accuracy numerical approximation, which serves as a reliable foundation for the subsequent rational recovery procedure.

\paragraph{Residual and stopping criterion.}
Let $\mathrm{coeff}(\cdot)$ denote the coefficient vector of a polynomial in the chosen monomial basis.
Define the residual
\begin{equation}
r(L) := \mathrm{coeff}\!\left(\sum_{i=1}^k \ell_i(\mathbf{x})^2 - f(\mathbf{x})\right),
\qquad
\theta := \|r(L)\|_2 .
\label{eq:app_residual}
\end{equation}

Here, $\theta$ denotes the backward error of the numerical Gram matrix $G$.
We apply the Gauss–Newton iteration to compute the coefficient correction terms $\Delta c_{i,\alpha}$ so as to minimize $\theta$, as described in \cref{math:newton_expression} of \cref{sec:Newton}. 

 Simultaneously, the Gram matrix is updated according to
$G \leftarrow G + \Delta G$, 
where the correction term $\Delta G$ can be expressed as
\[
\sum_{i=1}^{k} \left( \sum_{\alpha} \Delta c_{i,\alpha} x^{\alpha} \right)^2
= {\mathbf v}(x)^{\top} \Delta G\, {\mathbf v}(x).
\]

The iteration terminates
once $\theta<\tau$ for a prescribed tolerance $\tau>0$.

%%% TODO 应该只写牛顿迭代的具体理论？  不应该再介绍方法 ？

% \subsection{Existence of Rational SOS Certificates}
\subsection{Detailed Formulation of Exact Rational Recovery}
\label{app:RationalRecovery}

The following proposition establishes the fundamental equivalence between the existence of a rational sum-of-squares (SOS) decomposition and the existence of a Gram matrix with rational entries. 

\begin{proposition} \cite{PEYRL2008269}%[Rational SOS decomposition $\Leftrightarrow$ rational Gram matrix]
\label{prop:rational_sos_gram}
Let $f(\mathbf{x}) \in \mathbb{Q}[\mathbf{x}]$ be a polynomial, and let ${\mathbf v}(\mathbf{x})$ denote a fixed vector of monomials. 
Then the following statements are equivalent:
\begin{enumerate}
    \item The polynomial $f(\mathbf{x})$ admits a rational SOS decomposition, that is, there exist polynomials $f_i(\mathbf{x}) \in \mathbb{Q}[\mathbf{x}]$ such that
    \[
        f(\mathbf{x}) = \sum_{i=1}^{r} f_i(\mathbf{x})^2 .
    \]
    \item There exists a symmetric positive semidefinite Gram matrix $G \in \mathbb{S}^m \cap \mathbb{Q}^{m \times m}$ such that
    \[
        f(\mathbf{x}) = {\mathbf v}(\mathbf{x})^\top G\, {\mathbf v}(\mathbf{x}), \qquad G \succeq 0 .
    \]
\end{enumerate}
\end{proposition}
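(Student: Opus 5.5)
The plan is to prove the two implications separately. Throughout, I read ``fixed vector of monomials'' as a vector ${\mathbf v}(\mathbf{x})$ that contains every monomial which can occur in a summand of an SOS decomposition of $f$. The full basis of all monomials of degree at most $d$, as in \cref{theorem:Gram_matrix_representation}, satisfies this. This is the standard fact that each $f_i$ has degree at most $d=\deg f/2$, since the leading forms of the squares cannot cancel. I would state this convention explicitly at the start, because without it direction (1)$\Rightarrow$(2) fails for an arbitrary monomial vector.

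\textbf{(1)$\Rightarrow$(2).} Suppose $f=\sum_{i=1}^r f_i^2$ with $f_i\in\mathbb{Q}[\mathbf{x}]$. By the convention above, each $f_i$ can be written as $f_i=c_i^\top {\mathbf v}(\mathbf{x})$ with $c_i\in\mathbb{Q}^m$. Set $G:=\sum_i c_ic_i^\top$. This matrix is symmetric with rational entries and is PSD, since $y^\top G y=\sum_i (c_i^\top y)^2\ge 0$. Moreover ${\mathbf v}^\top G{\mathbf v}=\sum_i (c_i^\top{\mathbf v})^2=f$.

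\textbf{(2)$\Rightarrow$(1).} Let $G\in\mathbb{S}^m\cap\mathbb{Q}^{m\times m}$ be PSD with $f={\mathbf v}^\top G{\mathbf v}$. The steps are as follows.
\begin{enumerate}
\item Run symmetric Gaussian elimination over $\mathbb{Q}$ to obtain $G=LDL^\top$. Here $L$ is rational and unit lower triangular, and $D=\mathrm{diag}(d_1,\dots,d_m)$ is rational with $d_j\ge 0$.
\item Obtain $L$ and $D$ by performing elimination steps on $G$ using only rational arithmetic. When a zero pivot appears, positive semidefiniteness forces the whole corresponding row and column of the current Schur complement to vanish, because every $2\times2$ principal minor $g_{jj}g_{kk}-g_{jk}^2$ is nonnegative. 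We then simply skip that index and set $d_j=0$. Each Schur complement of a PSD matrix is again PSD, so every pivot is a nonnegative rational.
\item Define the rational polynomials $q_j:=(L^\top{\mathbf v})_j$. This gives $f=\sum_j d_j q_j^2$.
\item Remove the weights $d_j$. By Lagrange's four-square theorem, writing $d_j=a/b$ with $a\ge 0$ and $b>0$ integers, we have $d_j=ab/b^2$, and $ab$ is a sum of four integer squares. Hence $d_j=\sum_{t=1}^4 r_{jt}^2$ with $r_{jt}\in\mathbb{Q}$.
\item Conclude that $f=\sum_{j,t}(r_{jt}q_j)^2$, which is a rational SOS decomposition.
\end{enumerate}

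The main obstacle is the rank-deficient case in the $LDL^\top$ step. There I must check carefully that a zero pivot really forces a zero row and column, so that elimination can continue over $\mathbb{Q}$ without any square roots. A Cholesky factorization would introduce irrational entries $\sqrt{d_j}$, which is exactly why I use $LDL^\top$ together with the four-square argument instead.
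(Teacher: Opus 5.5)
Your proof is correct. The paper gives no proof of its own for this proposition; it only cites Peyrl and Parrilo. Your $LDL^\top$-over-$\mathbb{Q}$ argument for (2)$\Rightarrow$(1) is the same mechanism the paper relies on. After the Case~1 projection it computes an exact $LDL^{\top}$ factorization of $\widetilde G$ and writes $f=\mathbf v^\top LDL^\top\mathbf v$. Its Lean certificates are then lists of pairs $(p,k)$ encoding $\sum p\,k^2$, where $p=d_j\ge 0$ is rational, for example $(77/9,\dots)$ in the appendix examples.

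The one real difference is your four-square step. You need it to reach the unweighted form $\sum f_i^2$ stated in (1). The paper instead keeps the nonnegative rational weights inside the certificate, which is enough for nonnegativity and lets \emph{positivity} close the goal without decomposing any $d_j$ into squares.

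Your explicit convention on $\mathbf v$ is a genuine correction, not pedantry. For an arbitrary fixed monomial vector, (1)$\Rightarrow$(2) is false: take $f=x_1^2$ and $\mathbf v=(1)$. The proposition must therefore be read with $\mathbf v$ containing every monomial that can occur in the $f_i$, for instance all monomials of degree at most $d=\deg f/2$ as in \cref{theorem:Gram_matrix_representation}. Your leading-form argument justifies that choice.
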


% \begin{proposition} \cite{PEYRL2008269}
%     The 
% \end{proposition}

For the numerical solution $G_N$ obtained after the Gauss–Newton refinement, when the backward error $\theta$ is sufficiently small, one can recover from $G_N$ an exact rational PSD matrix 
$\widetilde{G}$, which satisfies the polynomial identity exactly, thereby yielding a certified rational SOS certificate. Which satisfies the following identity:

\begin{equation}
f(x) - {\mathbf v}(x)^\top \widetilde{G} {\mathbf v}(x) = 0,
\qquad \widetilde{G} \succeq 0 .
\label{eq:exact_sos_certificate_goal}
\end{equation}

The matrix $G$ is projected onto the following affine hyperplane: 
\begin{equation}
\mathcal{X}
= \left\{
G \;\middle|\;
G^{\top} = G, f(x) - {\mathbf v}(x)^{\top} G\, {\mathbf v}(x) = 0\;
\right\}.
\label{eq:hyperplane}
\end{equation}
Suppose that the affine hyperplane defined by \cref{eq:hyperplane} can be represented by a linear system $Ay = b$, 
where $y$ consist of the entries of $G$. 
If the matrix $A$ has full row rank, then such a hyperplane is guaranteed to exist. 

The recovery strategy depends on whether the refined Gram matrix lies in the interior of the PSD cone. %% TODO

An exact solution satisfying \cref{eq:exact_sos_certificate_goal} can be obtained via the following two approaches:
\begin{itemize}
    \item Case 1: If the matrix $G_N$ is of full rank, the solution is recovered by applying an orthogonal projection method.
    \item Case 2: Otherwise, a rational vector recovery method is employed.
\end{itemize}

\paragraph{Case 1: Interior Point Solution ($G_N$ is full rank).} The solution of \cref{eq:exact_sos_certificate_goal} lies in the intersection of the affine hyperplane $\mathcal{X}$ and the positive semidefinite cone.
To compute $\widetilde{G}$, the corresponding orthogonal projection can be obtained by solving the following least-squares problem. 

\begin{equation}
\min_{G \succeq 0}
\;\; \lVert G_N - G \rVert_F^2
\quad
\text{s.t. }
f(x) = {\mathbf v}(x)^{\top} G\, {\mathbf v}(x).
\label{eq:problem}
\end{equation}

Next, to verify whether the recovered rational solution $\widetilde{G}$ is a symmetric positive semidefinite matrix, we compute the exact $LDL^{\top}$ decomposition of $\widetilde{G}$.

\begin{equation}
    f(x) = {\mathbf v}(x)^\top \widetilde{G} {\mathbf v}(x) = {\mathbf v}(x)^\top LDL^\top {\mathbf v}(x)
\end{equation}

\begin{theorem}
    Let $G_N$ be the refined numerical solution whose backward error satisfies $\theta < \tau$, and let $Ay = b$ be the linear system associated with the affine hyperplane defined by \cref{eq:hyperplane}.
Suppose that $\widetilde{G}$ is the optimal rational solution of the least-squares \cref{eq:problem}, and that the matrix $A$ has full row rank.
If the minimal eigenvalue $\lambda$ of $\widetilde{G}$ satisfies
\begin{equation}
    \lambda > \lVert G_N \rVert_F^2 \kappa_2^2(A)\tau^2,
\end{equation}

then $\widetilde{G}$ is an exact solution of \cref{eq:exact_sos_certificate_goal}. 
\end{theorem}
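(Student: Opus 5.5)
The plan is to verify the two requirements of \cref{eq:exact_sos_certificate_goal} separately: the exact polynomial identity, and positive semidefiniteness of $\widetilde{G}$.

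\textbf{Step 1 (exact identity).} I would first show that the identity holds by construction. Since $f\in\mathbb{Q}[\mathbf{x}]$ and ${\mathbf v}(\mathbf{x})$ is a fixed monomial vector, coefficient matching gives a linear system $Ay=b$ with $A$ and $b$ rational. Because $A$ has full row rank, $AA^\top$ is invertible over $\mathbb{Q}$. Writing $y_N$ for the vectorized (rationally rounded) $G_N$, the least-squares problem \cref{eq:problem}, with the cone constraint dropped, has the closed-form solution
\[
\tilde y = y_N - A^\top (AA^\top)^{-1}(Ay_N-b).
\]
This is a rational vector satisfying $A\tilde y=b$ exactly. Hence $\widetilde{G}\in\mathcal{X}$, that is, $f(\mathbf{x})={\mathbf v}(\mathbf{x})^\top\widetilde{G}\,{\mathbf v}(\mathbf{x})$ identically. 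Symmetry is preserved because the symmetric parametrization is built into $y$.

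\textbf{Step 2 (size of the correction).} Next I would bound the distance moved by the projection. Since $\|Ay_N-b\|_2$ is the coefficient residual, it equals the backward error, so $\|Ay_N-b\|_2=\theta<\tau$. Then
\[
\|\widetilde{G}-G_N\|_F=\|A^\top(AA^\top)^{-1}(Ay_N-b)\|_2\le \sigma_{\min}(A)^{-1}\,\tau .
\]
I would rewrite $\sigma_{\min}(A)^{-1}$ as $\kappa_2(A)/\|A\|_2$. To reach the form $\|G_N\|_F^2\kappa_2^2(A)\tau^2$, I would square the estimate and use the normalization $\|A\|_2\gtrsim 1$ (all entries of $A$ are nonnegative integers coming from monomial multiplicities). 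I would then absorb the remaining factor using the scale of $G_N$, following the Peyrl--Parrilo argument \cite{PEYRL2008269}, where the same squared relative bound appears. I expect this bookkeeping of constants to be the main obstacle. If it does not close exactly, I would prove the cleaner sufficient condition $\lambda>\kappa_2(A)\tau/\|A\|_2$ and argue that it is implied by the stated one in the regime $\tau\ll 1$.

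\textbf{Step 3 (positive semidefiniteness).} I would conclude with Weyl's inequality:
\[
\lambda_{\min}(\widetilde{G})\ \ge\ \lambda_{\min}(G_N)-\|\widetilde{G}-G_N\|_2\ \ge\ \lambda_{\min}(G_N)-\|\widetilde{G}-G_N\|_F .
\]
Under the hypothesis on $\lambda$, the right-hand side is positive, so $\widetilde{G}\succ 0$. Here I read $\lambda$ as the minimal eigenvalue of the full-rank numerical matrix in Case 1. If $\lambda$ is instead literally the minimal eigenvalue of $\widetilde{G}$, this step becomes trivial: the hypothesis forces $\lambda>0$, and hence $\widetilde{G}\succ 0$ directly.

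\textbf{Step 4 (certificate).} Finally, an exact $LDL^\top$ factorization of the rational positive definite matrix $\widetilde{G}$ has $D$ with positive rational diagonal entries. This yields the rational SOS decomposition, consistent with \cref{prop:rational_sos_gram}, and therefore $\widetilde{G}$ solves \cref{eq:exact_sos_certificate_goal}.
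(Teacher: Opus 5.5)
There is a genuine gap in what you treat as the main argument: Steps 2--3 with $\lambda=\lambda_{\min}(G_N)$ and $\widetilde G$ the affine projection without the cone constraint. It is not a matter of constants.

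Your bound $\lVert\widetilde G-G_N\rVert_F\le\tau/\sigma_{\min}(A)$ is fine. Weyl then requires $\lambda>\lVert\widetilde G-G_N\rVert_F$, a threshold linear in $\tau$, whereas the hypothesis $\lambda>\lVert G_N\rVert_F^2\kappa_2^2(A)\tau^2$ is quadratic in $\tau$. Squaring your estimate gives at best $\lVert\widetilde G-G_N\rVert_F^2<\lambda$, which does not yield $\lVert\widetilde G-G_N\rVert_F<\lambda$ when $\lambda<1$.

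Your fallback also runs the wrong way. The stated threshold divided by $\kappa_2(A)\tau/\lVert A\rVert_2$ equals $\lVert G_N\rVert_F^2\kappa_2(A)\lVert A\rVert_2\,\tau\to0$. So for $\tau\ll1$ the stated hypothesis is the weaker one and does not imply yours.

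Under this reading the claim is in fact false. Let $f=(x^2-y^2)^2$, $\mathbf v=(x^2,xy,y^2)^\top$, $w=(1,0,-1)^\top$, and $G_N=ww^\top+\epsilon I$ with small rational $\epsilon>0$. Then $\theta=\sqrt{3}\,\epsilon$ and $\lambda_{\min}(G_N)=\epsilon$. With $\tau=2\epsilon$ the right-hand side is $O(\epsilon^2)$, so the hypothesis holds. However, $\mathcal X=\{ww^\top+sN:s\in\mathbb R\}$ with $N=e_1e_3^\top+e_3e_1^\top-2e_2e_2^\top$. The Frobenius projection of $G_N$ onto $\mathcal X$ is $ww^\top-\tfrac{\epsilon}{3}N$, and this matrix has eigenvalue $-\epsilon/3$.

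The paper's proof gets past exactly this point by writing the Wielandt--Hoffman bound as $|\widetilde\lambda-\lambda|\le\lVert G_N-\widetilde G\rVert_F^2$. There $\lambda$ is tacitly an eigenvalue of $G_N$ and the $O(\tau^2)$ term is discarded. The correct inequality has the unsquared norm, so ``following'' that argument would import an invalid step rather than close your bookkeeping.

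What does prove the statement as literally written is the branch you call trivial. By hypothesis $\lambda=\lambda_{\min}(\widetilde G)$ and the right-hand side is nonnegative, so $\widetilde G\succ0$. The identity holds for any feasible point of \cref{eq:problem}, so your closed form (which drops the cone constraint) is not needed, and Step 4 is superfluous. This is a valid proof by a different route from the paper's perturbation-plus-eigenvalue argument, but it shows the theorem as stated has no quantitative content.

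The version worth proving is the one your Steps 2--3 genuinely establish, modulo the rounding of $G_N$ and the vectorization convention. It needs a hypothesis linear in $\tau$, e.g.\ $\lambda_{\min}(G_N)>\tau/\sigma_{\min}(A)$, which guarantees that the projected matrix is positive definite.
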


\textit{Proof.} 
Since $\widetilde{G}$ is a solution of Problem (\ref{eq:problem}), it clearly satisfies the polynomial identity $
f(x) = {\mathbf v}(x)^{\top} \widetilde{G} {\mathbf v}(x)$.
Let $y_N$ and $\widetilde{y}$ denote the vectors consisting of the entries of $G_N$ and $\widetilde{G}$, respectively. From \eqref{eq:hyperplane}, and under the assumption that the matrix $A$ has full row rank, we have
\begin{equation}
 \lVert A y_N - b \rVert_2^2 = \theta < \tau, \qquad A \widetilde{y} = b .
\end{equation}
According to the perturbation result \cite{golub2013matrix} for full-row-rank underdetermined linear systems, %% TODO check ref
the following estimate holds: 
\begin{equation}
 \lVert y_N - \widetilde{y} \rVert_2 \le (\kappa_2(A)   \tau)\lVert y_N\rVert_2 + O(\tau^2).
\end{equation}
Under the assumption that
$
\lambda >  \lVert G_N \rVert_F^2 \kappa_2^2(A) \tau^2,
$
it follows that
\begin{equation}
\lVert G_N-\widetilde{G} \rVert_F^2 \le  \lVert y_N-\widetilde{y} \rVert_2^2 < \lambda
\end{equation}
where the inequality $ \lVert y_N - \widetilde{y} \rVert_2^2 < \lambda $ holds because the higher-order term $O(\tau^2)$ is negligible when $\tau$ is sufficiently small.
Let $\widetilde{\lambda}$ denote an eigenvalue of $\widetilde{G}$. By the Wielandt–Hoffman theorem \cite{golub2013matrix}, %% TODO check
we obtain
% | \widetilde{λ}-λ |≤‖G-\widetilde{G} ‖_F^2≤λ
\begin{equation}
 | \widetilde{\lambda} - \lambda |\le  \lVert G_N - \widetilde{G} \rVert_F^2 \le \lambda,
\end{equation}
which implies that all eigenvalues of $\widetilde{G}$ are nonnegative. Therefore, we conclude that 
$\widetilde{G} \succeq 0$.

\paragraph{Case 2: Boundary Solution ($G_N$ is rank-deficient).} 
When the numerical Gram matrix $G_N$ obtained via Gauss–Newton refinement is rank-deficient or ill-conditioned (near-singular), the affine hyperplane $\mathcal{X}$ defined by the linear constraints is typically tangent to the boundary of the positive semidefinite (PSD) cone. In such instances, $G_N$ does not reside within the interior of the PSD cone; consequently, direct orthogonal projection may yield a rational matrix $\widetilde{G}$ that violates the semi-definiteness requirement. We provides a rigorous discussion on the structural origins of such cases and presents a theoretical framework for exact recovery based on rational vector reconstruction. 
\begin{itemize}
    \item \textbf{Redundant Monomials}: In the construction of the SOS decomposition, a monomial basis that is not strictly necessary may be used. This situation can be avoided by exploring the sparse structure of the polynomial or by removing entire rows and columns of the Gram matrix $G_N$ that correspond to numerically small entries, i.e., eliminating the monomials that should not appear in the polynomial's SOS representation.
    % Such cases can be regularized by analyzing the sparsity structure of the Gram matrix $G$ or by truncating the rows and columns corresponding to negligible eigenvalues, thereby transforming the problem into a full-rank formulation.
    \item \textbf{Intrinsic Singularity}: If the polynomial $f(x)$ attains its global minimum at a nonzero real point $(\xi_1, \dots, \xi_n)$, then the monomial vector ${\mathbf v}(\xi$) at this point is a zero vector in the Gram matrix $G_N$, implying that $G_N$ is singular. 
    When the global minimum is attained only at finitely many nonzero points, and the backward error $\theta$ is sufficiently small, the Gram matrix can be rendered full rank by performing Gauss–Newton refinement, after which an orthogonal projection can be applied. 
    In contrast, when the global minimum is attained on some manifolds, we apply the Gauss–Newton iteration to a truncated triangular decomposition of $G_N$. Once the residual is sufficiently small, a rational recovery of $G_N$ is carried out via a simultaneous Diophantine approximation algorithm ~\cite{lagarias1985computational}.
\end{itemize}

% \textit{Example.} 

% % After Newton refinement, we obtain a high-precision floating-point matrix $G_{\mathrm{N}}$. 
% % that is close to the exact feasible set $\mathcal{A}_f$. 
% The rational recovery step constructs an
% \emph{exact} rational Gram matrix,   
% % $G_{\mathbb{Q}}\in\mathbb{Q}^{m\times m}$, 
% which yields an exact SOS decomposition with rational coefficients.

% \paragraph{Case 1: $G$ is (numerically) full rank.}
% If $G_{\mathrm{num}}$ is well inside the PSD cone (full rank with a clear spectral gap), small perturbations
% caused by rationalization are unlikely to violate positive semidefiniteness. In this case, the pipeline
% ``projection $\rightarrow$ rationalization $\rightarrow$ exact PSD verification'' is typically stable.

% \paragraph{Case 2: $G$ is (numerically) rank deficient.}
% If $G_{\mathrm{num}}$ is close to the boundary (nearly singular), direct entrywise rationalization may destroy
% PSD-ness by introducing small negative eigenvalues. A more robust approach is to recover rational factors:
% we compute a truncated factorization $G_{\mathrm{num}}\approx L L^\top$ (or $LDL^\top$),
% rationalize the factor(s) (e.g., columns of $L$) using simultaneous Diophantine approximation,
% and reconstruct $G_{\mathbb{Q}} := L_{\mathbb{Q}} L_{\mathbb{Q}}^\top$ so that the intended rank structure is preserved.
% Finally, we enforce $G_{\mathbb{Q}}\in\mathcal{A}_f$ by solving a constrained rational least-squares step on the
% free parameters.

\section{Neural Conjectur Midule Configuration }
\label{appendix:neural_conjecture_configuration}
% \subsection{Detailed Experimental Setup}

\subsection{Prompt Template}
% \subsubsection{Prompt Template in Neural Conjecture Module}
Here we present the prompt template employed by the SOS structure conjecture module in the {\it neural conjecture} module. For a given nonnegative polynomial, the SOS structure conjecturer generates a candidate SOS structural conjecture based on an expanded-form representation, with the objective that the proposed SOS structure matches the target polynomial as closely as possible. 
% 定义一个带有圆角矩形背景的盒子环境
\begin{tcolorbox}[colback=gray!5!white, colframe=blue!50!black, arc=3mm, title={\texttt{Prompt Template for SOS Structure Conjecturer}}]
% \begin{tcolorbox}[colframe=blue!40!black, colback=blue!5, title=\textbf{Prompt Template}] 
    \textbf{Task:}
    \\
    You are given a polynomial that is the expanded form of a sum-of-squares (SOS) expression.
    Your task is to reconstruct a plausible SOS representation whose expanded form matches the given polynomial as closely as possible.
    
    \medskip
    \textbf{Instructions:} 
    
    1. Analyze the input polynomial carefully, focusing on the coefficients and the combinations of variables involved.  \\
    2. Infer possible linear or polynomial terms inside each square in the sum-of-squares expression.  \\
    3. Construct a sum of square terms without expanding the squares. Keep the output compact and well-structured.\\  
    4. Aim for the expanded form of your SOS expression to closely approximate the coefficients in the original polynomial. Minor numerical deviations are acceptable. \\ 
    5. If multiple valid SOS decompositions exist, prefer one that is simple, symmetric, and easy to interpret.\\  
    6. Include variables and constants inside parentheses when appropriate to match constant terms in the input polynomial.\\
    % \textbf{Instructions:}
    % \begin{enumerate}
    %     \item Analyze the input polynomial carefully, focusing on the coefficients and the combinations of variables involved.
    %     \item Infer possible linear or polynomial terms inside each square in the sum-of-squares expression.
    %     \item Construct a sum of square terms without expanding the squares. Keep the output compact and well-structured.
    %     \item Aim for the expanded form of your SOS expression to closely approximate the coefficients in the original polynomial. Minor numerical deviations are acceptable.
    %     \item If multiple valid SOS decompositions exist, prefer one that is simple, symmetric, and easy to interpret.
    %     \item Include variables and constants inside parentheses when appropriate to match constant terms in the input polynomial.
    % \end{enumerate}
    
    \medskip
    
    \textbf{Output format:}
    \\
    Please provide your response in the following structure:\\
    \texttt{<SOS Expression>: <sum\_of\_squares\_expression>}
    
    % \medskip
    
    Where \texttt{<sum\_of\_squares\_expression>} is a sum-of-squares term, expressed compactly.
    \\
    For example:
    \\
    \texttt{(x1 + 1)\textasciicircum 2 + (2*x1 + 3*x2)\textasciicircum 2}
    
    \medskip
    
    \textbf{Key considerations:}
    
    1. Do not simply rewrite the input polynomial or output expanded terms. The output must be a sum-of-squares expression.  \\
    2. Constants inside square terms can be fractional or decimal, as needed, to best approximate the original polynomial.  \\
    3. Avoid expanding the squares in the output; always keep terms inside parentheses squared.  \\
    4. Aim for clear and interpretable variable groupings. Symmetry and simplicity are preferred if multiple answers fit.  \\
    5. Your SOS expression should fully explain the input polynomial’s structure and coefficients to the best achievable extent. \\
    % \begin{enumerate}
    %     \item Do not simply rewrite the input polynomial or output expanded terms. The output must be a sum of squared expressions.
    %     \item Constants inside squared terms can be fractional or decimal, as needed, to best approximate the original polynomial.
    %     \item Avoid expanding the squares in the output; always keep terms inside parentheses squared.
    %     \item Aim for clear and interpretable variable groupings. Symmetry and simplicity are preferred if multiple answers fit.
    %     \item Your SOS expression should fully explain the input polynomial’s structure and coefficients to the best achievable extent.
    % \end{enumerate}
    
    \medskip
    
    \textbf{Example:}
    \\
    Input polynomial:
    \\
    \texttt{x1\textasciicircum 2 + 2*x1 + 1}
    
    \noindent Output:
    \\
    \texttt{(SOS Expression): (x1 + 1)\textasciicircum 2}
    
    \noindent Explanation:
    \\
    The polynomial is a perfect square trinomial; the SOS reconstruction is exact.
    
    \medskip
    
    \noindent Input polynomial:
    \\
    \texttt{5*x1\textasciicircum 2 + 12*x1*x2 + 6*x1 + 9*x2\textasciicircum 2 + 9}
    
    \noindent Output:
    \\
    \texttt{(SOS Expression): (x1 + 2.99)\textasciicircum 2 + (2*x1 + 3*x2)\textasciicircum 2}
    
    \medskip
    
    \noindent Now, please provide the SOS reconstruction for the following polynomial:
    \\
    \textbf{Original polynomial: \{polynomial\}}
\end{tcolorbox}

As illustrated by the prompt template above, it consists of a task description, multiple explicit instructions specifying the required output format and content constraints for the large language model, and several illustrative examples provided to guide the generation process.

% \subsection{Training Dataset}  
\subsection{Synthetic Dataset Statistics}   %%% 合成数据统计 即训练数据
\label{appendix:synthetic_data_statistics}
% 合成数据作为训练语料来源  
% 对于合成数据的说明
% 随机抽样人工检查

%%% TODO  加类似于 人工打分统计表 的内容 

During the data construction stage of the {\it neural conjecture} module, we generate more than 100w synthetic data instances using the four data generation strategies proposed in Section~\ref{sec: SOS Data Construction Method}, including both \textbf{computation-driven} and \textbf{structure-driven} approaches. These synthetic datasets, covering problems with varying numbers of variables ranging from 3 to 10, are used as the training corpus for the neural conjecturer.

Notably, we performed a \textbf{manual quality inspection} of the synthesized polynomial–SOS pair data via random sampling. For each variable setting from 3 to 10, we randomly sampled 50 instances for review. The inspection results indicate that the synthesized data contain no formatting errors and \textbf{encompass polynomial SOS decomposition cases of varying difficulty}, 
% including a substantial number of \textbf{competition-level challenging polynomials}. 
including a substantial proportion of \textbf{highly non-trivial instances} that pose significant challenges to conventional symbolic solvers. 
Additionally, we computed the degree and number of SOS terms of the synthesized data.  
% (i.e., the number of SOS terms / the expression length). 
The statistical results are reported in Fig.~\ref{fig:statistical_results_of_synthetic_data_degree} and Fig.~\ref{fig:statistical_results_of_synthetic_data_sos}.

\begin{figure}[htbp]
    \centering
    \includegraphics[width=0.7\linewidth]{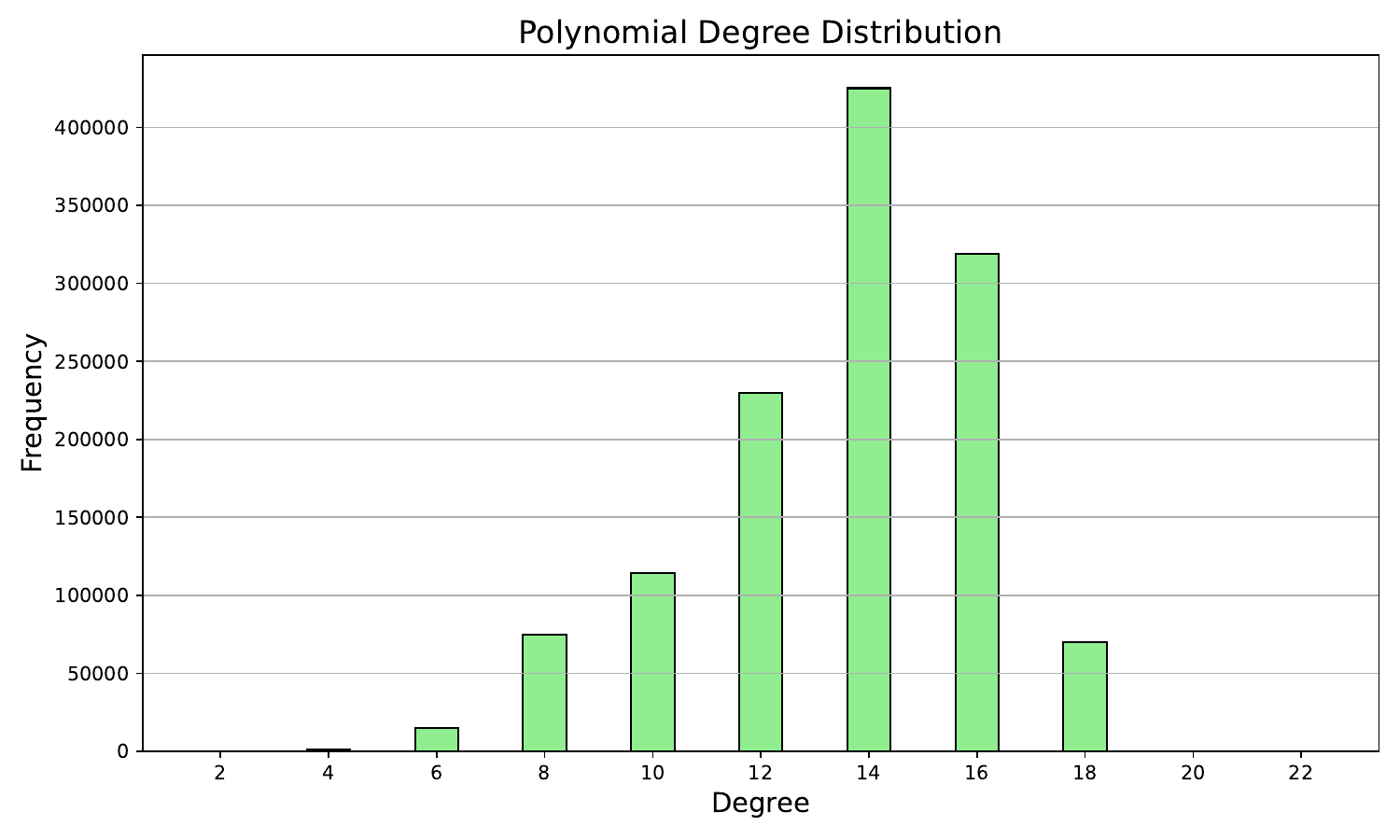}
    \caption{Statistical distribution of degrees for synthetic polynomial data.}
    \label{fig:statistical_results_of_synthetic_data_degree}
\end{figure}

\begin{figure}[htbp]
    \centering
    \includegraphics[width=0.7\linewidth]{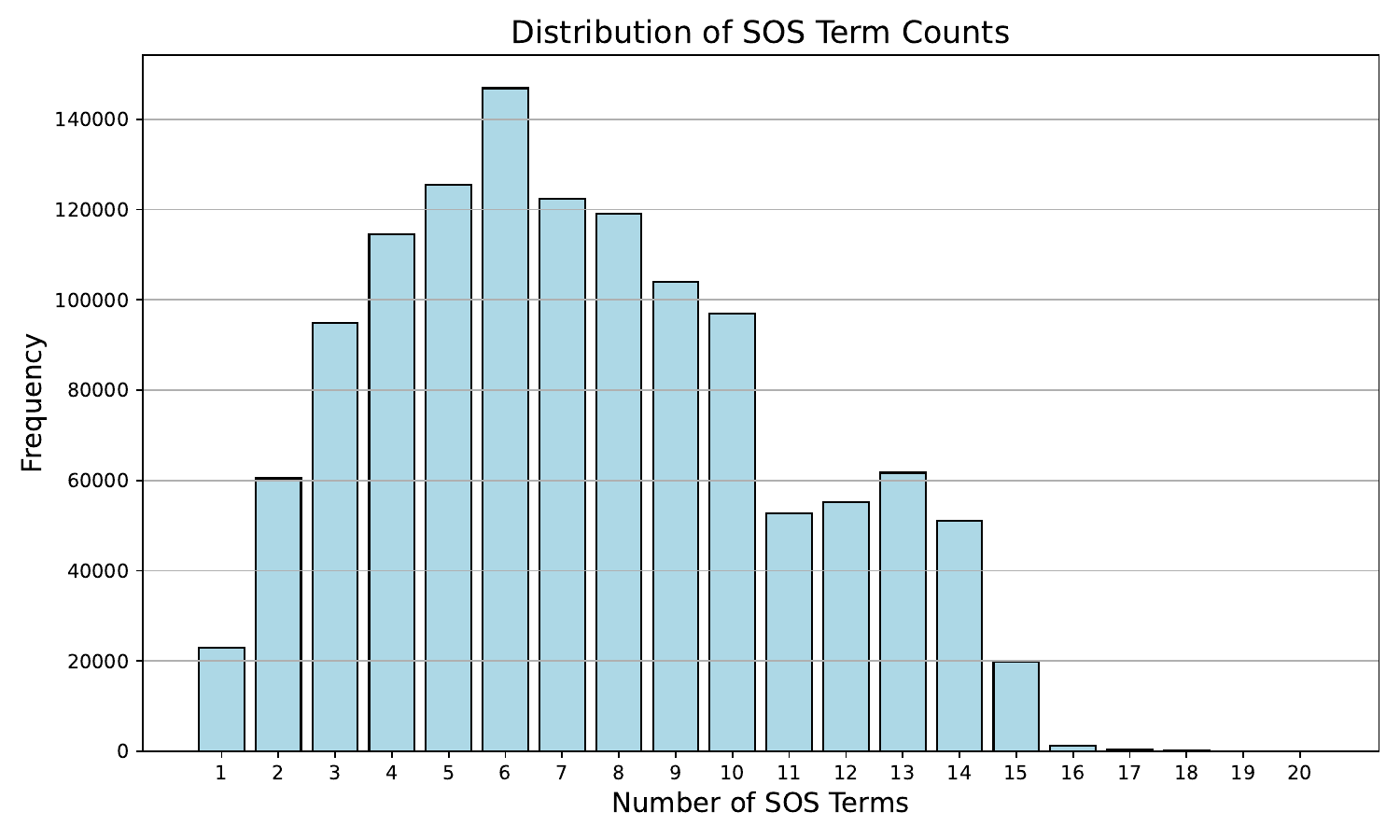}
    \caption{Statistical distribution of SOS term counts in synthetic polynomials.}
    \label{fig:statistical_results_of_synthetic_data_sos}
\end{figure}

As illustrated in Fig.~\ref{fig:statistical_results_of_synthetic_data_degree} and Fig.~\ref{fig:statistical_results_of_synthetic_data_sos}, the synthetic dataset exhibits significant diversity in both algebraic complexity and representation structure. Specifically, the degrees of the synthesized polynomials span a wide range from 2 to 22, ensuring the model is exposed to problems of varying algebraic depths; this reflects that the synthetic data encompasses both relatively simple low-degree polynomials and structurally sophisticated high-degree instances. Furthermore, the distribution of SOS term counts covers a spectrum ranging from basic single-term squares to complex decomposition instances involving up to 15 terms. Such structural diversity indicates that our construction strategies effectively cover a broad difficulty spectrum, preventing the model from over-fitting to specific sparse or low-degree patterns.

\subsection{Additional Details of Progressive Reinforcement Learning Process}  %%% 
\label{appendix:details_of_RL}
For the progressive two-stage training procedure of the SOS-structure conjecturer described in Section~\ref{subsec:Training the SOS Structure Conjecturer}, in the second stage we employ a \textbf{curriculum-style GRPO} reinforcement learning process. Here, we provide the theoretical foundations and several implementation details.

\paragraph{Group Relative Policy Optimization (GRPO).} 
% GRPO可放在附录 or 精简 // 参考其他文章
% \subsection{Group Relative Policy Optimization (GRPO).}
Group Relative Policy Optimization (GRPO) \cite{shao2024deepseekmath} is a reinforcement learning algorithm used for fine-tuning large language models (LLMs). 
Unlike the popular Proximal Policy Optimization (PPO), GRPO enhances efficiency by eliminating the need for a separate value function. It estimates the advantage by normalizing the rewards of a set of responses to the same prompt. Specifically, for each question $q$ in a given set $Q$, a set of responses $\{o_1, \dots, o_G\}$ is sampled from the old policy $\pi_{\text{old}}$. The reward model then evaluates these responses, generating rewards $\{r_1, \dots, r_G\}$, and the advantage is computed as follows:
\begin{align}
     \hat{A}i = \frac{r_i - \text{mean}({r_1, \dots, r_G})}{\text{std}({r_1, \dots, r_G})}
\end{align}
The policy model $\pi_\theta$ is then optimized by maximizing the following objective:
\begin{multline}
J_{\text{GRPO}}(\theta) =
 \mathbb{E}_{q \sim Q, \{o_i\}{i=1}^G \sim \pi_{\text{old}}(O|q)} 
 \frac{1}{G} \sum_{i=1}^G \frac{1}{|o_i|} \sum_{t=1}^{|o_i|}
 \{  
 \min  \left[
 \gamma_{i,t}(\theta) \hat{A}_{i,t}, 
 \text{clip}\left(\gamma_{i,t}(\theta), 1 - \epsilon,  1 + \epsilon\right)  \hat{A}_{i,t}   \right] 
- \beta D_{\text{KL}}(\pi_\theta || \pi_{\text{ref}})
\}
\end{multline}  %%正确

% \begin{multline}
%     J_{\text{GRPO}}(\theta) =
%  \mathbb{E}_{q \sim Q, \{o_i\}{i=1}^G \sim \pi_{\text{old}}(O|q)} \\
%  \frac{1}{G} \sum_{i=1}^G \frac{1}{|o_i|} \sum_{t=1}^{|o_i|}
%  \{  
%  \min  [
%  \gamma_{i,t}(\theta) \hat{A}_{i,t}, 
%  \text{clip} (\gamma_{i,t}(\theta), 1 - \epsilon, \\ 1 + \epsilon )  \hat{A}_{i,t}   ] 
% - \beta D_{\text{KL}}(\pi_\theta || \pi_{\text{ref}})
% \}
% \end{multline}
      
 where $\gamma_{i,t}(\theta) = \frac{\pi_\theta(o_{i,t}|q, o_{i,<t})}{\pi_{\text{old}}(o_{i,t}|q, o_{i,<t})}$ is the importance sampling ratio, $ \pi_{\text{ref}} $ represents the reference model, $\pi_{\text{old}}$ is the policy used to sample the responses, and $ D_{\text{KL}}(\pi_\theta || \pi_{\text{ref}})$ introduces a KL divergence constraint to limit the deviation of the model from the reference model.

\paragraph{Difficulty-Based Curriculum Data Partitioning.} 
As discussed in ~\cref{subsec:Training the SOS Structure Conjecturer}, we partition the training data based on the difficulty of polynomial SOS decomposition in the dataset, obtaining multiple groups of training data with varying difficulty levels. Each group is then trained using a Graduated Reinforcement Policy Optimization (GRPO) approach, progressing from easier to more challenging data. Specifically, we categorize the data into three tiers based on the ascending number of polynomial variables: Polynomial-SOS pairs data with 3 to 5 variables are used in the first stage, data with 6 to 8 variables in the second stage, and the more challenging data with 9 to 10 variables in the third stage. GRPO training is conducted sequentially across these three stages.

% \paragraph{Computation of Algebraic Structure Penalty.} 
\paragraph{Reward Functions Design.} As illstrated in \cref{subsec:Training the SOS Structure Conjecturer}, 
% the reaward function in GRPO is designed with three critical components: 
the reward function design includes three key components:
\begin{itemize}
    \item \textbf{[accuracy reward]} encourages SOS structure conjectures with smaller errors compared to the original polynomial.
    \item \textbf{[format reward]} ensures that the model-generated conjectures adhere to the required SOS structure.
    \item \textbf{[algebraic structure penalty]} penalizes the SOS structure hypothesis based on the degree of term matching with the original polynomial. 
\end{itemize}
The total reward is defined as:
\begin{align}
    R = w_{\text{acc}} R_{\text{acc}} + w_{\text{fmt}} R_{\text{fmt}} - P_{\text{struct}},
\end{align}
% The total reward $R$ for a generated SOS conjecture $\hat{f}(x)$ is defined as:
where $w_{\text{acc}} + w_{\text{fmt}} = 1$, $R_{\text{acc}} \in [0,1]$ denotes the accuracy reward based on the approximation error, $R_{\text{fmt}} \in [0,1]$ denotes the format consistency reward, $P_{\text{struct}}$ denotes the algebraic structure penalty.

% The individual components are defined as follows: 
% % After supervised fine-tuning, a rule-based reinforcement learning algorithm is employed to further optimize the model's ability to reason about SOS structures 
% % by learning from easy to difficult tasks. 
% % Specifically, the portion of the training data that the model cannot solve after supervised fine-tuning is divided into curriculum data, organized by increasing number of variables, allowing the model to learn progressively from easier to more difficult tasks. %CL数据划分
% % The GRPO algorithm is used for model optimization, and 
% % % two reward mechanisms are designed: 
% % the reward function design includes three key components: 
% \textbf{accuracy reward} encourages SOS structure conjectures with smaller errors compared to the original polynomial, while \textbf{format reward} ensures that the model-generated conjectures adhere to the required SOS structure, 
% and an \textbf{algebraic structure penalty} 
% penalizes the SOS structure hypothesis based on the degree of term matching with the original polynomial. 

\paragraph{Computation of Algebraic Structure Penalty.} 
The algebraic structure penalty consists of two components: a soft penalty and a hard penalty. 
 To accurately assess the algebraic structure, we express the original polynomial $ f $ and the model-generated SOS conjecture $\hat{f}$ in terms of a common monomial basis. A coefficient threshold $\tau $ (e.g., $ 10^{-5} $) is set, such that a monomial is considered present in the polynomial (i.e., a nonzero term) only if the absolute value of its coefficient exceeds $ \tau $.

    % 将原多项式 $f$ 与模型生成的 SOS 猜想展开式 $^\hat f$ 写成相同单项式基底下的系数形式后。给定阈值 $\tau$（如 $10^{-6}$），将绝对值小于 $\tau$ 的系数视为零。 TODO
     We define Structural Deviation Rate (SDR) to quantify the discrepancy between the algebraic structure of the SOS representation and that of the original polynomial:
     \begin{align}
          SDR = \frac{N_\text{miss} + N_\text{spur}}{N_\text{req} },
          % SDR = \frac{N_\text{miss} + N_\text{spur}}{N_\text{req} + \epsilon},
     \end{align}
     where $N_{\text{req}}$ denotes the number of nonzero monomials in the original polynomial, $N_{\text{miss}}$ is the number of monomials that appear in the original polynomial but are absent in $\hat f$, and $N_{\text{spur}}$ is the number of 
     \textbf{extraneous terms}
     % \textbf{spurious terms} 
     present in $\hat f$ but not belonging to the monomial set $\mathcal{V}_f$.
     
    The soft penalty is defined as
    \begin{align}
         P_{\text{struct-soft}} = \lambda \cdot \min(\text{SDR}, \rho_{\max}),
    \end{align}
     where $\lambda$ controls the strength of the penalty and $\rho_{\max}$ caps the maximum penalty.

    If the generated SOS structure exhibits severe algebraic inconsistencies (e.g., exceeding the degree of the original polynomial or introducing variables not present in it), a hard penalty is imposed:
    \begin{align}
          P_{\text{struct-hard}} =
         \begin{cases}
        C_\text{hard}, & \text{if a structural violation occurs} \\
         0, & \text{otherwise}
 \end{cases}
    \end{align}
    
 The overall algebraic structure penalty is the sum of the soft and hard penalties:
    \begin{align}
         P_{\text{struct}} = P_{\text{struct-hard}} + P_{\text{struct-soft}}.
    \end{align}
% \paragraph{Reward functions in GRPO} 

% The final reward is given by
% \begin{align}
%     R = w_{\text{acc}} R_{\text{acc}} + w_{\text{fmt}} R_{\text{fmt}} - P_{\text{struct}},
% \end{align}
% where $w_{\text{acc}} + w_{\text{fmt}} = 1$, $R_{\text{acc}} \in [0,1]$ denotes the accuracy reward based on the approximation error, $R_{\text{fmt}} \in {0,1}$ denotes the format consistency reward, $P_{\text{struct}}$ denotes the algebraic structure penalty.

\section{Further Details of Lean Verification Module}
\label{apendix:more_about_lean_verification}

\begin{figure*}[htbp]
  \centering
  \includegraphics[width=0.85\linewidth]{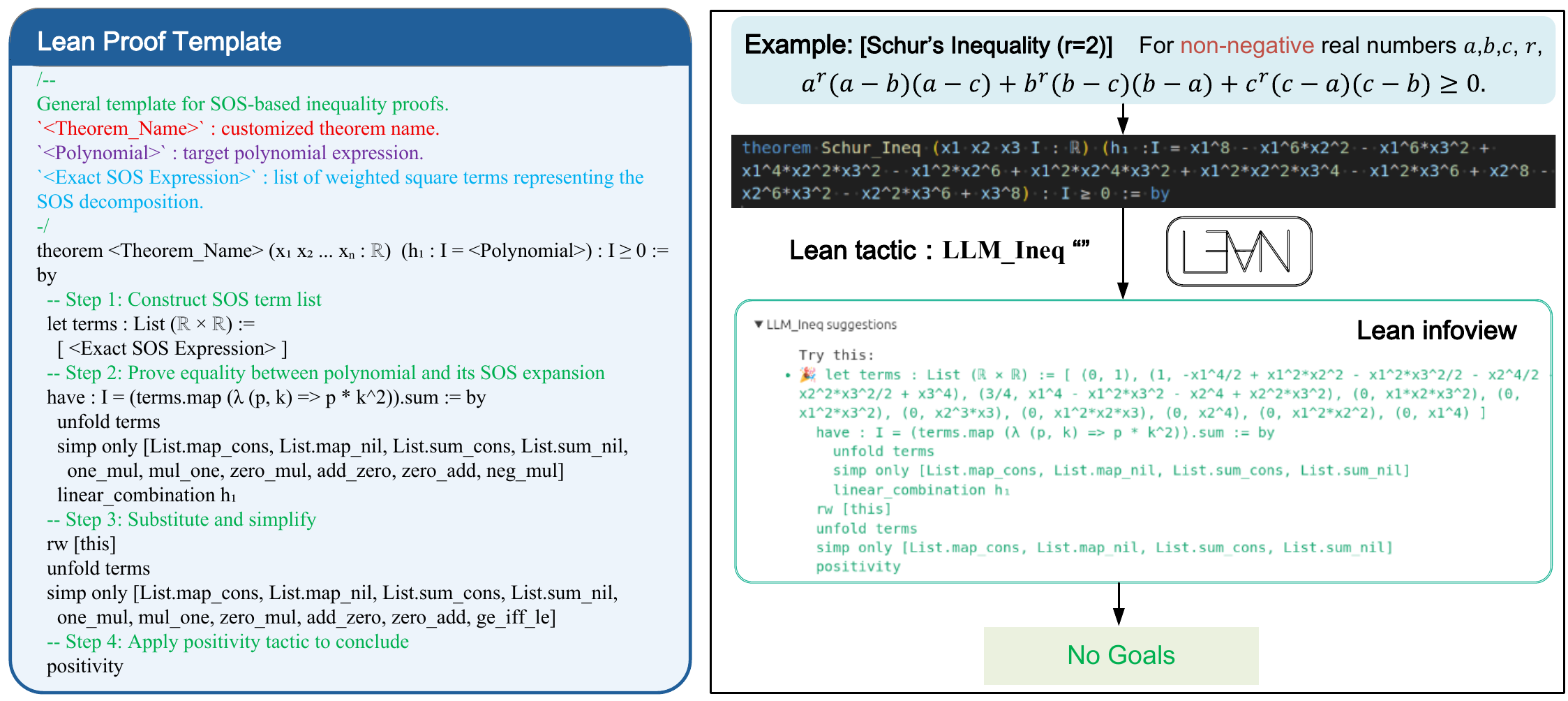}
  \caption{\textbf{Automated Lean Proof Generation Example.}   \textbf{left:} The pre-defined Lean Proof Template. 
  % for Sum-of-Squares (SOS)
  % SOS based verification. 
  \textbf{right:} An Example of automated proof generation within Lean4,  
  % (Schur's Inequality with $r=2$), 
  where the formal verification is completed using the integrated tactic \textit{``LLM\_Ineq"}.
  }  \label{fig:lean_proof_generation}
\end{figure*}  % TODO 这个图有点糙

\subsection{Lean Template of Proof}

Here presents the Lean4 proof template referenced in section ~\cref{sec:Formal Verification}, 
% presents the Lean proof template discussed in ~\cref{sec:Formal Verification}, 
which yields a verifiable formal proof based on the exact sum-of-squares (SOS) representation of a non-negative polynomial. 
% which provides a complete formalized proof in Lean4 based on the exact SOS representation of non-negative polynomials.

\begin{figure*}[htbp]
\begin{tcolorbox}[colback=gray!5!white, colframe=blue!50!black, arc=3mm, title={\texttt{Lean Proof Template}}]
% \begin{minted}[fontsize=\small, breaklines, mathescape=true]{lean}
\begin{lstlisting}[language=lean4, frame=none, mathescape=true, aboveskip=0pt, belowskip=0pt]
/--
General template for SOS-based inequality proofs.
`<Theorem_Name>` : customized theorem name.
`<Polynomial>` : target polynomial expression.
`<Exact SOS Expression>` : list of weighted square terms representing the SOS decomposition.
-/
theorem <Theorem_Name> (x1 x2 ... xn : Real)
  (h1 : I = <Polynomial>) :
  I >= 0 := by
  -- Step 1: Construct SOS term list
  let terms : List (Real $\times$ Real) :=
    [ <Exact SOS Expression> ]
  -- Step 2: Prove equality between polynomial and its SOS expansion
  have : I = (terms.map (fun (p, k) => p * k^2)).sum := by
    unfold terms
    simp only [List.map_cons, List.map_nil, List.sum_cons, List.sum_nil,
      one_mul, mul_one, zero_mul, add_zero, zero_add, neg_mul]
    linear_combination h1
  -- Step 3: Substitute and simplify
  rw [this]
  unfold terms
  simp only [List.map_cons, List.map_nil, List.sum_cons, List.sum_nil,
    one_mul, mul_one, zero_mul, add_zero, zero_add, ge_iff_le]
  -- Step 4: Apply positivity tactic to conclude
  positivity
\end{lstlisting}%\end{minted}
\end{tcolorbox}
\caption{Lean template for inequality proof based on SOS representation.}
\end{figure*}

The template uses the following placeholders: 
\begin{itemize}
% \textit{\text{\textless SOS Expression\textgreater}}
    \item \text{\textless Theorem\_Name\textgreater}: the customized theorem name;
    
    \item \text{\textless Polynomial\textgreater}: the target polynomial expression (often an expanded or normalized form equivalent to the original);

    \item \text{\textless Exact SOS Expression\textgreater}: a list of exact square terms constituting the SOS certificate, encoded as a Lean list of pairs (e.g., $(p,k)$) representing a squared polynomial $p$ together with its coefficient $k$, according to the chosen encoding.
\end{itemize}

%% TODO 待检查
The proof proceeds in four structured steps:
\begin{itemize}
    \item \textbf{Step 1: Construct the SOS term list.} The SOS representation is introduced as terms : 
    $\text{List}(\mathbb{R} \times \mathbb{R})=[...]$ 
    containing all exact squared terms required for the proof. 
    \item \textbf{Step 2: Prove the polynomial identity.} 
    A key lemma establishes that the target polynomial equals the sum of the expanded SOS terms. This step typically relies on simp to unfold list combinators \text{List.map\_cons, List.sum\_cons, etc.} and on \text{linear\_combination} to derive the exact identity from the provided equality hypothesis (e.g., \text{h1 : I = \textless Polynomial\textgreater}) together with auxiliary algebraic equalities. 
    \item \textbf{Step 3: Substitute and normalize the goal.} By rewriting with the established identity (rw [this]), the original goal is transformed into nonnegativity of a sum-of-squares, followed by additional simplifications to reach a form amenable to automation. 
    \item \textbf{Step 4: Conclude via positivity automation.} Finally, the positivity tactic is invoked to prove that a sum-of-squares is nonnegative, completing the formal verification of $I(\mathbf{x}) \ge 0$.
\end{itemize}

\subsection{An Example of Automated Lean Proof Generation}

Fig.~\ref{fig:lean_proof_generation} illustrates the 
% template structure 
structure of the proof template %utilized in this integration 
and an example of automated proof generation.

\section{Algorithm}
% 算法伪代码
 
Here provides the overall pseudocode for the proposed NSPI method (\cref{alg:overall}), along with the detailed pseudocodes for the progressive two-stage training process of the neural conjecture module (\cref{alg:nspi-train}), the Newton iteration (\cref{alg:newton}), and the rational recovery process (\cref{alg:rational}) within the symbolic correction module.

\begin{algorithm}[H]
\caption{Overall Proof Generation Process of NSPI}
\label{alg:overall}
\renewcommand{\algorithmicrequire}{\textbf{Input:}}
\renewcommand{\algorithmicensure}{\textbf{Output:}}
\renewcommand{\algorithmiccomment}[1]{\hfill $\triangleright$ #1}
\begin{algorithmic}[1]
\REQUIRE Target polynomial $f(x)$; SOS structure conjecturer $M$; candidate budget $K$. 
%Newton tolerance $\tau$; max Newton steps $T_N$; rank threshold $\epsilon$; rational precision $\rho$.
\ENSURE A verified Lean proof script $\mathcal{P}_{formal}$, or \textsc{Failure}.

\STATE \textbf{Stage 1: Neural Conjecture}
\STATE $\mathcal{S}_{approx} \leftarrow M(f(x), K)$ \COMMENT{Generate $K$ SOS-structure conjectures}
\STATE Rank $\mathcal{S}_{approx}$ by $\theta(s)=\|\hat f_s(x)-f(x)\|_2$ %\COMMENT{$\ell_2$ norm of the polynomial difference}

\STATE \textbf{Stage 2: Symbolic Correction}
\FOR{each structure $s$ in $\mathcal{S}_{approx}$  }
    \STATE $G_N \leftarrow \text{NewtonIteration}(f(x), s)$  \COMMENT{Gauss--Newton refinement to obtain $G_N$ with small backward error}
    \STATE $S_{rat} \leftarrow \text{RationalRecovery}(G_N)$ \COMMENT{Interior/boundary rational recovery}
    \IF{$\textsc{IsExactSOS}(f(x), S_{rat})$}
        \STATE \textbf{break}  \COMMENT{An exact rational SOS certificate is found}
    \ENDIF
\ENDFOR

\STATE \textbf{Stage 3: Formal Verification}
\IF{$S_{rat}$ is found}
    \STATE $\mathcal{P}_{formal} \leftarrow \textsc{TemplateFill}(f(x), S_{rat})$ \COMMENT{Generate Lean script from templates}
    \IF{$\textsc{LeanCheck}(\mathcal{P}_{formal})$}
        \STATE \textbf{return} $\mathcal{P}_{formal}$
    \ENDIF
\ENDIF
\STATE \textbf{return} \textsc{Failure}.
\end{algorithmic}
\end{algorithm}

\begin{algorithm}[htbp]%[htbp]
\caption{Progressive Two-Stage Training of the SOS Structure Conjecturer}
\label{alg:nspi-train}
\renewcommand{\algorithmicrequire}{\textbf{Input:}}
\renewcommand{\algorithmicensure}{\textbf{Output:}}
\renewcommand{\algorithmiccomment}[1]{\hfill $\triangleright$ #1}
\begin{algorithmic}[1]
\REQUIRE Base language model $M_0$; data construction methods $\mathcal{D}=\{\mathcal{D}_1,\dots,\mathcal{D}_4\}$; curriculum schedule $\mathcal{C}$; GRPO hyperparameters.
\ENSURE Trained SOS structure conjecturer $M$.

\STATE \textbf{Stage 1: Data Construction and Supervised Fine-Tuning}
\STATE Construct synthetic polynomial--SOS pairs $\mathcal{S} \leftarrow \bigcup_{j=1}^{4}\mathcal{D}_j$ \COMMENT{Four construction methods}
\STATE Train $M_0$ on $\mathcal{S}$ via teacher forcing to obtain $M_{\mathrm{SFT}}$ \COMMENT{Supervised fine-tuning}

\STATE \textbf{Stage 2: Curriculum Reinforcement Learning}
\STATE Evaluate $M_{\mathrm{SFT}}$ on $\mathcal{S}$ and collect unsolved samples $\mathcal{U}$
\STATE Partition $\mathcal{U}$ into curriculum buckets $\{\mathcal{U}_1,\dots,\mathcal{U}_L\}$ by $\mathcal{C}$ \COMMENT{Increasing difficulty}
\FOR{$\ell = 1$ to $L$}
    \FOR{each mini-batch $\mathcal{B} \subset \mathcal{U}_\ell$}
        \STATE Sample SOS conjectures $\{S_i\}$ from $M$ for each $(f,S^\star) \in \mathcal{B}$ \COMMENT{Policy rollout}
        \STATE Compute reward $R(S_i) \leftarrow R_{\text{acc}}(S_i) + R_{\text{fmt}}(S_i) - P_{\text{alg}}(S_i)$ \COMMENT{Accuracy, format, and algebraic penalty}
        \STATE Update $M$ via GRPO to maximize $\mathbb{E}[R]$ \COMMENT{Reinforcement learning}
    \ENDFOR
\ENDFOR

\STATE \textbf{return} $M$ \COMMENT{Trained SOS structure conjecturer}
\end{algorithmic}
\end{algorithm}

%%% TODO 待检查
\begin{algorithm}[htbp]
\caption{\textsc{NewtonIteration}: Gauss--Newton Refinement for SOS Certificates}
\label{alg:newton}
\renewcommand{\algorithmicrequire}{\textbf{Input:}}
\renewcommand{\algorithmicensure}{\textbf{Output:}}
\renewcommand{\algorithmiccomment}[1]{\hfill $\triangleright$ #1}
\begin{algorithmic}[1]
\REQUIRE Polynomial $f(x)$; SOS structure $s$; tolerance $\tau$; max steps $T_N$.
\ENSURE Refined numerical Gram matrix $G_N$ (or $\emptyset$).

\STATE Build monomial basis ${\mathbf v}(x)$ implied by $s$ and initialize floating Gram matrix $G^{(0)}$.
\FOR{$t=0$ \textbf{to} $T_N-1$}
    \STATE Compute backward error $\theta^{(t)}=\|f(x)-{\mathbf v}(x)^\top G^{(t)} {\mathbf v}(x)\|_2$.
    \IF{$\theta^{(t)}\le\tau$}
        \STATE \textbf{return} $G_N \leftarrow G^{(t)}$
    \ENDIF
    \STATE Update $G^{(t)}$ by one Gauss--Newton step (via Cholesky parameterization).
\ENDFOR
% \FOR{$t=0$ \textbf{to} $T_N-1$}
%     \STATE Compute backward error $\theta^{(t)}=\|f(x)-{\mathbf v}(x)^\top G^{(t)} {\mathbf v}(x)\|_2$.
%     \IF{$\theta^{(t)}\le\tau$}
%         \STATE \textbf{return} $G_N \leftarrow G^{(t)}$
%     \ENDIF
%     \STATE Compute a Cholesky factorization $G^{(t)} \approx L^{(t)}(L^{(t)})^\top$.
%     \STATE Form residual coefficients $r^{(t)} \leftarrow c\!\left(f(x)-{\mathbf v}(x)^\top L^{(t)}(L^{(t)})^\top {\mathbf v}(x)\right)$.
%     \STATE Build the Jacobian $J^{(t)} \leftarrow \partial r^{(t)}/\partial \mathrm{vec}(L^{(t)})$ by linearization.
%     \STATE Solve $\Delta^{(t)} \leftarrow \arg\min_{\Delta}\|J^{(t)}\Delta - r^{(t)}\|_2$.
%     \STATE Update $L^{(t+1)} \leftarrow L^{(t)} + \mathrm{reshape}(\Delta^{(t)})$ and set $G^{(t+1)} \leftarrow L^{(t+1)}(L^{(t+1)})^\top$.
% \ENDFOR

\STATE \textbf{return} $\emptyset$.
\end{algorithmic}
\end{algorithm}

\begin{algorithm}[htbp]
\caption{\textsc{RationalRecovery}: Exact Rational SOS Certificate Recovery}
\label{alg:rational}
\renewcommand{\algorithmicrequire}{\textbf{Input:}}
\renewcommand{\algorithmicensure}{\textbf{Output:}}
\renewcommand{\algorithmiccomment}[1]{\hfill $\triangleright$ #1}
\begin{algorithmic}[1]
\REQUIRE Polynomial $f(x)$; SOS structure $s$; refined Gram matrix $G_N$; rank threshold $\epsilon$; precision $\rho$.
\ENSURE Exact rational SOS certificate $S_{rat}$ (or $\emptyset$).

\STATE Form affine constraints from coefficient matching: $f(x)={\mathbf v}(x)^\top G {\mathbf v}(x)$.
\IF{$\textsc{NumRank}(G_N,\epsilon)$ is full}
    \STATE Project $G_N$ onto the affine constraint and rationalize entries with precision $\rho$.
\ELSE
    \STATE Perform truncated $LDL^\top$ and LLL-based rational vector recovery to preserve rank.
\ENDIF
\STATE Construct $S_{rat}$ from the recovered rational Gram matrix and \textbf{return} $S_{rat}$.
\end{algorithmic}
\end{algorithm}

\section{Experimental Details}
\label{appendix:more_details_of_exp}

% \subsection{Paremeters Setting} %%%可以和实验附录放一起
\subsection{Implementation Details} %++ 也可以放附录
% 数据构造细节
% 模型选择
% 重要参数设置
% TODO 检查具体模型型号

During the data construction phase of the neural conjecture module, over one million training instances were generated based on the proposed construction methods. Qwen3-8B \cite{qwen3technicalreport} was employed as the base model for SOS structure conjecturing. In the reinforcement learning phase, progressive multi-round training was conducted using curriculum data organized by the increasing number of variables: Stage 1 (3–5 variables), Stage 2 (6–8 variables), and Stage 3 (9–10 variables). The reward function parameters were configured with $\alpha = 0.5$, $\lambda = 0.5$, and $C_{hard} = 0.5$, with the accuracy weight $w_{acc}$ set to $0.9$. Within the symbolic correction module, the tolerance threshold $\tau$ for Newton iteration was predefined as \text{1e-15}. During the inference phase, the computational budget $k$ is set to 32, with a maximum time limit of 1 hour.

% In the data construction phase of the neural conjecture module, we generate more than 100w training data based on the four proposed data construction methods. Qwen3-8B \cite{qwen3technicalreport}
% is used as the base model for SOS structure conjecturing. During the reinforcement learning phase, progressive multi-round training is conducted using curriculum data, organized by increasing number of variables. %
% The $\alpha$ parameter of the reward function is set to **, the $\lambda$ parameter is set to **, $C_{hard}$ is set to **, and $w_{acc} = 0.9$. In the symbolic correction module, the tolerance threshold $\tau$ for the Newton iteration is predefined as **. 
% % More implementation details are provided in Appendix \ref{appendix:neural_conjecture_configuration}.
% % 候选猜想数量k 设置为 --

\subsection{Baseline Setups} 
To ensure a fair comparison, we evaluate our method against several categories of approaches, including symbol-based methods, LLM-based provers, general-purpose LLMs, and hybrid systems. Detailed configurations and specifications for each category are provided below. 
% Further details on the baseline approaches are presented here. 
\begin{itemize}
    \item \textbf{Symbol-based Method.} We evaluate the symbolic computation tools Maple~\cite{heck1993maple} and Z3~\cite{de2008z3}. 
    % For Maple, we ****; for Z3, we ****.
    For Maple, 
    % we utilize \textbf{RegularChains} to transform the non-negativity verification into a feasibility analysis of real algebraic systems; specifically, 
    we perform symbolic decomposition and sampling of the semi-algebraic set via \texttt{SamplePoints} to check for the absence of negative values. 
    For Z3, we employ its \textbf{SMT} solver to formulate the problem as a satisfiability task under real arithmetic. 
    % ; by asserting the constraint that the polynomial is strictly negative, the prover verifies non-negativity if it returns \texttt{unsat}. 

    \item \textbf{LLM-based Prover.} We consider state-of-the-art (SOTA) LLM-driven Lean automated theorem provers, including DeepSeek-Prover-V2-7B~\cite{ren2025deepseek}, Goedel-Prover-V2-8B~\cite{lin2025goedelproverv2}, and Kimina Prover~\cite{wang2025kimina}. Each prover is tested using its respective default prompt templates. In our main experiments, we compare their performance under a computational budget of pass@32. 
%%% GPT-5.2(reasoning=low) 不讲
    \item \textbf{General-purpose LLM.} 
    This group encompasses the latest closed-source foundation models, 
    % We evaluate recent proprietary general-purpose models, 
    including GPT-5.2, Gemini-3-Pro-Preview, and DeepSeek-V3.2~\cite{deepseekai2025deepseekv32}. We assess their ability to generate Lean proofs directly from Lean theorem statements. 

    \item \textbf{Hybrid system.} We compare against the recent hybrid inequality-proving system LIPS~\cite{li2025LIPS}, using its default experimental configuration. 
    It is worth noting that a comparison with several hybrid systems~\cite{wei2024AIPS,li2025ineqsearch} was not feasible, as their source code is not publicly available for replication. Furthermore, these methods primarily focus on addressing problems with 3 or 4 variables, as evidenced by the datasets described in their respective papers.
    % Note that some hybrid systems~\cite{wei2024AIPS} cannot be included due to the lack of publicly available implementations. 
    % It is worth noting that certain other hybrid systems were excluded from our comparison due to the inaccessibility of their source code.
\end{itemize}

% TODO 关于训练数据的质量 类似于---
% 为了评估数据的质量，我们选择了超过100个不同维度的例子，并邀请领域内的专家对例子的质量进行人工评估。评估表明，不同于直接构造SOS数据的朴素方法，所构造的SOS数据与真实例子具有高度相似性，并包含不同难度的例子，附录中给出了所有训练数据的更多细节（统计结果）。
% 正文

\subsection{Details of Benchmark}

%  TODO 加一个不同对应 
\label{appendix:details_of_benchmark}
% 关于训练集可能要强调一下 所有合成数据能够确保正确性 以及数据分布均匀，包含不同难度的多项式 (比如不同变元数量分布均匀   不同SOS项数 ---

% TODO:
% 来源
% 权威性强调 +  难度分析
% 数据示例（真实   合成）
% no % 数据转化方法 （针对真实例子） 并分析
% no %平方换元等操作提高了次数 是否能说不会导致题目难度下降？

Here we provide additional details on \textbf{PolyIneqBench}, the %competition-level 
challenging inequality proving benchmark constructed in this work. 

Specifically, PolyIneqBench consists of a total of \textbf{522} inequality problems, 
which primarily categorized into two subsets based on their origin and complexity: 
% \begin{itemize}
%     \item \textbf{Classical Inequalities}. These include 7 well-known benchmark families, including the Schur inequality, 
%     % Motzkin polynomial, Choi–Lam polynomial, 
%     Robinson polynomial, Delzell polynomial, Peyri–Parrilo polynomial, 
%     % Leep–Starr polynomial, 
%     Lax polynomial, Nesbitt inequality, and Voronoi polynomial. 

%     \item \textbf{Competition-Level Examples}. 
%     This category comprises 95 real-world problems drawn from authoritative sources and 420 synthesized problems designed for the multivariate setting.
% \end{itemize}

\begin{itemize}
\item \textbf{PolyIneq-Real}: This subset contains 102 inequality problems sourced from real-world domains. It comprises two main components:
\begin{itemize}
\item Classical benchmark inequalities, including the Schur inequality, Robinson polynomial, Delzell polynomial, Peyri–Parrilo polynomial, Lax polynomial, Nesbitt inequality, and Voronoi polynomial.
\item Problems derived from national and international mathematics competitions, as well as standard competition textbooks.
\end{itemize}
Representative examples of PolyIneq-Real are presented in Table~\ref{tab:polynomial_dataset_polyineq_real}.
\item \textbf{PolyIneq-Synth}: To extend the benchmark to higher-dimensional settings, we constructed this synthetic subset containing 420 inequalities spanning four to ten variables (with 60 problems per variable count). Each generated problem was validated by domain experts through manual quality assessment on randomly sampled instances (see \cref{appendix:synthetic_data_statistics}), ensuring non-triviality and challenge. Representative examples are provided in Table~\ref{tab:part_of_polyineq_synth_benchmark}.  
\end{itemize}

% % The \textbf{classical inequality} subset contains ten well-known benchmark families, including the Schur inequality, Motzkin polynomial, Choi–Lam polynomial, Robinson polynomial, Delzell polynomial, Peyri–Parrilo polynomial, Leep–Starr polynomial, Lax polynomial, Nesbitt inequality, and Voronoi polynomial. 
% The \textbf{competition-level} subset includes 110 real-world problems drawn from authoritative sources, such as national and international mathematics competitions and standard competition textbooks. Together with a small number of additional canonical instances, these problems form the \textbf{PolyIneq-Real} dataset, which contains 110 problems in total. ~\cref{tab:polynomial_dataset_polyineq_real} presents representative examples from PolyIneq-Real. 

% The real-world instances come from sources such as national and international mathematics competitions and standard competition textbooks. Together with a set of classical inequalities, these authoritative problems constitute the \textbf{PolyIneq-Real} dataset, totaling 102 problems. 
% % The real-world instances are sourced from national/international mathematics competitions and standard textbooks. Combined with the classical inequalities, these constitute the \textbf{PolyIneq-Real} dataset (102 problems). 
% Representative examples are presented in Table~\ref{tab:polynomial_dataset_polyineq_real}.
% % \cref{tab:polynomial_dataset_polyineq_real} presents representative examples from PolyIneq-Real. 

\textbf{Expanding to High-Dimensional Settings:} 
As shown in Table~\ref{tab:polynomial_dataset_polyineq_real}, most problems in PolyIneq-Real involve only \textbf{three} or \textbf{four} variables. PolyIneq-Synth was specifically designed to address this limitation by introducing systematically constructed problems in higher-dimensional settings.

% It is worth noting that, as shown in ~\cref{tab:polynomial_dataset_polyineq_real}, the vast majority of these authoritative and competition-derived problems involve only \textbf{three} or \textbf{four} variables. 
% To extend the benchmark to more challenging settings with \textbf{higher dimensionality}, 
% we constructed \textbf{PolyIneq-Synth}, containing 420 synthetic inequalities spanning four to ten variables (60 problems per variable count). 
% To ensure the rigor of the synthetic portion, domain experts conducted manual quality assessments on randomly sampled instances (\cref{appendix:synthetic_data_statistics}), confirming that the generated problems represent non-trivial and challenging inequalities. 
% Representative instances of PolyIneq-Synth are provided in Table~\ref{tab:part_of_polyineq_synth_benchmark}. 

% we further construct 420 difficult synthetic competition-level inequalities, covering inequalities with \textbf{four} to \textbf{ten} variables, with 60 problems for each variable count. For the synthetic portion, as described in ~\cref{appendix:synthetic_data_statistics}, we invited domain experts to conduct manual quality assessments on randomly sampled instances to ensure that the generated problems indeed correspond to nontrivial and challenging inequalities. These expert-validated instances constitute the \textbf{PolyIneq-Synth} dataset. ~\cref{tab:part_of_polyineq_synth_benchmark} provides representative examples from PolyIneq-Synth. 

\cref{tab:polyineqbench_distribution} presents the distribution of polynomial inequalities within the PolyIneqBench, categorized by the number of variables ($n$) and their respective sample sizes.

\begin{table}[htbp]
    \centering
    \caption{Distribution of Polynomial Inequalities by Variable Count in PolyIneqBench.}
    
    \begin{tabular}{c c}
    \toprule
         \textbf{Variable Count (n)}&\textbf{Sample Size}  \\
         \midrule
         3 & 84\\
         \midrule
         4 & 77\\
         \midrule
         5 & 60\\
         \midrule
         6 & 61\\
         \midrule
         7 & 60\\
         \midrule
         8 & 60\\
         \midrule
         9 & 60\\
         \midrule
         10 & 60\\
    \bottomrule
    \end{tabular}
    \label{tab:polyineqbench_distribution}
\end{table}

\begin{table}[H]%[htbp]
\centering
% \caption{Polynomial Dataset from Various Sources}
\caption{Part of PolyIneq-Real Benchmark}
\label{tab:polynomial_dataset_polyineq_real}
\begin{tabular}{l c c c p{7.5cm}}
% \begin{tabular}{@{}l>{\raggedright\arraybackslash}p{2.3cm}lcl@{}}
\toprule
\textbf{Source} & \textbf{\#Variables (n)} & \textbf{\#Degree} &\textbf{\#Terms} & \multicolumn{1}{l}{\textbf{\( f(x) \)}}  \\
\midrule
\cite{chen2014brief} %(Example 1.2) 
&3 & 2 & 6 & $x_1^2 - x_1x_2 - x_1x_3 + x_2^2 - x_2x_3 + x_3^2$ \\
\midrule
\cite{chen2014brief} %(Exercise 1.3) 
& 3& 6 &6 & $x_1^6 - x_1^4x_2^2 - x_1^2x_3^4 + x_2^6 - x_2^4x_3^2 + x_3^6$\\
\midrule
\cite{nguyen2012nice} %(Q 270) 
& 3& 8 &7 & $4x_1^8 - 4x_1^2x_2^2 - 4x_1^2x_3^2 + 4x_2^8 - 4x_2^2x_3^2 + 4x_3^8 + 3$\\
\midrule
\cite{nguyen2012nice} %(Q 453)
& 3& 4 & 6 & $-2x_1^4 + 2x_1^2x_2^2 + 2x_1^2x_3^2 + x_2^4 - 4x_2^2x_3^2 + x_3^4$  \\
\midrule
\cite{riasat2008basics} %(Example 2.1.1)
& 3& 2 & 6& $2x_1^2 - 2x_1x_2 - 2x_1x_3 + 2x_2^2 - 2x_2x_3 + 2x_3^2$  \\
\midrule
% \cite{riasat2008basics} %(Example 3.2.1)
% & & 4 & & $3(x_1^4 + x_2^4 + x_3^4) - (x_1^2 + x_2^2 + x_3^2)^2$  \\
% \midrule
% IMO 1964 %Q2
% & & 6 & & $x_1^6 + x_2^6 + x_3^6 + 3x_1^2x_2^2x_3^2 - x_1^2x_2^2(x_1^2 + x_2^2) -  x_2^2x_3^2(x_2^2 + x_3^2) - x_3^2x_1^2(x_3^2 + x_1^2)$ \\
% \midrule
% District Olympiad 2014 & & 4 &  & $x_1^2x_2^2 + x_1^2 - 12x_1x_2 - 4x_1 + x_2^2 - 6x_2 + 49$ \\
% \midrule
IMO Short List 1998& 3 & 8 & 7& $4x_1^8 + 4x_1^6 + 4x_2^8 + 4x_2^6 + 4x_3^8 + 4x_3^6 - 3$   \\
\midrule
\cite{manfrino2010inequalities} &3 & 8 & 6
% (Exercise 1.41) 
 & $x_1^4x_2^4 - x_1^4x_2^2x_3^2 + x_1^4x_3^4 - x_1^2x_2^4x_3^2 - x_1^2x_2^2x_3^4 + x_2^4x_3^4$ \\
\midrule
Canada 2002 & 3& 8 & 6 & $x_1^8 - x_1^4x_2^2x_3^2 - x_1^2x_2^4x_3^2 - x_1^2x_2^2x_3^4 + x_2^8 + x_3^8$  \\
\midrule
Spain 1996 & 3& 4 &  6& $x_1^4 - 4x_1^2x_2^2 + 2x_1^2x_3^2 + 4x_2^4 - 4x_2^2x_3^2 + x_3^4$ \\
\midrule
% Columbia 2001 & & 2&  6 & $3x_1^2 + 3x_1x_2 + 6x_1 + 3x_2^2 + 6x_2 + 4$  \\
% \midrule
Schur's Inequality ($r=2$) &3 & 8& 12 & $x_1^8 - x_1^6x_2^2 - x_1^6x_3^2 + x_1^4x_2^2x_3^2 - x_1^2x_2^6 + x_1^2x_2^4x_3^2 + x_1^2x_2^2x_3^4 - x_1^2x_3^6 + x_2^8 - x_2^6x_3^2 - x_2^2x_3^6 + x_3^8$ \\
\midrule
% Lax-Lax Polynomial & Classical Inequalities & $\begin{aligned} &x_1x_2x_3x_4 - x_1(x_2 - x_1)(x_3 - x_1)(x_4 - x_1) - \\ & x_2(x_1 - x_2)(x_3 - x_2)(x_4 - x_2) - x_3(x_1 - x_3)(x_2 - x_3)(x_4 - x_3)  - x_4(x_1 - x_4)(x_2 - x_4)(x_3 - x_4) \end{aligned}$ & 4   \\
Robinson Polynomial&3 & 6 &10  & $\begin{aligned} &x_3^6 - x_2^4x_3^2 - x_2^2x_3^4 + x_2^6 - x_1^2x_3^4  + 3x_1^2x_2^2x_3^2 - x_1^2x_2^4 - \\& x_1^4x_3^2 - x_1^4x_2^2 + x_1^6 \end{aligned}$  \\
\midrule
\cite{mildorf2005olympiad} %(Q 37) 
& 3 &4  & 9  & $x_1^4 - 3x_1^3x_2 + 2x_1^2x_2^2 + 2x_1^2x_3^2 - 3x_1x_3^3 + x_2^4 - 3x_2^3x_3 + 2x_2^2x_3^2 + x_3^4$\\
\midrule
% Problems of Vasc and Arqady
\cite{Parvardi2011ProblemsVascArqady}
% (Q56) 
&3 & 8 & 12 & $\begin{aligned} &x_1^8 + 4x_1^6x_2^2 - 4x_1^6x_3^2 + 2x_1^4x_2^4 + 2x_1^4x_3^4 - 4x_1^2x_2^6 + \\ &4x_1^2x_3^6 + x_2^8 + 4x_2^6x_3^2 + 2x_2^4x_3^4 - 4x_2^2x_3^6 + x_3^8 \end{aligned}$\\
\midrule
% The Interesting Around %%% TODO
% Technical Analysis Three Variable Inequalities 
\cite{NguyenZhou2014ThreeVariableInequalities}
% (Problem 12)  
& 3 & 6 & 10& $\begin{aligned} &4x_1^6 + 12x_1^4x_2^2 - 15x_1^4x_3^2 - 15x_1^2x_2^4 - 3x_1^2x_2^2x_3^2 + \\& 12x_1^2x_3^4 + 4x_2^6 + 12x_2^4x_3^2  - 15x_2^2x_3^4 + 4x_3^6 \end{aligned}$ \\
\midrule
\cite{lee2005topics} 
% (Exercise 9) 
& 3 &6 & 10 & $\begin{aligned} &x_1^2x_2^2x_3^2 - 2x_1^2x_2x_3 + x_1^2 - 2x_1x_2^2x_3  - 2x_1x_2x_3^2 + \\&2x_1x_2 + 2x_1x_3 + x_2^2 + 2x_2x_3 + x_3^2 \end{aligned}$\\
\midrule
% 118 Mathematical Competition Inequalities 
\cite{Andreescu2019_118Inequalities}
% (Q48)  
&4 &6  & 15& $\begin{aligned} &x_1^6 + 3x_1^4x_2^2 + 3x_1^2x_2^4 - 4x_1^2x_2^2 - 4x_1^2x_3^2 - 4x_1^2x_4^2 + x_2^6 \\&- 4x_2^2x_3^2 - 4x_2^2x_4^2 + x_3^6 + 3x_3^4x_4^2 + 3x_3^2x_4^4 \\&- 4x_3^2x_4^2 + x_4^6 + 8 \end{aligned}$\\
\midrule
\cite{el2008computing} &4 &8 & 20 & $\begin{aligned} &1 - 8x_3^2x_4^2 - 196608x_5^3x_1^2x_4^2x_3 + 1536x_5x_1x_4^4x_3^2 + \\&21504x_5^2x_1x_4^2x_3  - 4096x_5^2x_1x_3^3x_4^2 - 384x_5x_1x_4^2  + \\&1024x_5^2x_1x_3 + 16x_3^4x_4^4 - 72x_3^2x_4^4 + 1024x_3^2x_5^2 + \\&36864x_5^2x_1^2x_4^4 - 3456x_5x_1x_4^4 + 262144x_5^4x_1^2x_3^2 - \\&32768x_5^3x_1x_3^2 + 256x_3^3x_4^2x_5 - 576x_3x_5x_4^2 + 81x_4^4 \\ &+ 64x_3x_5 - 18x_4^2 \end{aligned}$ \\
\midrule
% \cite{mai2022practical} & &6 &  & $\begin{aligned} &1 + x_2^2 + x_2^4 + x_2^6 + x_1^2 - 9x_1^2x_2^2  + x_1^2x_2^4 + x_1^4 + x_1^4x_2^2 \\&+ x_1^6 \end{aligned}$\\
% \midrule
\cite{magron2023sum} & 3&4 &9 & $\begin{aligned} &x_1^4 + x_1x_2^3 + x_2^4 - 3x_1^2x_2x_3 - 4x_1x_2^2x_3 + 2x_1^2x_3^2 + x_1x_3^3 \\&+ x_2x_3^3 + x_3^4 \end{aligned}$ \\
\bottomrule
\end{tabular}
\end{table}

\begin{table}[H]%[htbp]
\centering
\caption{Part of PolyIneq-Synth Benchmark}
\label{tab:part_of_polyineq_synth_benchmark}
\begin{tabular}{c c c p{9.5cm}}
% \begin{tabular}{@{}l>{\raggedright\arraybackslash}p{2.3cm}lcl@{}}
\toprule
\textbf{\#Variables (n)}  & \textbf{\#Degree} & \textbf{\#Terms} & \multicolumn{1}{c}{\textbf{ \( f(x) \)}}  \\
\midrule
4 & 12& 7  & $6x_1^{12} - 2x_1^6x_2x_3^3 - 4x_1^6x_4^2x_3^4 + 5x_2^2x_4^6 + 4x_2^2x_3^6 - 2x_2x_4^5x_3^4 + 3x_4^4x_3^8$  \\
\midrule
% 4 & Competition-Level & $x_1^4x_2^2x_4^2 - 3x_1^2x_2^2x_3^2x_4^2 + x_1^2x_3^4x_4^2 + x_2^4x_3^2x_4^2 + x_3^8$ &  \\
5& 12 & 10 & $8x_1^{12} + 2x_1^6x_2x_3x_4 + 2x_1^6x_5^2 + 8x_2^2x_3^{10} + 2x_2^2x_3^6x_4 + 6x_2^2x_3^2x_4^2 - 2x_2x_3x_4x_5^2 + 7x_4^2x_5^6 - 2x_4x_5^5 + 8x_5^4$  \\
\midrule
6 & 12 & 13 & $5x_1^{12} + 2x_1^6x_2^6 + 2x_1^6x_3^3x_4^2 + 8x_2^{12} - 2x_2^6x_3^3x_4^2 + 2x_2^6x_3^2x_4^2x_5^2 + 2x_2^6x_4x_6^5 + 5x_3^6x_4^4 + 2x_3^5x_4^4x_5^2 + 5x_3^4x_4^4x_5^4 - 2x_3^3x_4^3x_6^5 - 2x_3^2x_4^3x_6^5x_5^2 + 7x_4^2x_6^{10}$   \\
\midrule
7 & 12 &14  &  $4x_1^{10}x_2^2 + 4x_1^5x_2^3x_3^2x_4^2 - 2x_1^5x_2x_3^3x_5 - 6x_1^5x_2x_6^3x_7^3 + 2x_1^5x_2x_6x_4^2x_7^3 + 5x_2^4x_3^4x_4^4 - 10x_2^2x_3^5x_5x_4^2 - 2x_2^2x_3^2x_6^3x_4^2x_7^3 + 4x_2^2x_3^2x_6x_4^4x_7^3 + 9x_3^6x_5^2 - 8x_3^3x_5x_6^3x_7^3 - 4x_3^3x_5x_6x_4^2x_7^3 + 8x_6^6x_7^6 + 2x_6^2x_4^4x_7^6$  \\
\midrule
8 & 16 & 23 & $9 x_{1}^{12} x_{7}^{2} + 22 x_{1}^{10} x_{3}^{2} x_{5}^{2} x_{8}^{2} + 16 x_{1}^{6} x_{3}^{2} x_{5}^{2} x_{6} x_{7} x_{8}^{2} - 44 x_{1}^{6} x_{3}^{2} x_{5} x_{8} + 4 x_{1}^{6} x_{4}^{2} x_{5}^{4} + 6 x_{1}^{5} x_{3}^{2} x_{5}^{2} x_{6} x_{8} + 8 x_{1}^{5} x_{3}^{2} x_{5} x_{7} x_{8} - 8 x_{1}^{4} x_{3} x_{4} x_{5}^{3} x_{6} x_{7} x_{8} + 11 x_{1}^{4} x_{4}^{2} x_{5}^{4} x_{8}^{2} - 16 x_{1}^{3} x_{3} x_{4} x_{5}^{3} x_{6} + 8 x_{1}^{3} x_{3} x_{4} x_{5}^{2} x_{7} + 23 x_{1}^{2} x_{3}^{2} x_{5}^{2} x_{6}^{2} x_{7}^{2} x_{8}^{2} - 32 x_{1}^{2} x_{3}^{2} x_{5} x_{6} x_{7} x_{8} + 28 x_{1}^{2} x_{3}^{2} + 32 x_{1} x_{3}^{2} x_{5}^{2} x_{6}^{2} x_{7} x_{8} - 8 x_{1} x_{3}^{2} x_{5} x_{6} x_{7}^{2} x_{8} - 16 x_{1} x_{3}^{2} x_{5} x_{6} - 8 x_{1} x_{3}^{2} x_{7} + 2 x_{2}^{4} x_{4}^{2} x_{5}^{2} x_{6}^{2} x_{7}^{2} x_{8}^{2} + 4 x_{2}^{2} x_{3} x_{4} x_{5} x_{6} x_{7}^{2} x_{8} + 21 x_{3}^{2} x_{5}^{2} x_{6}^{2} - 16 x_{3}^{2} x_{5} x_{6} x_{7} + 8 x_{3}^{2} x_{7}^{2}$\\
% 8 & 20 & 26 & $27x_1^{10}x_5^2 - 6x_1^9x_5^5x_6 + 6x_1^8x_5^8x_6^2 + 18x_1^7x_3^3x_4x_5x_7^2x_8^2 + 32x_1^7x_5 - 8x_1^6x_2x_4x_5 + 6x_1^6x_3^3x_4x_5^4x_6x_7^2x_8^2 + 24x_1^6x_3^2x_5x_7^3 + 28x_1^4x_3^6x_4^2x_7^4x_8^4 + 8x_1^4x_3^3x_4x_7^2x_8^2 + 25x_1^4 - 24x_1^3x_2x_3^3x_4^2x_7^2x_8^2 - 12x_1^3x_2x_4 + 40x_1^3x_3^5x_4x_7^5x_8^2 + 4x_1^3x_3^2x_7^3 + 28x_1^2x_2^4x_3^2x_6^{10} + 4x_1^2x_2^2x_3^6x_4x_6^3x_7^2x_8^2 + 4x_1^2x_2^2x_3^3x_6^3 + 24x_1^2x_2^2x_4^2 - 6x_1^2x_2x_3^2x_4x_7^3 + 23x_1^2x_3^4x_7^6 - 8x_1^2x_4^2x_5^5 - 6x_1x_2^3x_3^3x_4x_6^3 + 2x_1x_2^2x_3^5x_6^3x_7^3 + x_2^4x_3^6x_6^6 + 4x_4^4x_5^{10}$  \\  %%%  不在数据集
\midrule
9 & 18 &34 & $14x_1^{12}x_3^2x_9^2 - 2x_1^{12}x_3x_9 + 13x_1^{12} + 2x_1^8x_2x_3x_5x_9^2 - 2x_1^8x_2x_5x_9 - 2x_1^7x_2^2x_3^2x_4x_5x_6^2 - 4x_1^7x_2x_3^2x_5^2x_6x_7x_8x_9 + 2x_1^7x_5^2x_9 + 2x_1^6x_3^3x_6^2x_9 + 2x_1^6x_3^2x_5x_8x_9 + 2x_1^6x_3^2x_6^2 + 4x_1^6x_3x_4x_5x_8x_9^2 + 2x_1^6x_3x_5x_8 - 2x_1^6x_7x_8x_9^4 + 15x_1^4x_2^2x_5^2x_9^2 + 2x_1^3x_2^2x_3x_5^3x_6x_7x_8x_9 - 2x_1^3x_2x_5^3x_9^2 + 11x_1^2x_2^4x_3^4x_4^2x_5^2x_6^4 + 12x_1^2x_2^2x_3^2x_5^4x_6^2x_7^2x_8^2 - 4x_1^2x_2x_3x_5^2x_8x_9 - 2x_1^2x_2x_4x_5^2x_8x_9^2 + 2x_1^2x_2x_5x_7x_8x_9^5 + 13x_1^2x_5^4x_9^2 - 2x_1x_2^2x_3^3x_4x_5^2x_6^2x_8 - 2x_1x_2^2x_3^2x_4x_5x_6^2x_7x_8x_9^4 - 2x_1x_2x_3^2x_5^3x_6x_7x_8^2 + 2x_1x_3^2x_5^2x_6^2x_9 + 2x_1x_4x_5^3x_8x_9^2 + 16x_3^4x_6^4 - 2x_3^3x_5x_6^2x_8 + 13x_3^2x_5^2x_8^2 - 2x_3x_4x_5^2x_8^2x_9 + 12x_4^2x_5^2x_8^2x_9^2 + 14x_7^2x_8^2x_9^8$   \\
\midrule
% 10 & 18 & 33 & $13x_1^4x_3^2x_4^2x_5^2x_7^4x_9^2 + 13x_1^4x_5^4x_8^2 - 2x_1^3x_{10}^2x_2x_3x_4^3x_5x_7^3x_8^2x_9 - 2x_1^3x_{10}^2x_2x_4^2x_5^2x_7x_8^3 - 2x_1^3x_{10}x_2x_3^2x_4^2x_5x_7^2x_9^2 - 2x_1^3x_{10}x_2x_3x_4x_5^2x_8x_9 - 2x_1^3x_3x_4x_5x_6x_7^2x_8x_9 + 2x_1^3x_5^2x_6x_8^2 + 2x_1^2x_{10}^6x_5^2x_7^3x_8 + 13x_1^2x_{10}^4x_2^2x_4^4x_7^2x_8^4 - 4x_1^2x_{10}^3x_2^2x_3x_4^3x_7x_8^2x_9 - 4x_1^2x_{10}^3x_3x_4^2x_5^2x_6x_7^3x_9^3 + 2x_1^2x_{10}^3x_4x_5^3x_6x_7x_8x_9^2 + 10x_1^2x_{10}^2x_2^2x_3^2x_4^2x_9^2 + 2x_1^2x_{10}^2x_2x_4^2x_6x_7x_8^3 - 2x_1^2x_{10}x_2x_3x_4x_6x_8x_9 + 2x_1^2x_{10}x_2x_3x_5^4x_8^2 + 12x_1^2x_6^2x_8^2 + 2x_1x_{10}^7x_2x_3x_4x_7^3x_9 - 2x_1x_{10}^5x_2x_4^3x_5x_6x_7^2x_8^2x_9^2 + 2x_1x_{10}^4x_2x_3x_4^2x_5x_6x_7x_9^3 + 2x_1x_{10}^3x_4x_5x_6^2x_7x_8x_9^2 + 2x_1x_{10}^2x_2^2x_3x_4x_5x_6x_7x_8^2x_9 + 2x_1x_{10}x_2x_3x_5^2x_6x_8^2 - 2x_1x_{10}x_2x_5x_6^2x_7x_8^3 + 10x_{10}^{12}x_7^6 + 2x_{10}^9x_4x_5x_6x_7^4x_9^2 + 2x_{10}^7x_2x_5x_6x_7^4x_8^2 + 16x_{10}^6x_4^2x_5^2x_6^2x_7^2x_9^4 - 4x_{10}^4x_2x_3x_4x_5^3x_6x_7x_8x_9^2 + 13x_{10}^2x_2^2x_3^2x_5^4x_8^2 - 2x_{10}^2x_2^2x_3x_5^3x_6x_7x_8^3 + 12x_{10}^2x_2^2x_5^2x_6^2x_7^2x_8^4$  \\
10 & 18 & 42 & $10 x_{1}^{10} x_{4}^{6} + 2 x_{1}^{7} x_{4}^{3} x_{7}^{3} x_{8} - 2 x_{1}^{6} x_{3} x_{4}^{4} x_{5} x_{6} x_{7} - 4 x_{1}^{6} x_{3} x_{4}^{3} x_{5} x_{9} - 2 x_{1}^{5} x_{2} x_{4}^{3} x_{6}^{6} x_{7}^{2} - 2 x_{1}^{5} x_{4}^{3} x_{5}^{2} x_{8}^{3} x_{9} + 10 x_{1}^{4} x_{7}^{6} x_{8}^{2} + 2 x_{1}^{3} x_{2} x_{3} x_{6} x_{7}^{5} x_{8}^{2} x_{9} + 4 x_{1}^{3} x_{3} x_{4} x_{5} x_{6} x_{7}^{4} x_{8} + 14 x_{1}^{2} x_{2}^{2} x_{3}^{2} x_{6}^{2} x_{7}^{4} x_{8}^{2} x_{9}^{2} - 4 x_{1}^{2} x_{2} x_{3}^{2} x_{5} x_{6} x_{7}^{2} x_{8} x_{9}^{2} - 2 x_{1}^{2} x_{2} x_{3} x_{7}^{3} x_{8} - 2 x_{1}^{2} x_{2} x_{6}^{6} x_{7}^{5} x_{8} + 2 x_{1}^{2} x_{2} x_{7}^{3} x_{8} + 15 x_{1}^{2} x_{3}^{2} x_{4}^{2} x_{5}^{2} x_{6}^{2} x_{7}^{2} - 4 x_{1}^{2} x_{3}^{2} x_{4} x_{5}^{2} x_{6} x_{7} x_{9} + 16 x_{1}^{2} x_{3}^{2} x_{5}^{2} x_{9}^{2} + 2 x_{1} x_{10} x_{2}^{3} x_{3}^{2} x_{4} x_{6} x_{7}^{3} x_{8}^{2} x_{9}^{2} + 2 x_{1} x_{10} x_{2}^{2} x_{3}^{2} x_{4} x_{5} x_{7} x_{8} x_{9}^{2} - 2 x_{1} x_{2}^{2} x_{3}^{2} x_{6} x_{7}^{2} x_{8} x_{9} - 2 x_{1} x_{2}^{2} x_{3} x_{6}^{7} x_{7}^{4} x_{8} x_{9} - 2 x_{1} x_{2}^{2} x_{3} x_{6} x_{7}^{2} x_{8} x_{9} - 2 x_{1} x_{2} x_{3}^{2} x_{5} x_{9} - 4 x_{1} x_{2} x_{3} x_{4} x_{5} x_{6}^{7} x_{7}^{3} + 2 x_{1} x_{2} x_{3} x_{4} x_{5} x_{6} x_{7} - 4 x_{1} x_{2} x_{3} x_{5}^{2} x_{6} x_{7}^{2} x_{8}^{4} x_{9}^{2} - 2 x_{1} x_{2} x_{3} x_{5} x_{6}^{6} x_{7}^{2} x_{9} - 2 x_{1} x_{2} x_{3} x_{5} x_{9} + 2 x_{1} x_{3} x_{5}^{3} x_{8}^{3} x_{9}^{2} + 13 x_{10}^{2} x_{2}^{4} x_{3}^{2} x_{4}^{2} x_{7}^{2} x_{8}^{2} x_{9}^{2} + 2 x_{10} x_{2}^{3} x_{3}^{2} x_{4} x_{7} x_{8} x_{9} - 2 x_{10} x_{2}^{3} x_{3} x_{4} x_{6}^{6} x_{7}^{3} x_{8} x_{9} + 2 x_{10} x_{2}^{3} x_{3} x_{4} x_{7} x_{8} x_{9} - 2 x_{10} x_{2}^{2} x_{3} x_{4} x_{5}^{2} x_{7} x_{8}^{4} x_{9}^{2} + 10 x_{2}^{2} x_{3}^{2} - 2 x_{2}^{2} x_{3} + 14 x_{2}^{2} x_{6}^{12} x_{7}^{4} + 2 x_{2}^{2} x_{6}^{6} x_{7}^{2} + 15 x_{2}^{2} + 2 x_{2} x_{3} x_{5}^{2} x_{8}^{3} x_{9} - 4 x_{2} x_{5}^{2} x_{8}^{3} x_{9} + 12 x_{5}^{4} x_{8}^{6} x_{9}^{2}$\\
\bottomrule
\end{tabular}
\end{table}

\begin{figure}[htbp]
  \centering
  \includegraphics[width=\linewidth]{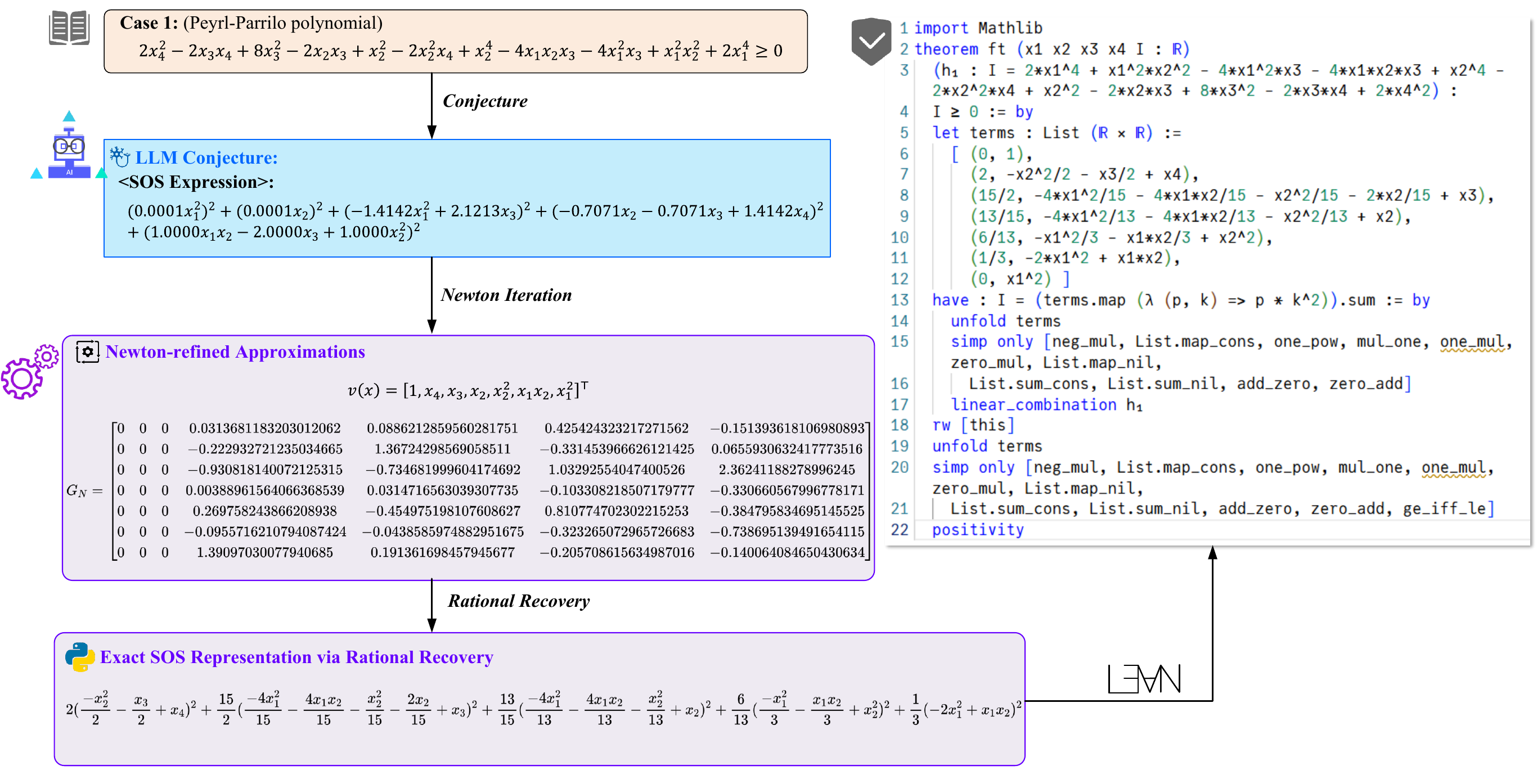}% 
  % \caption{Case Study of Peyrl-Parrilo polynomial.} 
  \caption{\textbf{Detailed workflow of the NSPI method applied to the Peyrl-Parrilo polynomial from PolyIneq-Real (Case 1)}.This example illustrates the complete pipeline from initial conjecture to formal certification.
  % :(1)LLM Conjecture:
  }
  \label{fig:case_study_1}
\end{figure}

%% 待补充caption
\begin{figure}[htbp]%[htbp]%[H]
  \centering
  \includegraphics[width=\linewidth]{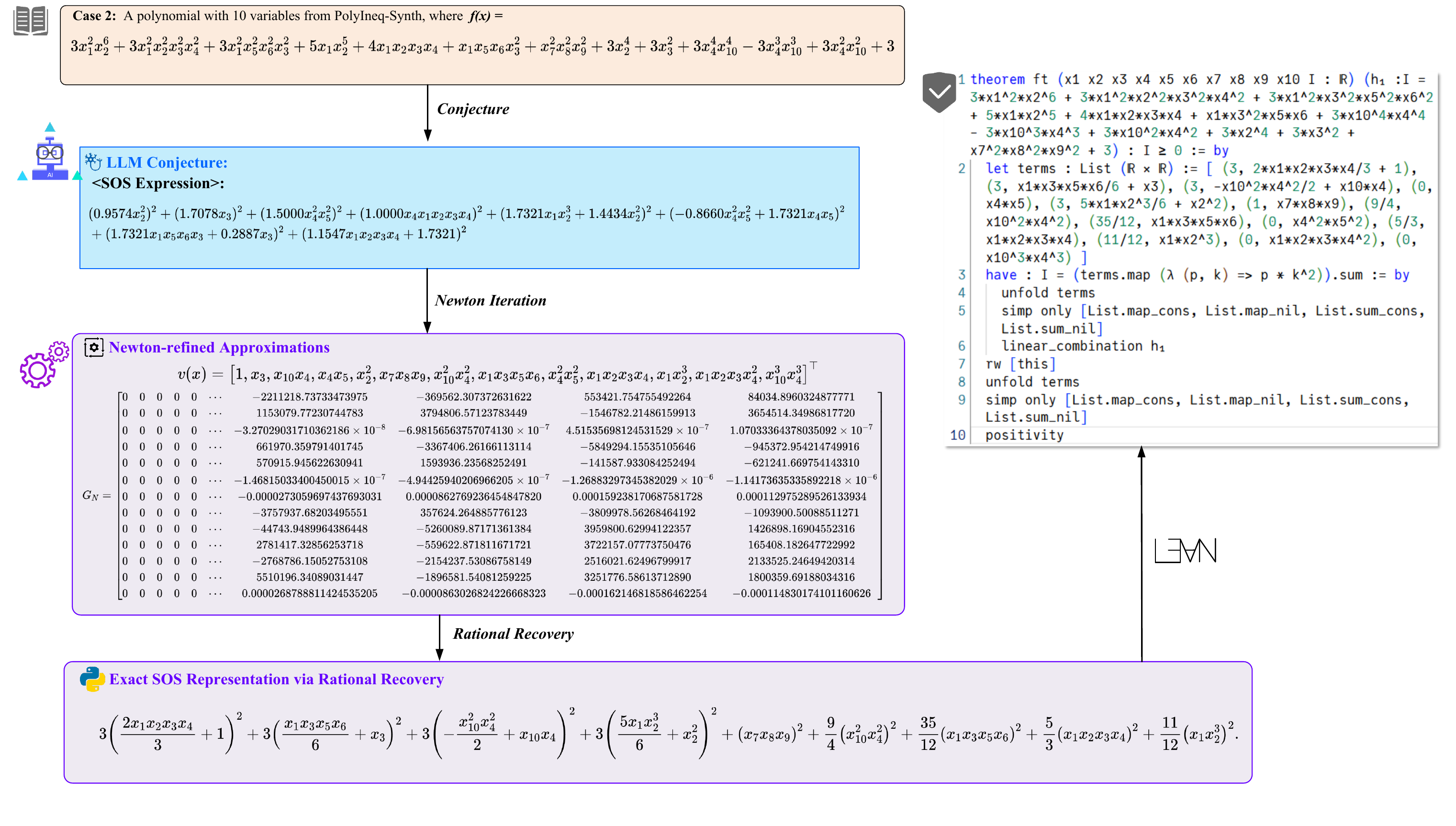}% 
  % \includegraphics[width=\linewidth]{figures/appendix/case_study_2_v1.pdf}% 
  % \caption{Case Study of a challenging synthetic polynomial.}   
  \caption{\textbf{Case study of a challenging synthetic polynomial from PolyIneq-Synth.} 
  This 10-variable instance represents a significant challenge in high-dimensional formal reasoning; notably, \textbf{NSPI is the sole method among all evaluated baselines (Case 2)} capable of successfully generating a verified formal proof. 
  % The workflow demonstrates how NSPI bridges the gap between neural conjectures and symbolic rigor: (1) an initial SOS candidate is conjectured by the LLM; (2) numerical coefficients are refined via Newton iteration; (3) exact rational recovery is performed to eliminate numerical errors; and (4) the final proof is formally certified within the Lean 4 environment using Mathlib tactics.
  }
  \label{fig:case_study_2}
\end{figure}

\section{Case Study}
% 完整过程示例详解
% 可以画一个类似IneqSearch的案例图 （数学公式）
% \subsection{Case Analysis of  Inequality}  %%% 经典例子的案例分析
Here we provide detailed case studies illustrating the complete proof process of NSPI on two representative examples drawn from PolyIneqBench. 

Case~1 is a classical inequality from the PolyIneq-Real subset, whereas Case~2 is a challenging competition-level synthetic inequality 
% with seven variables 
from the PolyIneq-Synth subset, corresponding to a higher-dimensional and more difficult setting. 
Notably, for Case~2, NSPI is the \textbf{only method} among all baselines that successfully completes the proof.  %%% TODO 待核查

\subsection{Case 1: Peyrl-Parrilo polynomial.} 
% 来自PolyIneq-Real的经典例子
Case~1 features a classic non-negative polynomial drawn from \cite{Peyrl2008ComputingSO,article2014Analgorithm}. %%%TODO
Fig.~\ref{fig:case_study_1} presents an example of generating a complete proof using the NSPI method.

\subsection{Case 2: A challenging synthetic polynomial.} 
% 来自合成数据的7元困难不等式（竞赛级别例子
                                   
%%% 段名：更加困难的例子
% 加一个合成数据的案例分析！！   （我们能解 其他方法解不了的情况
We present a challenging 10-variable synthetic polynomial instance from the PolyIneq-Synth dataset. Notably, for this specific case, NSPI is the only method among all considered baselines that successfully generates a complete formal proof.

\section{More Experimental Results} 

\subsection{Results under Different Computational Budgets} %% 不同计算budget下的结果
% 相比于不同LLM provers 不同计算budget下的结果
Here, we report the performance of various LLM-based provers and our NSPI method under different computational budgets. The results on PolyIneqBench are summarized in~\cref{tab:appendix_poly_method_comparison_diffK}.

\begin{table}[H]
\centering
\caption{Comparative results of multiple methods on PolyIneqBench under different computational budgets}
\label{tab:appendix_poly_method_comparison_diffK}
\begin{small}
\begin{tabular}{l c c c}
\toprule
\textbf{Method Name} & \textbf{Computational Budget} & \textbf{PolyIneq-Real} & \textbf{PolyIneq-Synth} \\
\midrule

\multirow{3}{*}{DeepSeek-Prover-V2}
& pass@8\;\;  & \text{28.43\%} & \text{0.00\%} \\
& pass@16 & \text{32.35\%} & \text{0.00\%} \\
& pass@32 & \text{37.25\%} & \text{0.00\%} \\
\midrule

\multirow{3}{*}{Goedel-Prover-V2}
& pass@8\;\;  & \text{15.69\%} & \text{0.48\%} \\
& pass@16 & \text{16.67\%} & \text{0.48\%} \\
& pass@32 & \text{18.63\%} & \text{0.48\%} \\
\midrule

\multirow{3}{*}{Kimina-Prover}
& pass@8\;\;  & \text{27.45\%} & \text{0.24\%} \\
& pass@16 & \text{30.39\%} & \text{0.24\%} \\
& pass@32 & \text{33.33\%} & \text{0.71\%} \\
\midrule

\multirow{3}{*}{\textbf{NSPI (ours)}}
& pass@8\;\;  & \text{36.27\%} & \text{23.57\%} \\
& pass@16 & \text{40.20\%} & \text{25.48\%} \\
& pass@32 & \text{43.14\%} & \text{25.95\%} \\

\bottomrule
\end{tabular}
\end{small}
\end{table}

% \subsection{Comparison of Average Polynomial Term Count Across Different Methods} 
\subsection{Detailed Comparative Study on Benchmark Difficulty Metrics}
% \subsection{Comparative Analysis of Different Polynomial Solving Methods}
% 变元数
% 项数
% 次数 三个方面统计

% \begin{figure}[htbp]
%   \centering
%   \includegraphics[width=0.5\linewidth]{figures/appendix/avg_terms_bar.pdf}% 
%   \caption{Average Number of Terms in Polynomials Solved by Different Methods.}   
%   \label{fig:appendix_Number_of_Terms}
% \end{figure}

% \begin{figure}
%     \centering
%     \includegraphics[width=0.5\linewidth]{figures/appendix/avg_degrees_bar.pdf}
%     \caption{Average t of Terms in Polynomials Solved by Different Methods.}
%     \label{fig:placeholder}
% \end{figure}

\begin{figure}[H]
  \centering
  \begin{subfigure}{0.47\textwidth}
    \centering
    \includegraphics[width=\linewidth]{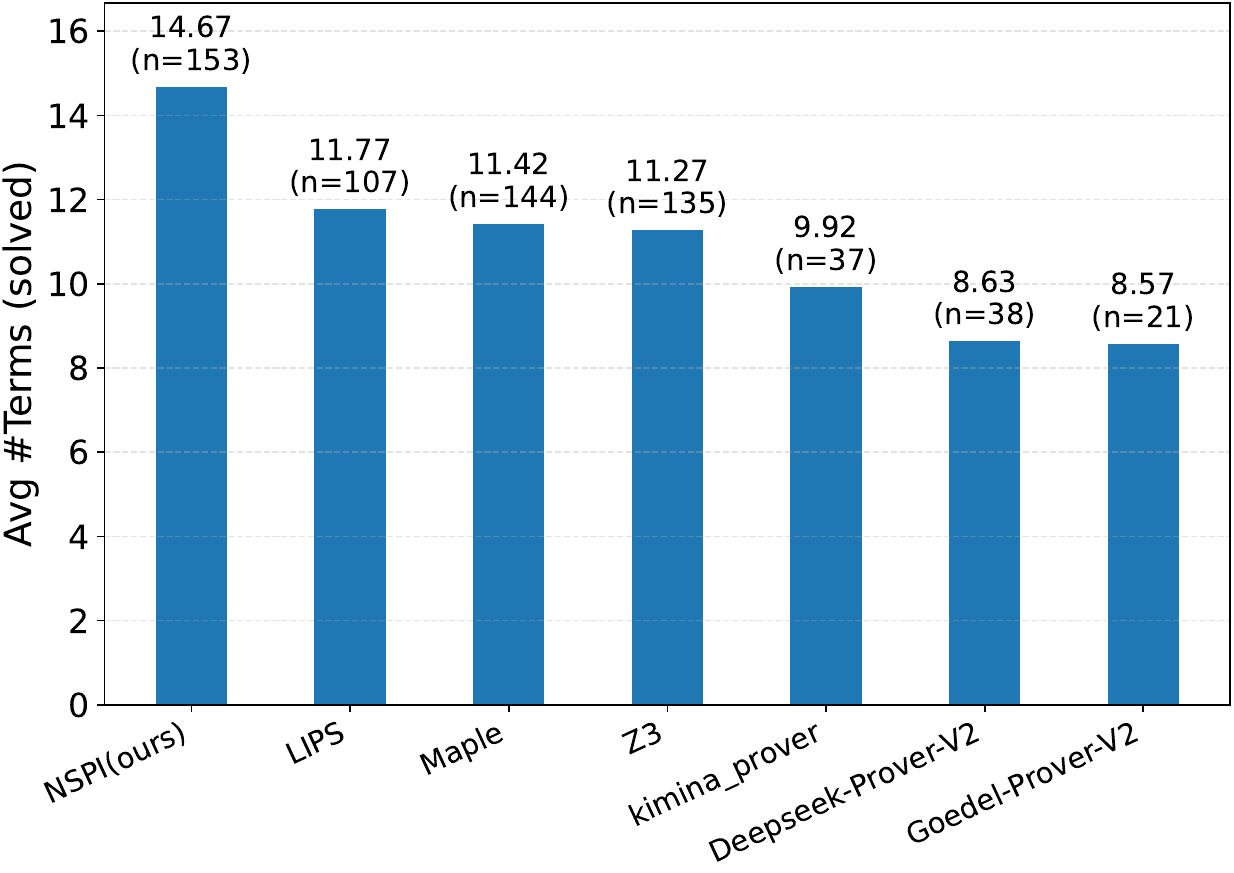}
    \caption{Average Number of Terms in Polynomials Solved by Different Methods.}
    \label{fig:appendix_Number_of_Terms}
  \end{subfigure}
  \hfill
  \begin{subfigure}{0.47\textwidth}
    \centering
    \includegraphics[width=\linewidth]{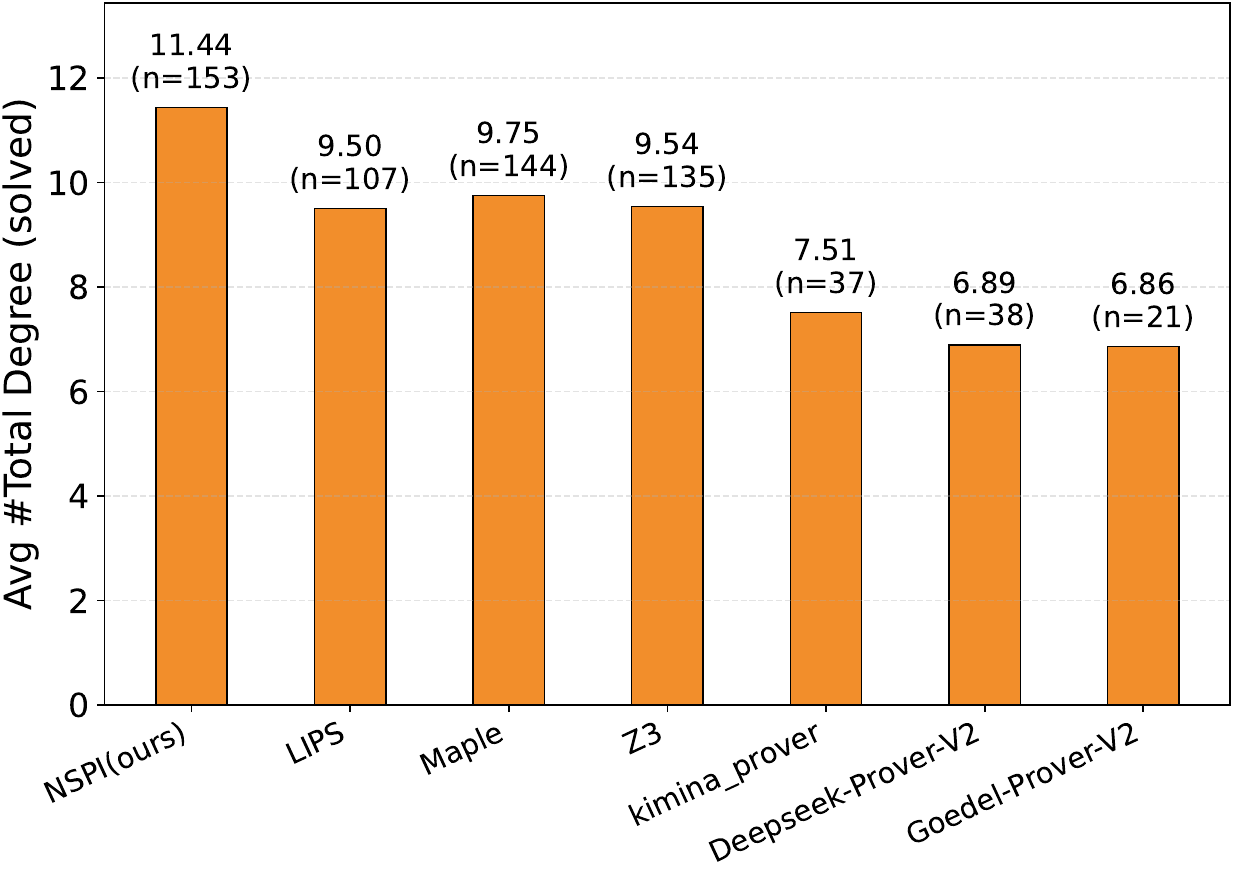}
    \caption{Average Total Degree of Terms in Polynomials Solved by Different Methods.}
    \label{fig:appendix_Total_Degree}
  \end{subfigure}
  \caption{Comparative analysis of different polynomial solving methods: (a) average number of terms; (b) average total degree.}
  \label{fig:appendix_combined_metrics}
\end{figure}

\begin{figure}[htbp]
    \centering
    \includegraphics[width=0.5\linewidth]{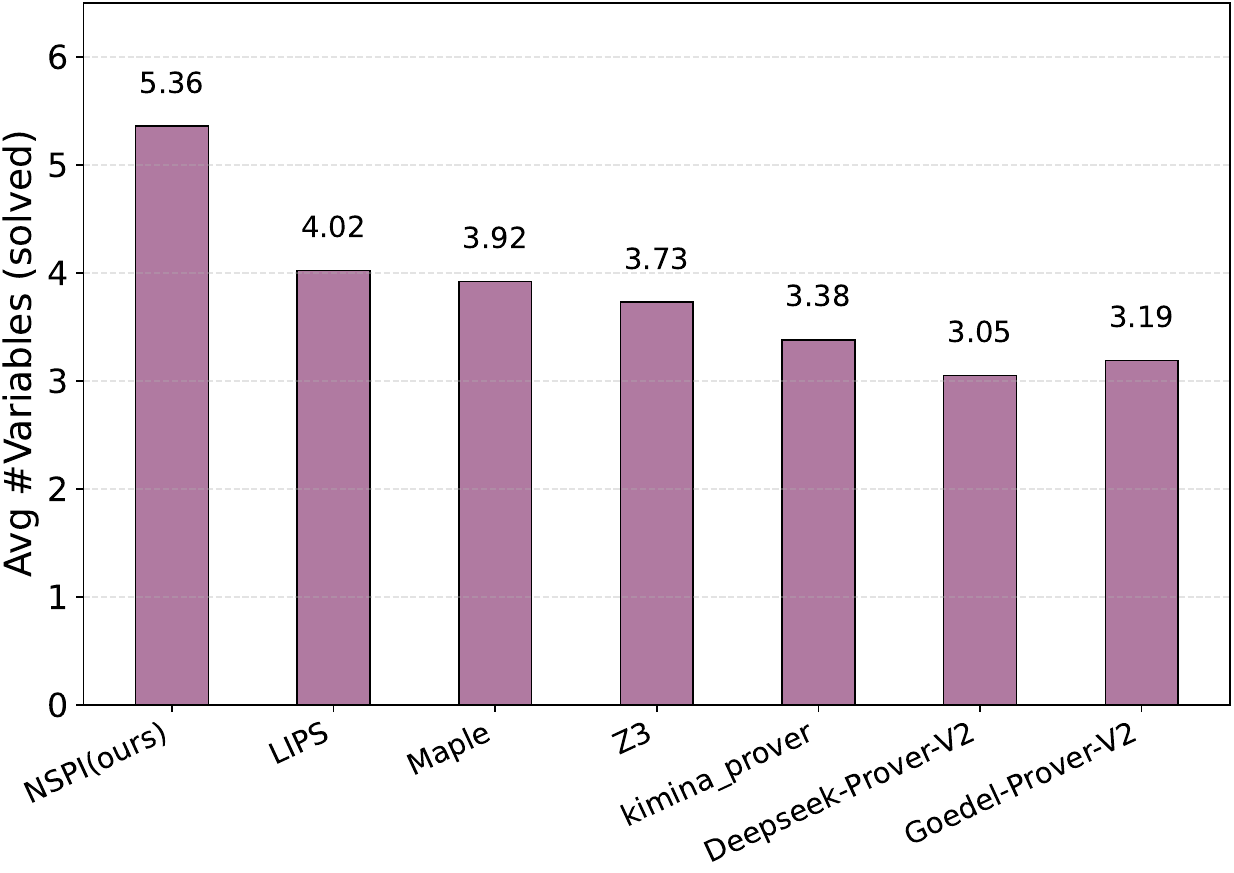}
    \caption{\textbf{Average number of variables in polynomial inequalities successfully verified by each method.} This metric illustrates the scalability of various provers, where NSPI (ours) demonstrates a superior capability in handling higher-dimensional problems compared to both symbolic baselines and LLM-based provers.}
    \label{fig:appendix_avg_vars}
\end{figure}

Fig.~\ref{fig:appendix_combined_metrics} and Fig.~\ref{fig:appendix_avg_vars} present a comparative analysis of different polynomial inequality solving methods in terms of average number of terms (Fig.~\ref{fig:appendix_combined_metrics} (a)), average total degree (Fig.~\ref{fig:appendix_combined_metrics} (b)), and average number of variables (Fig.~\ref{fig:appendix_avg_vars}) in successfully solved instances. 

Fig.~\ref{fig:appendix_combined_metrics} shows that our NSPI outperforms other methods in terms of the average number of terms and total degree in the polynomials it solves, suggesting its ability to handle more complex polynomials. 

Fig.~\ref{fig:appendix_avg_vars} highlights that NSPI also excels in handling higher-dimensional problems, as it successfully solves polynomials with the largest average number of variables, outperforming symbolic and LLM-based provers. 
These metrics together demonstrate NSPI’s superior scalability in solving complex and high-dimensional polynomial inequalities compared to both symbolic and LLM-based approaches.

\section{Extended Discussion}
% 再次强调现有方法的局限性以及NSPI的优势
% 需要讨论 现有的 根据不等式引理的证明方法 与本文方法的不同  再强调动机和贡献
% 有前景的研究方向

% \paragraph{Neuro-Symbolic Reasoning as a Promising Research Direction.} 
Neuro-symbolic reasoning systems represent a highly valuable and promising research direction. It is worth noting that the neuro-symbolic approach to inequality proving proposed in this work differs fundamentally from existing neuro-symbolic practices. In particular, we innovatively explore the feasibility of employing large language models (LLMs) as \emph{core symbolic conjecture engines} that directly provide end-to-end symbolic priors, thereby opening a promising new direction in which LLMs function as the central component for symbolic conjecturing. 
Moreover, we extend formalized theorem proving for polynomial inequalities to challenging high-dimensional cases with up to 10 variables, thereby \textit{extending the boundary of automated proof generation for polynomial inequalities}. 
To further contextualize the contributions of this work, we provide a comparative discussion with existing neuro-symbolic reasoning approaches.

\subsection{A New Paradigm: LLMs as Core Symbolic Conjecture Engines} 
In recent years, several neuro-symbolic hybrid reasoning approaches have demonstrated encouraging progress in automated theorem proving.
However, in most existing systems, neural modules primarily serve \emph{auxiliary} roles that guide or accelerate symbolic derivation processes by reducing the search space.

For example, AlphaGeometry~\cite{trinh2024alphageometry} uses an LLM to predict auxiliary constructions, while the proof is completed by a symbolic solver that enumerates derivation rules. 
AIPS~\cite{wei2024AIPS} employs a learned value network to evaluate intermediate subgoals during symbolic proof search. 
LIPS~\cite{li2025LIPS} leverages an LLM to generate candidate goal-rewriting steps and selects promising subgoals to prune the strategy space. 
Similarly, IneqSearch~\cite{li2025ineqsearch} leverages LLMs to perform inequality transformations or to select appropriate transformation rules, integrating an iterative learning mechanism.
In these systems, neural components assist decision-making within a symbolic engine, whereas the ability of neural models to directly provide symbolic reasoning content remains relatively underexplored.

In contrast, our work treats the LLM as a \emph{core symbolic conjecturing engine}. 
The LLM directly produces the symbolic priors required to complete a proof, in the form of SOS structure conjectures.
These conjectures are subsequently refined through Newton-style numerical refinement and rational recovery (Section~\ref{sec:Symbolic Correction}), and finally transformed into complete formal Lean proofs (Section~\ref{sec:Formal Verification}). 
This end-to-end pipeline demonstrates the feasibility of a new neuro-symbolic paradigm in which LLMs provide high-level symbolic structure, while symbolic computation and formal verification ensure correctness.

\subsection{Extending the Frontier of Formal Polynomial Inequality Proving}
\label{app:discussion:frontier}

% Automated formal proving of polynomial inequalities remains a challenging problem.
% Although automated theorem proving has made substantial progress, competition-level polynomial inequalities---especially those involving many variables and high-dimensional structures---are still difficult for existing systems.
Automated formal proving of polynomial inequalities remains a challenging problem. Although automated theorem-proving techniques have advanced rapidly, competition-level polynomial inequalities continue to pose significant difficulties, especially in settings involving many variables or high-dimensional structures.

Learning-based approaches to formal inequality proving often face a bottleneck in the availability of high-quality formal data.
Widely used theorem-proving benchmarks, such as miniF2F~\cite{zheng2021minif2f},  ProofNet~\cite{azerbayev2023proofnet} and Putnum~\cite{putnum} contain very few polynomial inequality problems; 
this scarcity is particularly pronounced for high-dimensional multivariate polynomials. 
% and almost none involving high-dimensional multivariate polynomials.
Existing neuro-symbolic inequality proving methods also tend to focus on low-dimensional cases with a small number of variables, and struggle to scale to more complex settings.

The neuro-symbolic approach proposed in this work extends Lean-based formal polynomial inequality proving to settings involving up to ten variables, and further introduces a competition-level benchmark covering inequalities with three to ten variables. As demonstrated by the results in Section~\ref{sec:Exp}, our method consistently outperforms all LLM-based and symbolic baselines on this 
% competition-level 
challenging 
benchmark, with particularly strong performance in cases involving a larger number of variables. These results indicate that the proposed approach significantly expands the frontier of formal polynomial inequality theorem proving. 

% The neuro-symbolic framework proposed in this work significantly extends the scope of formal polynomial inequality proving.
% By reformulating inequality proving as an SOS-based certification problem and leveraging LLM-generated symbolic structure priors, NSPI enables Lean-based formal verification for inequalities involving up to ten variables.
% As demonstrated by the experimental results in Section~\ref{sec:Exp}, our method consistently outperforms both LLM-based and purely symbolic baselines, with particularly strong gains in high-variable regimes.
% These results indicate that NSPI substantially expands the boundary of what can be achieved in formal polynomial inequality proving.

\section{More Examples} 
%%%% 多加一两个 其他方法失败 我们方法成功的例子的完整Lean代码
%%%% 来自真实例子 && 困难合成数据
%%% TODO 多放几个更困难合成数据完整证明例子。

% \subsection{Examples from PolyIneq-Real}
% \subsection{Examples from PolyIneq-Synth}

Here we provide additional examples that were successfully solved exclusively by our proposed NSPI method among all baseline approaches.

% \begin{figure}[htbp]
% \centering
% \begin{tcolorbox}[colback=gray!5!white, colframe=blue!50!black, arc=3mm, title=
% {\texttt{NSPI Achieves Solution While Baseline Methods \textbf{Fail}(1)}}
% % {\texttt{Example 1 (NSPI Output)}}
% ]
% \textbf{Polynomial:}\\
% $5x_1^6x_2^6 + 5x_3^4x_4^4x_5^4 - 2x_3^2x_4^2x_6x_7^4x_8x_5^2 - 2x_3^2x_4^2x_8^2x_5^6 + 5x_6^4x_5^4 - 2x_6^3x_7^4x_8x_5^2 + 5x_6^2x_7^8x_8^2 - 2x_6^2x_8^2x_5^6 + 6x_8^4x_5^8$
% % \\
% \medskip
% \textbf{Success on:} Gemini-3-Pro，Gpt-5.2，Deepseek-V3.2，NSPI(ours)
% \medskip
% \textbf{Lean Proof:} 
% \begin{minted}[fontsize=\small, breaklines]{lean}
% import Mathlib
% theorem ft (x1 x2 x3 x4 x5 x6 x7 x8 I : ℝ) (h1 :I = 5*x1^6*x2^6 + 5*x3^4*x4^4*x5^4 - 2*x3^2*x4^2*x5^6*x8^2 - 2*x3^2*x4^2*x5^2*x6*x7^4*x8 + 6*x5^8*x8^4 - 2*x5^6*x6^2*x8^2 + 5*x5^4*x6^4 - 2*x5^2*x6^3*x7^4*x8 + 5*x6^2*x7^8*x8^2) : I >= 0 := by
%   let terms : List (ℝ × ℝ) := [ (0, 1), (5, -x5^4*x8^2/5 + x5^2*x6^2 - x6*x7^4*x8/5), (29/5, -5*x3^2*x4^2*x5^2/29 + x5^4*x8^2 - x6*x7^4*x8/29), (139/29, -30*x3^2*x4^2*x5^2/139 + x6*x7^4*x8), (0, x4*x6*x7^4), (0, x4^2*x5^4), (640/139, x3^2*x4^2*x5^2), (5, x1^3*x2^3) ]
%   have : I = (terms.map (fun (p, k) => p * k^2)).sum := by
%     unfold terms
%     simp only [List.map_cons, List.map_nil, List.sum_cons, List.sum_nil]
%     linear_combination h1
%   rw [this]
%   unfold terms
%   simp only [List.map_cons, List.map_nil, List.sum_cons, List.sum_nil]
%   positivity
% \end{minted}
% \end{tcolorbox}
% \end{figure}

\begin{figure}[htbp]
\centering
\begin{tcolorbox}[colback=gray!5!white, colframe=blue!50!black, arc=3mm, title=
{\texttt{NSPI Achieves Solution While Baseline Methods \textbf{Fail}(1)}}
% {\texttt{Example 1 (NSPI Output)}}
]
\textbf{Polynomial:}\\
$8x_1^6 x_2^2 x_3^2 x_4^2 + 6x_1^4 x_5^2 x_6 x_2 x_3 x_4 + 8x_1^3 x_7^6 x_2 x_3 x_4 + 8x_1^3 x_6 x_2^3 x_3^2 x_4 + 6x_1^3 x_2 x_3 x_4^3 x_8^2 x_9^2 + 9x_1^2 x_5^4 x_6^2 - 6x_1 x_7^6 x_5^2 x_6 - 4x_1 x_5^2 x_6^2 x_2^2 x_3 + 10x_1 x_5^2 x_6 x_4^2 x_8^2 x_9^2 + 9x_7^{12} + 2x_7^6 x_6 x_2^2 x_3 + 4x_7^6 x_4^2 x_8^2 x_9^2 + 9x_6^2 x_2^4 x_3^2 - 8x_6 x_2^2 x_3 x_4^2 x_8^2 x_9^2 + 6x_4^4 x_8^4 x_9^4$
% $ $
% 8*x1^6*x2^2*x3^2*x4^2 + 6*x1^4*x5^2*x6*x2*x3*x4 + 8*x1^3*x7^6*x2*x3*x4 + 8*x1^3*x6*x2^3*x3^2*x4 + 6*x1^3*x2*x3*x4^3*x8^2*x9^2 + 9*x1^2*x5^4*x6^2 - 6*x1*x7^6*x5^2*x6 - 4*x1*x5^2*x6^2*x2^2*x3 + 10*x1*x5^2*x6*x4^2*x8^2*x9^2 + 9*x7^12 + 2*x7^6*x6*x2^2*x3 + 4*x7^6*x4^2*x8^2*x9^2 + 9*x6^2*x2^4*x3^2 - 8*x6*x2^2*x3*x4^2*x8^2*x9^2 + 6*x4^4*x8^4*x9^4

\medskip

\textbf{Lean Proof:} 
% \begin{minted}[fontsize=\small, breaklines]{lean}
\begin{lstlisting}[language=lean4, frame=none]
import Mathlib
theorem f1 (x1 x2 x3 x4 x5 x6 x7 x8 x9 I : Real) (h1 :I = 8*x1^6*x2^2*x3^2*x4^2 + 6*x1^4*x2*x3*x4*x5^2*x6 + 8*x1^3*x2^3*x3^2*x4*x6 + 6*x1^3*x2*x3*x4^3*x8^2*x9^2 + 8*x1^3*x2*x3*x4*x7^6 + 9*x1^2*x5^4*x6^2 - 4*x1*x2^2*x3*x5^2*x6^2 + 10*x1*x4^2*x5^2*x6*x8^2*x9^2 - 6*x1*x5^2*x6*x7^6 + 9*x2^4*x3^2*x6^2 - 8*x2^2*x3*x4^2*x6*x8^2*x9^2 + 2*x2^2*x3*x6*x7^6 + 6*x4^4*x8^4*x9^4 + 4*x4^2*x7^6*x8^2*x9^2 + 9*x7^12) : I >= 0 := by
  let terms : List (Real $\times$ Real) := [ (0, 1), (0, x2^2*x3), (9, x1^3*x2*x3*x4/3 + x1*x5^2*x6 - 2*x2^2*x3*x6/9 + 5*x4^2*x8^2*x9^2/9 - x7^6/3), (77/9, 6*x1^3*x2*x3*x4/11 + x2^2*x3*x6 - 26*x4^2*x8^2*x9^2/77 + 3*x7^6/77), (173/77, 224*x1^3*x2*x3*x4/173 + x4^2*x8^2*x9^2 + 291*x7^6/173), (282/173, -13*x1^3*x2*x3*x4/282 + x7^6), (0, x3^2*x4^4), (193/282, x1^3*x2*x3*x4), (0, x4^2*x6*x8^2*x9^2), (0, x6*x7^6) ]
  have : I = (terms.map (fun (p, k) => p * k^2)).sum := by
    unfold terms
    simp only [List.map_cons, List.map_nil, List.sum_cons, List.sum_nil]
    linear_combination h1
  rw [this]
  unfold terms
  simp only [List.map_cons, List.map_nil, List.sum_cons, List.sum_nil]
  positivity
\end{lstlisting}  %\end{minted}
\end{tcolorbox}
\end{figure}

\begin{figure}[htbp]
\centering
\begin{tcolorbox}[colback=gray!5!white, colframe=blue!50!black, arc=3mm, title=
{\texttt{NSPI Achieves Solution While Baseline Methods \textbf{Fail}(2)}}
% {\texttt{Example 1 (NSPI Output)}}
]
\textbf{Polynomial:}\\
$7x_1^2 x_2^2 x_3^2 x_4^2 - 2x_1 x_5^2 x_2 x_3 x_4 x_6 x_7 x_8 - 4x_1 x_5 x_2 x_3 x_4 x_6 x_9^4 - 6x_1 x_2 x_3 x_4 x_6^5 x_8 - 10x_1 x_2 x_3 x_4 x_6 x_7^3 x_8^2 + 8x_5^4 x_6^2 x_7^2 x_8^2 - 4x_5^2 x_6^2 x_7^4 x_8^3 + 8x_5^2 x_6^2 x_9^8 - 4x_5 x_6^6 x_8 x_9^4 - 2x_5 x_6^2 x_7^3 x_8^2 x_9^4 + 5x_6^{10} x_8^2 + 6x_6^6 x_7^3 x_8^3 + 6x_6^2 x_7^6 x_8^4$
 % 7*x1^2*x2^2*x3^2*x4^2 - 2*x1*x5^2*x2*x3*x4*x6*x7*x8 - 4*x1*x5*x2*x3*x4*x6*x9^4 - 6*x1*x2*x3*x4*x6^5*x8 - 10*x1*x2*x3*x4*x6*x7^3*x8^2 + 8*x5^4*x6^2*x7^2*x8^2 - 4*x5^2*x6^2*x7^4*x8^3 + 8*x5^2*x6^2*x9^8 - 4*x5*x6^6*x8*x9^4 - 2*x5*x6^2*x7^3*x8^2*x9^4 + 5*x6^10*x8^2 + 6*x6^6*x7^3*x8^3 + 6*x6^2*x7^6*x8^4
 
\medskip

\textbf{Lean Proof:} 
% \begin{minted}[fontsize=\small, breaklines]{lean}
\begin{lstlisting}[language=lean4, frame=none]
import Mathlib
theorem f1 (x1 x2 x3 x4 x5 x6 x7 x8 x9 I : Real) (h1 :I = 7*x1^2*x2^2*x3^2*x4^2 - 2*x1*x2*x3*x4*x5^2*x6*x7*x8 - 4*x1*x2*x3*x4*x5*x6*x9^4 - 6*x1*x2*x3*x4*x6^5*x8 - 10*x1*x2*x3*x4*x6*x7^3*x8^2 + 8*x5^4*x6^2*x7^2*x8^2 - 4*x5^2*x6^2*x7^4*x8^3 + 8*x5^2*x6^2*x9^8 - 4*x5*x6^6*x8*x9^4 - 2*x5*x6^2*x7^3*x8^2*x9^4 + 5*x6^10*x8^2 + 6*x6^6*x7^3*x8^3 + 6*x6^2*x7^6*x8^4) : I >= 0 := by
  let terms : List (Real $\times$ Real) := [ (0, 1), (0, x3*x4^2), (0, x4^3*x6), (7, x1*x2*x3*x4 - x5^2*x6*x7*x8/7 - 2*x5*x6*x9^4/7 - 3*x6^5*x8/7 - 5*x6*x7^3*x8^2/7), (55/7, x5^2*x6*x7*x8 - 2*x5*x6*x9^4/55 - 3*x6^5*x8/55 - 19*x6*x7^3*x8^2/55), (0, x3^2*x4*x6*x7), (408/55, x5*x6*x9^4 - 79*x6^5*x8/204 - 139*x6*x7^3*x8^2/408), (257/408, -110*x6^5*x8/257 + x6*x7^3*x8^2), (633/257, x6^5*x8), (0, x4*x6^5), (0, x3^4*x5*x6), (0, x5^2*x6^2*x7^4*x8^3), (0, x6^6*x7^3*x8^3) ]
  have : I = (terms.map (fun (p, k) => p * k^2)).sum := by
    unfold terms
    simp only [List.map_cons, List.map_nil, List.sum_cons, List.sum_nil]
    linear_combination h1
  rw [this]
  unfold terms
  simp only [List.map_cons, List.map_nil, List.sum_cons, List.sum_nil]
  positivity
\end{lstlisting} %\end{minted}
\end{tcolorbox}
\end{figure}

\begin{figure}[htbp]
\centering
\begin{tcolorbox}[colback=gray!5!white, colframe=blue!50!black, arc=3mm, title=
{\texttt{NSPI Achieves Solution While Baseline Methods \textbf{Fail}(3)}}
% {\texttt{Example 1 (NSPI Output)}}
]
\textbf{Polynomial:}\\
$11x_1^4x_2^4x_3^2 - 4x_1^3x_2^3x_3 - 12x_1^3x_2^2x_3^3x_5x_6 - 6x_1^3x_2^2x_3^2x_5^5x_7 + 2x_1^2x_2^3x_3^3x_5 - 2x_1^2x_2^3x_3x_4 - 8x_1^2x_2^3x_3x_5x_7 - 2x_1^2x_2^2x_3x_7^4 + 10x_1^2x_2^2 + 4x_1^2x_2x_3^2x_5x_6 + 4x_1^2x_2x_3x_5^5x_7 + 12x_1^2x_3^4x_5^2x_6^2 + 2x_1^2x_3^3x_5^6x_6x_7 + 12x_1^2x_3^2x_5^{10}x_7^2 - 12x_1x_2^2x_3^2x_5 - 4x_1x_2^2x_4 - 2x_1x_2^2x_5x_7 - 6x_1x_2x_3^4x_5^2x_6 + 2x_1x_2x_3^3x_5^6x_7 + 8x_1x_2x_3^2x_4x_5x_6 + 2x_1x_2x_3^2x_5^2x_6x_7 + 4x_1x_2x_3x_4x_5^5x_7 - 6x_1x_2x_3x_5^6x_7^2 - 14x_1x_2x_7^4 + 10x_1x_3^2x_5x_6x_7^4 + 2x_1x_3x_5^5x_7^5 + 13x_2^2x_3^4x_5^2 + 4x_2^2x_3^2x_4x_5 - 12x_2^2x_3^2x_5^2x_7 + 10x_2^2x_4^2 + 14x_2^2x_5^2x_7^2 + 8x_2x_3^2x_5x_7^4 + 2x_2x_4x_7^4 + 4x_2x_5x_7^5 + 12x_7^8$ 
% 11*x1^4*x2^4*x3^2 - 4*x1^3*x2^3*x3 - 12*x1^3*x2^2*x3^3*x5*x6 - 6*x1^3*x2^2*x3^2*x5^5*x7 + 2*x1^2*x2^3*x3^3*x5 - 2*x1^2*x2^3*x3*x4 - 8*x1^2*x2^3*x3*x5*x7 - 2*x1^2*x2^2*x3*x7^4 + 10*x1^2*x2^2 + 4*x1^2*x2*x3^2*x5*x6 + 4*x1^2*x2*x3*x5^5*x7 + 12*x1^2*x3^4*x5^2*x6^2 + 2*x1^2*x3^3*x5^6*x6*x7 + 12*x1^2*x3^2*x5^10*x7^2 - 12*x1*x2^2*x3^2*x5 - 4*x1*x2^2*x4 - 2*x1*x2^2*x5*x7 - 6*x1*x2*x3^4*x5^2*x6 + 2*x1*x2*x3^3*x5^6*x7 + 8*x1*x2*x3^2*x4*x5*x6 + 2*x1*x2*x3^2*x5^2*x6*x7 + 4*x1*x2*x3*x4*x5^5*x7 - 6*x1*x2*x3*x5^6*x7^2 - 14*x1*x2*x7^4 + 10*x1*x3^2*x5*x6*x7^4 + 2*x1*x3*x5^5*x7^5 + 13*x2^2*x3^4*x5^2 + 4*x2^2*x3^2*x4*x5 - 12*x2^2*x3^2*x5^2*x7 + 10*x2^2*x4^2 + 14*x2^2*x5^2*x7^2 + 8*x2*x3^2*x5*x7^4 + 2*x2*x4*x7^4 + 4*x2*x5*x7^5 + 12*x7^8

\medskip

\textbf{Lean Proof:} 
% \begin{minted}[fontsize=\small, breaklines]{lean}
\begin{lstlisting}[language=lean4, frame=none]
import Mathlib
theorem f1 (x1 x2 x3 x4 x5 x6 x7 I : Real) (h1 :I = 11*x1^4*x2^4*x3^2 - 4*x1^3*x2^3*x3 - 12*x1^3*x2^2*x3^3*x5*x6 - 6*x1^3*x2^2*x3^2*x5^5*x7 + 2*x1^2*x2^3*x3^3*x5 - 2*x1^2*x2^3*x3*x4 - 8*x1^2*x2^3*x3*x5*x7 - 2*x1^2*x2^2*x3*x7^4 + 10*x1^2*x2^2 + 4*x1^2*x2*x3^2*x5*x6 + 4*x1^2*x2*x3*x5^5*x7 + 12*x1^2*x3^4*x5^2*x6^2 + 2*x1^2*x3^3*x5^6*x6*x7 + 12*x1^2*x3^2*x5^10*x7^2 - 12*x1*x2^2*x3^2*x5 - 4*x1*x2^2*x4 - 2*x1*x2^2*x5*x7 - 6*x1*x2*x3^4*x5^2*x6 + 2*x1*x2*x3^3*x5^6*x7 + 8*x1*x2*x3^2*x4*x5*x6 + 2*x1*x2*x3^2*x5^2*x6*x7 + 4*x1*x2*x3*x4*x5^5*x7 - 6*x1*x2*x3*x5^6*x7^2 - 14*x1*x2*x7^4 + 10*x1*x3^2*x5*x6*x7^4 + 2*x1*x3*x5^5*x7^5 + 13*x2^2*x3^4*x5^2 + 4*x2^2*x3^2*x4*x5 - 12*x2^2*x3^2*x5^2*x7 + 10*x2^2*x4^2 + 14*x2^2*x5^2*x7^2 + 8*x2*x3^2*x5*x7^4 + 2*x2*x4*x7^4 + 4*x2*x5*x7^5 + 12*x7^8) : I >= 0 := by
  let terms : List (Real $\times$ Real) := [ (0, 1), (10, -x1^2*x2^2*x3/10 - x1*x2/5 + 2*x1*x3^2*x5*x6/5 + x1*x3*x5^5*x7/5 + x2*x3^2*x5/5 + x2*x4 + x7^4/10), (48/5, -11*x1^2*x2^2*x3/48 + x1*x2 + 7*x1*x3^2*x5*x6/24 + x1*x3*x5^5*x7/4 - 7*x2*x3^2*x5/12 - 5*x2*x5*x7/48 - 17*x7^4/24), (667/48, -7*x1^2*x2^2*x3/23 + 62*x1*x3^2*x5*x6/667 - 132*x1*x3*x5^5*x7/667 - 316*x2*x3^2*x5/667 + x2*x5*x7 + 62*x7^4/667), (0, x2*x3*x5), (9289/1334, -2755*x1^2*x2^2*x3/9289 + 8622*x1*x3^2*x5*x6/9289 + 3676*x1*x3*x5^5*x7/9289 + 594*x2*x3^2*x5/9289 + x7^4), (57461/9289, -18159*x1^2*x2^2*x3/57461 - 18281*x1*x3^2*x5*x6/57461 + 4839*x1*x3*x5^5*x7/57461 + x2*x3^2*x5), (0, x3^4), (163073/57461, -187911*x1^2*x2^2*x3/163073 + x1*x3^2*x5*x6 - 151490*x1*x3*x5^5*x7/163073), (670239/163073, x1^2*x2^2*x3 - 838684*x1*x3*x5^5*x7/670239), (0, x1*x3*x5^5), (292711/670239, x1*x3*x5^5*x7) ]
  have : I = (terms.map (fun (p, k) => p * k^2)).sum := by
    unfold terms
    simp only [List.map_cons, List.map_nil, List.sum_cons, List.sum_nil]
    linear_combination h1
  rw [this]
  unfold terms
  simp only [List.map_cons, List.map_nil, List.sum_cons, List.sum_nil]
  positivity
\end{lstlisting} %\end{minted}
\end{tcolorbox}
\end{figure}

% You can have as much text here as you want. The main body must be at most $8$ pages long.
% For the final version, one more page can be added.
% If you want, you can use an appendix like this one.  

% The $\mathtt{\backslash onecolumn}$ command above can be kept in place if you prefer a one-column appendix, or can be removed if you prefer a two-column appendix.  Apart from this possible change, the style (font size, spacing, margins, page numbering, etc.) should be kept the same as the main body.
%%%%%%%%%%%%%%%%%%%%%%%%%%%%%%%%%%%%%%%%%%%%%%%%%%%%%%%%%%%%%%%%%%%%%%%%%%%%%%%
%%%%%%%%%%%%%%%%%%%%%%%%%%%%%%%%%%%%%%%%%%%%%%%%%%%%%%%%%%%%%%%%%%%%%%%%%%%%%%%